\documentclass{article}
\newcommand{\MemInit}{\mathrm{MemInit}}
\PassOptionsToPackage{numbers}{natbib}
\usepackage[preprint]{neurips_2026}

\usepackage[utf8]{inputenc} 
\usepackage[T1]{fontenc}    
\usepackage{hyperref}       
\usepackage{url}            
\usepackage{booktabs}       
\usepackage{amsfonts}       
\usepackage{nicefrac}       
\usepackage{microtype}      
\usepackage{xcolor}         
\usepackage[utf8]{inputenc} 
\usepackage[T1]{fontenc}    
\usepackage{hyperref}       
\usepackage{url}            
\usepackage{booktabs}       
\usepackage{multirow}       
\usepackage{enumitem}       
\usepackage{amsfonts}       
\usepackage{mathtools}      
\usepackage{nicefrac}       
\usepackage{microtype}      
\usepackage{amsthm}         
\usepackage{xcolor}         
\usepackage{amssymb}        
\usepackage{algorithm}
\usepackage{algorithmicx}
\usepackage{algpseudocode}
\algrenewcommand\algorithmicrequire{\textbf{Input:}}
\algrenewcommand\algorithmicensure{\textbf{Output:}}
\usepackage{amsmath}

\usepackage{amsthm}

\DeclareMathOperator{\degen}{degen}
\theoremstyle{plain}
\newtheorem{theorem}{Theorem}
\newtheorem{proposition}{Proposition}
\newtheorem*{proposition*}{Proposition}
\newtheorem{lemma}{Lemma}
\newtheorem{corollary}{Corollary}
\theoremstyle{definition}

\theoremstyle{remark}
\newtheorem{remark}{Remark}
\newcommand{\cA}{\mathcal{A}}
\newcommand{\cX}{\mathcal{X}}
\newcommand{\cD}{\mathcal{D}}

\newcommand{\cN}{\mathcal{N}}
\newcommand{\cQ}{\mathcal{Q}}

\newcommand{\eps}{\epsilon}

\newcommand{\EE}{\mathbb{E}}

\newcommand{\cC}{\mathcal{C}}

\newcommand{\Reg}{\mathrm{Reg}}
\newcommand{\UCB}{\mathrm{UCB}}
\newcommand{\LCB}{\mathrm{LCB}}
\newcommand{\Score}{\mathrm{Score}}
\newcommand{\Compress}{\mathrm{Compress}}
\newcommand{\MemRead}{\mathrm{read}}

\title{Remember the Decision, Not the Description: A Rate-Distortion Framework for Agent Memory}

\author{%
\textbf{Mingxi Zou}\textsuperscript{1} \quad
\textbf{Zhihan Guo}\textsuperscript{2} \quad
\textbf{Langzhang Liang}\textsuperscript{1} \quad
\textbf{Zhuo Wang}\textsuperscript{1} \quad
\textbf{Qifan Wang}\textsuperscript{3} \\
\textbf{Qingsong Wen}\textsuperscript{4} \quad
\textbf{Irwin King}\textsuperscript{2} \quad
\textbf{Lizhen Qu}\textsuperscript{5,*} \quad
\textbf{Zenglin Xu}\textsuperscript{6,1,*} \\[0.5em]
\textsuperscript{1}Fudan University \\
\textsuperscript{2}The Chinese University of Hong Kong \\
\textsuperscript{3}Meta AI \\
\textsuperscript{4}AI Research Institute, Squirrel Ai Learning \\
\textsuperscript{5}Monash University \\
\textsuperscript{6}Shanghai Academy of AI for Science \\
\textsuperscript{*}Corresponding authors
}

\begin{document}

\maketitle

\begin{abstract}
Long-horizon language agents must operate under limited runtime memory, yet existing memory mechanisms often organize experience around descriptive criteria—relevance, salience, or summary quality. 
For an agent, however, memory is valuable not because it faithfully describes
the past, but because it preserves the distinctions between histories that
must remain separated under a fixed budget to support good decisions.
We cast this as a
decision-centric rate-distortion problem, measuring memory quality by the loss in
achievable decision quality induced by compression. 
This yields an exact forgetting
boundary for what can be safely forgotten, and a memory-distortion frontier characterizing the optimal tradeoff between memory budget and decision quality. 
Motivated by this decision-centric view of memory, we propose \textbf{DeMem}, an online memory learner that refines
its partition only when data certify that a shared state would induce decision
conflict, and prove near-minimax regret guarantees. 
On both controlled synthetic diagnostics and long-horizon conversational
benchmarks, DeMem yields consistent gains under the same runtime budget,
supporting the principle that memory should preserve the distinctions that
matter for decisions, not descriptions.
\end{abstract}
\vspace{-0.5em}

\section{Introduction}
\vspace{-0.5em}
Large language model (LLM) agents increasingly operate over long interaction
horizons. They must answer using earlier dialogue, plan from past observations,
and adapt behavior using feedback accumulated across trials and sessions
\citep{yao2023reactsynergizingreasoningacting,shinn2023reflexionlanguageagentsverbal}.
Yet long-horizon interaction remains bottlenecked by memory. Increasing the
context window alone does not guarantee reliable use of distant evidence
\citep{liu2023lostmiddlelanguagemodels}, and recent benchmarks continue to
expose failures in long-term conversational memory, incremental interaction,
test-time learning, and constraint-consistent recall
\citep{maharana-etal-2024-evaluating,wu2025longmemevalbenchmarkingchatassistants,
hu2026evaluating,li2026locomoplusbeyondfactualcognitivememory}.

These limitations have motivated a growing class of external and persistent
memory mechanisms for language models and agents. Retrieval-augmented methods
make past evidence available at generation time
\citep{lewis2021retrievalaugmentedgenerationknowledgeintensivenlp}, while
OS-style and explicit long-term memory systems maintain state beyond the
current context window
\citep{packer2024memgptllmsoperatingsystems,NEURIPS2023_ebd82705}.
More recent systems organize memory through graph-structured, self-organizing,
or production-oriented memory layers
\citep{gutiérrez2025hipporagneurobiologicallyinspiredlongterm,
chhikara2025mem0buildingproductionreadyai,
nan2025nemoriselforganizingagentmemory,latimer2025hindsight2020buildingagent}.

Existing agent memory systems typically decide what to store, retrieve, or
compress using \emph{descriptive} properties of experience. Retrieval-based
methods prioritize semantic relevance to the current query
\citep{lewis2021retrievalaugmentedgenerationknowledgeintensivenlp}, while
long-term and OS-style memory systems often rely on salience, recency, or
summary fidelity to maintain usable persistent state
\citep{packer2024memgptllmsoperatingsystems,NEURIPS2023_ebd82705}.
Graph-structured and self-organizing memory systems further organize experience
through entity links, temporal structure, or other relational cues
\citep{gutiérrez2025hipporagneurobiologicallyinspiredlongterm,
chhikara2025mem0buildingproductionreadyai,
nan2025nemoriselforganizingagentmemory}.
Such criteria are useful for organizing past experience, but they do not directly capture the downstream purpose of memory: supporting decisions under a strict runtime budget. 
This concern is not hypothetical. On the LoCoMo benchmark, we find that descriptive similarity is a
\emph{statistically significant but weak predictor of evidence compatibility}:
the Spearman correlation is only $\rho = 0.103$ (AUC $= 0.548$). Under a matched answer-time budget,
description-based retrieval recovers only $66\%$ of gold evidence,
compared to $83\%$ for DeMem(ours).
Among the queries where description-based memory fails, $85\%$ of
errors trace directly to evidence miss or dilution caused by this
mismatch (Appendix~\ref{app:mismatch-prevalence},
Tables~\ref{tab:mismatch-predictive}--\ref{tab:error-attribution}).
Thus, what must be remembered should be defined by its effect on
decision quality, not by its descriptive fidelity.

In this paper, we study budgeted agent memory through the lens of
\emph{decision-centric compression}. This view is consistent with recent calls
for decision-centric design in LLM systems \citep{Sun2026DecisionCentricDF},
but specializes that perspective to the memory bottleneck: at decision time,
the agent cannot rely on the full interaction history, and must instead access
the past through one of \(K\) internal memory states---a fixed number of memory slots available to the decision policy.  The budget is therefore placed on runtime decision states, rather than on offline storage or preprocessing.
Motivated by this view, we propose \textsc{DeMem}, a budgeted memory method with
\(K\) runtime slots. DeMem routes each history--query pair to a slot and refines
the slots only when shared memory would create a decision conflict. In this way,
it remembers distinctions that matter for action rather than description.

Our contributions answer three concrete questions raised by this view:
\begin{itemize}[
    leftmargin=*,
    topsep=0.25em,
    itemsep=0.25em,
    parsep=0pt,
    partopsep=0pt,
    after=\vspace{-0.3em}
]
    \item \textbf{When can histories be safely merged? --- A decision-centric theory of forgetting.}
    We formalize when histories can share memory without hurting downstream decisions, yielding an exact forgetting boundary separating
    safe merges from decision-conflicting ones, and a memory--distortion frontier
    describing the best achievable decision quality under a fixed memory budget.

    \item \textbf{How can decision-aware memory be learned online? --- A certified online memory learner for decision-aware refinement.}
    We introduce DeMem, an online memory learner that maintains \(K\) runtime memory states, refines them only upon certified decision conflict, and comes with near-minimax regret guarantees.

    \item \textbf{Does this principle improve practical agent memory? --- A practical \(K\)-slot agent memory system.}
    We instantiate DeMem with slot routing and certified refinement (the illustrative DeMem workflow in Figure~\ref{fig:worked-example}), and show consistent gains on synthetic diagnostics and long-horizon conversational memory benchmarks.
\end{itemize}
More broadly, our results suggest a simple principle for agent memory design: \emph{remember the decision, not the description}.

\vspace{-0.5em}
\section{Related Work}
\label{sec:related}
\vspace{-0.5em}

\paragraph{State abstraction and decision-preserving compression.}
Our work is theoretically closest to state abstraction and decision-preserving
compression in sequential decision making, including model minimization,
approximate aggregation, and policy abstraction\citep{Li2006TowardsAU,GIVAN2003163,ravindran2004approximate,abel2017nearoptimalbehaviorapproximate,abel2018lifelong,zhang2022unifiedpolicyabstractiontheory}. These works study when multiple states can be treated as equivalent for control, and characterize abstraction quality through value, model, or policy preservation. A closely related line develops behavioral metrics, including bisimulation-style distances for measuring when states are approximately interchangeable for decision making \citep{Bisimulation11Ferns}, while more recent work extends homomorphism ideas to continuous settings \citep{rezaeishoshtari2022continuousmdphomomorphismshomomorphic}. Our setting differs mainly in the object of compression. 
Prior work abstracts environment states or dynamics for planning, whereas we compress interaction histories into a limited set of runtime memory states and measure the resulting loss in decision quality.
\vspace{-0.5em}
\paragraph{Task-driven compression and bounded interaction.}
Our formulation is also related in spirit to task-driven information bottleneck approaches to estimation and control, which advocate preserving only the information needed for downstream decisions rather than for faithful reconstruction \citep{Task-Driven19Pacelli}, as well as recent decision-centric perspectives on designing LLM systems around downstream utility rather than intermediate descriptive objectives \citep{Sun2026DecisionCentricDF}. A second nearby line studies control under bounded internal state, including finite-state controllers for partially observable settings \citep{simão2023safepolicyimprovementpomdps} and, more recently, canonical abstractions induced by agent-bounded indistinguishability in bounded interaction \citep{nixon2026myhillnerodetheoremboundedinteraction}. We share the premise that capability constraints determine which distinctions matter, but provide concrete formal tools for the memory setting: an exact forgetting boundary, a memory–distortion frontier, and finite-sample regret guarantees for online refinement.
\vspace{-0.5em}
\paragraph{Memory systems and learned management for LLM agents.}

A large body of work augments language agents with external or persistent memory, including retrieval-based memory \citep{lewis2021retrievalaugmentedgenerationknowledgeintensivenlp}, long-term memory modules \citep{NEURIPS2023_ebd82705}, operating-system-style or harness-managed memory \citep{packer2024memgptllmsoperatingsystems,kang-etal-2025-memory,Rafique2026ClawVMHV}, graph-structured or agentically organized memory \citep{gutiérrez2025hipporagneurobiologicallyinspiredlongterm,xu2025amemagenticmemoryllm}, production-oriented memory systems \citep{chhikara2025mem0buildingproductionreadyai}, and broader harness or externalization layers that manage context, tool use, persistent state, and long-horizon execution \citep{pan2026naturallanguageagentharnesses,lee2026metaharnessendtoendoptimizationmodel,zhou2026externalizationllmagentsunified}. 
Another line studies learned or agentic memory control, including dynamic note organization, memory actions, RL-based long-/short-term memory management, atomic memory operations, self-evolving memory skills or architectures \citep{xu2025amemagenticmemoryllm,zhang2026memoryactionautonomouscontext,yu2026agenticmemorylearningunified,huo2026atommemlearnabledynamic,zhang2026memskilllearningevolvingmemory,zhang2025memevolvemetaevolutionagentmemory}, as well as memory--reasoning coupling, episodic memory, temporal memory selection, and generative latent memory for long-horizon or self-evolving agents \citep{zhou2026mem,shu2026remem,du2026memoryt,zhang2026memgen}. 
Recent surveys organize agent memory beyond the long-/short-term dichotomy, using taxonomies based on representation, function, and memory dynamics \citep{zhang2024surveymemorymechanismlarge,hu2026memoryageaiagents,du2026memoryautonomousllmagentsmechanisms}, while benchmarks show that robust long-term, incremental, and action-coupled memory remains difficult \citep{maharana-etal-2024-evaluating,wu2025longmemevalbenchmarkingchatassistants,hu2026evaluating,li2026locomoplusbeyondfactualcognitivememory,tan2025membenchcomprehensiveevaluationmemory,he2026memoryarenabenchmarkingagentmemory,shen2026mem2actbenchbenchmarkevaluatinglongterm}. 
These works advance memory mechanisms, training procedures, and evaluations; our focus is complementary, characterizing the decision loss induced by bounded memory states and when histories can be safely merged.

\vspace{-0.5em}
\section{Problem Setup and Fundamental Limits}
\label{sec:setup}
\vspace{-0.5em}
This section develops the decision-centric memory formulation from the basic
runtime bottleneck to its fundamental limits. We first formalize the \(K\)-state memory constraint, where the agent can
access history only through a bounded internal state. Given this bottleneck, we then define the decision distortion induced by
compression, which measures how much decision quality is lost when histories are
merged. This loss notion allows us to characterize when merging is safe: Sections~\ref{sec:forgetting}--\ref{sec:limits} derive an exact forgetting boundary and
covering/packing measures for how many decision-distinguishable situations must
remain separated. Finally, Section~\ref{sec:hardness} shows that computing the
optimal partition is computationally hard, motivating the online methods in
Section~\ref{sec:algo}.
\vspace{-0.3em}
\subsection{Memory-constrained contextual decision model}
\label{sec:model}
\vspace{-0.3em}
We study a contextual decision problem under an explicit \emph{runtime memory budget}, instantiated analytically as a contextual bandit.
Let $\mathcal{H}$ be a finite set of dialogue histories, $\mathcal{Q}$ a finite set of queries, and $\cA$ a finite set of downstream decisions with $|\cA|=A$.
A context is a pair $x=(h,q)\in\cX\subseteq\mathcal{H}\times\mathcal{Q}$.
In the conversational benchmarks of Section~\ref{sec:exp}, $h$ is the dialogue history, $q$ is the evaluation query, and the action is the agent's generated answer.
At each round $t=1,\dots,T$:
(i)~a context $X_t=(H_t,Q_t)\sim\cD$ is drawn i.i.d.\ from an unknown distribution $\cD$ over $\cX$;
(ii)~the learner chooses a decision $A_t\in\cA$;
(iii)~the learner observes a reward $R_t\in[0,1]$ with conditional mean
\begin{equation}
\label{eq:mean-reward}
\EE[R_t \mid X_t=x,\, A_t=a] \;=\; \mu(x,a),
\end{equation}
where $\mu:\cX\times\cA\to[0,1]$ is fixed but unknown.
The full-information oracle chooses
$a^\star(x)\in\arg\max_{a\in\cA}\mu(x,a)$
with value $\mu^\star(x)\coloneqq \max_{a}\mu(x,a)$, and we measure regret
\begin{equation}
\label{eq:regret}
\Reg(T)\;\coloneqq\; \sum_{t=1}^{T}\bigl(\mu^\star(X_t)-\mu(X_t,A_t)\bigr).
\end{equation}

\vspace{-1em}
This i.i.d.\ sampling assumption is over answer-time decision instances,
not over individual dialogue turns; we defer the modeling clarification to
Appendix~\ref{app:model-details}.

\vspace{-0.5em}
\paragraph{Strict runtime memory constraint.}
At answer time, the current query \(Q_t\) is observed directly and is not
counted against the memory budget. The bottleneck instead constrains access
to the history: before acting, the learner maps the history into one of
\(K\) runtime memory states through a query-aware encoder,
\begin{equation}
\label{eq:memory-policy}
M_t = g_t(H_t,Q_t)\in[K],
\qquad
A_t = \pi_t(M_t,Q_t),
\end{equation}
where \(g_t:\mathcal{H}\times\mathcal{Q}\to[K]\) is a deterministic encoder
and \(\pi_t:[K]\times\mathcal{Q}\to\cA\) is a deterministic decision rule.
Thus, different queries may select different decision-relevant aspects of
the same history, but the policy can act only from the current query and the
selected memory state, not from the full history directly.

The resulting limits are query-conditional: for fixed \(q\), we ask which
histories can share a memory state without degrading decision quality. We
define the static memory--distortion frontier over deterministic answer-time
selectors, the class induced by deterministic decoding; Appendix~\ref{app:deterministic}
separates this modeling choice from the exploration randomness used by the
online learner. Proposition~\ref{prop:bridge} makes the theory--implementation
correspondence precise.

\begin{figure*}[t]
\centering
\includegraphics[width=\textwidth]{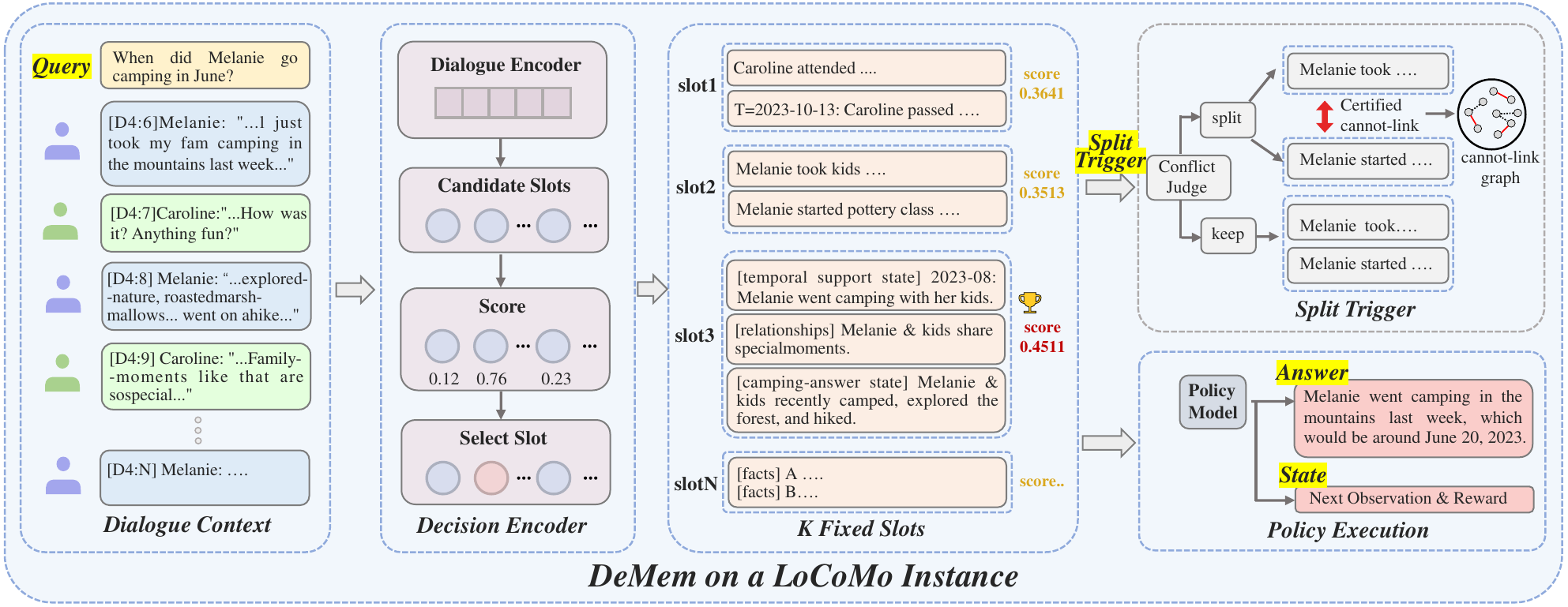}
\vspace{-1em}
\caption{DeMem routes histories into bounded slots and splits only on certified conflict.}
\label{fig:worked-example}
\end{figure*}

\vspace{-0.5em}
\subsection{Decision distortion under memory}
\label{sec:distortion}
\vspace{-0.5em}
Having fixed the \(K\)-state memory interface, we now quantify what is lost
when histories that may require different decisions are merged.
Fix \(q\in\mathcal Q\), and let
\(\mathcal X_q \coloneqq \{h\in\mathcal H : (h,q)\in\mathcal X\}\)
be the set of histories that may appear with query \(q\). For each
history--action pair \((h,a)\), write
\(\mu_q(h,a) \coloneqq \mu((h,q),a)\) and
\(\mu_q^\star(h) \coloneqq \max_{a\in\cA}\mu_q(h,a)\).
Thus, \(\mu_q^\star(h)\) is the value of the best action when the full
history \(h\) is available for answering query \(q\).

For any action \(a\), define its suboptimality gap at history \(h\) as
\begin{equation}
\label{eq:gap}
\Delta_q(h,a)
\coloneqq
\mu_q^\star(h)-\mu_q(h,a)
\in[0,1].
\end{equation}

A \(K\)-state memory encoder assigns each history to one of \(K\)
compressed memory states. For fixed \(q\), let
\(g_q:\mathcal X_q\to[K]\) denote the induced encoder
\(g_q(h)=g(h,q)\), and let \(\pi_q:[K]\to\cA\) denote the induced
decision rule \(\pi_q(m)=\pi(m,q)\). If history \(h\) is routed to state
\(m\), the resulting decision distortion is
\begin{equation}
\label{eq:distortion}
d_q(h,m;\pi_q)
\coloneqq
\mu_q^\star(h)-\mu_q\bigl(h,\pi_q(m)\bigr)
=
\Delta_q\bigl(h,\pi_q(m)\bigr).
\end{equation}
This measures how much decision quality is lost because the policy sees
only the compressed memory state \(m\), rather than the full history \(h\).

The best worst-case distortion achievable with \(K\) memory states for
query \(q\) is
\begin{equation}
\label{eq:epsK}
\eps^\star_\infty(K;q)
\coloneqq
\inf_{g_q:\mathcal X_q\to[K]}
\inf_{\pi_q:[K]\to\cA}
\sup_{h\in\mathcal X_q}
d_q\bigl(h,g_q(h);\pi_q\bigr).
\end{equation}

We call \(\eps^\star_\infty(K;q)\) the decision rate--distortion frontier
for query \(q\): the smallest possible worst-case decision loss incurred
by compressing histories into only \(K\) memory states. When the fixed
query is clear from context, we write \(\eps^\star_\infty(K)\).
\vspace{-0.5em}
\subsection{Exact forgetting boundary}
\label{sec:forgetting}
\vspace{-0.5em}
The distortion frontier tells us \emph{how much} decision quality is lost under a memory budget; we now characterize \emph{when} histories can be merged with
decision loss at most \(\eps\).

\begin{theorem}[Exact forgetting boundary]
\label{thm:forgetting}
Fix a query \(q\), $\eps\ge 0$ and a nonempty $C\subseteq\cX_q$.
The following are equivalent:
\begin{enumerate}[nosep]
    \item There exists $a\in\cA$ with $\Delta_q(h,a)\le\eps$ for all $h\in C$.
    \item There exists a one-state encoder on \(C\) and a deterministic decision
rule whose worst-case distortion over \(C\) is at most \(\eps\).
\end{enumerate}
\end{theorem}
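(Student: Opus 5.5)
The plan is to prove the two implications directly from the definitions of the gap \eqref{eq:gap} and the distortion \eqref{eq:distortion}. The key observation is that a one-state encoder on \(C\) is forced: it is the constant map \(g_q\equiv 1\). A deterministic decision rule \(\pi_q:[1]\to\cA\) is then nothing more than the choice of a single action \(a=\pi_q(1)\). Consequently, for every \(h\in C\),
\[
d_q\bigl(h,g_q(h);\pi_q\bigr)=\Delta_q\bigl(h,\pi_q(1)\bigr)=\Delta_q(h,a).
\]

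For (1)\(\Rightarrow\)(2), I take the action \(a\) supplied by (1). I set \(g_q(h)=1\) for all \(h\in C\) and \(\pi_q(1)=a\). By the identity above, the worst-case distortion over \(C\) is \(\sup_{h\in C}\Delta_q(h,a)\), and this is at most \(\eps\) because the bound holds pointwise.

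For (2)\(\Rightarrow\)(1), I take the given one-state encoder and rule, and set \(a\coloneqq\pi_q(1)\). Since every \(h\in C\) is routed to the unique state, the distortion at \(h\) equals \(\Delta_q(h,a)\). The hypothesis \(\sup_{h\in C}\Delta_q(h,a)\le\eps\) then gives \(\Delta_q(h,a)\le\eps\) for each \(h\in C\).

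No step is a genuine obstacle. The only care needed is interpretive: "one-state encoder" should mean \(K=1\), so that \(g_q\) is constant. Even if one allowed an encoder into \([K]\) with image of size one, the same argument applies, with the single used state \(m\) and \(a=\pi_q(m)\).

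It is also worth noting that the supremum is attained, since \(C\subseteq\cX_q\) is finite. The nonemptiness of \(C\) only rules out the vacuous case and plays no role in either direction.
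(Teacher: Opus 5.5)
Your proof is correct and follows the paper's argument. In both directions, a one-state encoder with a deterministic rule amounts to committing to a single action $a=\pi_q(1)$, so the worst-case distortion over $C$ is exactly $\max_{h\in C}\Delta_q(h,a)$. The paper additionally invokes its deterministic-reduction proposition at this step, but that is redundant here, since the statement already stipulates a deterministic rule.
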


\noindent A short proof appears in Appendix~\ref{app:proof-forgetting}.
Within a fixed query fiber, the criterion is purely decision-theoretic:
histories may share a memory state precisely when they admit a common
\(\eps\)-optimal action for that query---not when they are descriptively
similar or have close reward vectors. Across different queries, the same
abstract memory state may support different actions through the original
selector \(\pi(m,q)\).

Theorem~\ref{thm:forgetting} motivates a natural distance. Define the
\emph{pairwise decision distance}
\begin{equation}
\label{eq:ddec}
d_{\mathrm{dec}}^q(h,h')
\;\coloneqq\;
\min_{a\in\cA}\,\max\!\bigl\{\Delta_q(h,a),\;\Delta_q(h',a)\bigr\},
\end{equation}
the smallest \(\eps\) for which \(h\) and \(h'\) can share a state at
distortion \(\eps\). More generally, the \emph{cluster decision radius} of
\(C\subseteq\cX_q\) is
\[
\rho_{\mathrm{dec}}^q(C)
\;\coloneqq\;
\min_{a\in\cA}\,\max_{h\in C}\,\Delta_q(h,a),
\]
the smallest worst-case loss from assigning a single action to all histories
in \(C\). Because \(d_{\mathrm{dec}}^q\) need not satisfy the triangle
inequality, the complexity measures below are defined through cluster radii
directly.

\vspace{-0.3em}
\subsection{Decision covering and packing bounds}
\label{sec:limits}
\vspace{-0.3em}
Given the forgetting boundary and the decision distance, we can now determine
the minimum number of memory states necessary to constrain the distortion
within a target \(\eps\) for the fixed query \(q\). 
We introduce two combinatorial quantities. 
The decision covering number
\begin{equation}
\label{eq:covering}
\cN^{\mathrm{dec}}_{\mathrm{cov}}(\eps;q)
\;\coloneqq\;
\min\Bigl\{
K : \exists\, \cX_q=\bigsqcup_{m=1}^{K} C_m
\ \text{s.t.}\ 
\max_m \rho_{\mathrm{dec}}^q(C_m)\le \eps
\Bigr\}
\end{equation}
is the fewest states that achieve worst-case distortion \(\eps\) for query
\(q\). The decision packing number
\begin{equation}
\label{eq:packing}
\cN^{\mathrm{dec}}_{\mathrm{pack}}(\eps;q)
\;\coloneqq\;
\max\Bigl\{
|P| : P\subseteq\cX_q,\ 
\forall h\ne h'\in P,\ 
d_{\mathrm{dec}}^q(h,h')>\eps
\Bigr\}
\end{equation}
is the largest set of mutually decision-incompatible histories at scale
\(\eps\) for query \(q\).

\begin{theorem}[Decision covering and packing bounds]
\label{thm:cover-pack}
For every \(\eps>0\):

(1) if \(K\ge \cN^{\mathrm{dec}}_{\mathrm{cov}}(\eps;q)\), then
\(\eps^\star_\infty(K;q)\le \eps\);
(2) if \(\cN^{\mathrm{dec}}_{\mathrm{pack}}(2\eps;q)>K\), then
\(\eps^\star_\infty(K;q)>\eps\).

Moreover, any stochastic memory variable \(M\) supporting worst-case
distortion at most \(\eps\) over the fixed query fiber must satisfy
\[
I(Z;M) \ge \log \cN^{\mathrm{dec}}_{\mathrm{pack}}(2\eps;q),
\]
where \(Z\) is uniform over a maximum decision packing in \(\cX_q\).
\end{theorem}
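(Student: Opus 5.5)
Parts (1) and (2) are pigeonhole arguments built on Theorem~\ref{thm:forgetting}. The mutual-information bound then comes from showing that $M$ must determine $Z$ exactly on a packing. Throughout I use that $\cX_q$ and $\cA$ are finite, so the infima in \eqref{eq:epsK} and the minima defining $\rho_{\mathrm{dec}}^q$ are attained.

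\emph{Part (1).} Let $N=\cN^{\mathrm{dec}}_{\mathrm{cov}}(\eps;q)\le K$ and take a partition $\cX_q=\bigsqcup_{m=1}^N C_m$ with $\rho_{\mathrm{dec}}^q(C_m)\le\eps$ for every $m$. For each cell, pick a minimizer $a_m$ in the definition of $\rho_{\mathrm{dec}}^q(C_m)$; then $\Delta_q(h,a_m)\le\eps$ for all $h\in C_m$, which is condition 1 of Theorem~\ref{thm:forgetting} on that cell. Define $g_q(h)=m$ for $h\in C_m$, set $\pi_q(m)=a_m$ for $m\le N$, and let $\pi_q$ be arbitrary on the unused states $N<m\le K$. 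Then $\sup_h d_q(h,g_q(h);\pi_q)\le\eps$, so $\eps^\star_\infty(K;q)\le\eps$.

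\emph{Part (2).} I argue by contraposition. Suppose $\eps^\star_\infty(K;q)\le\eps$. By finiteness there are $g_q,\pi_q$ attaining worst-case distortion at most $\eps$. Let $P$ be a packing at scale $2\eps$ with $|P|>K$. By pigeonhole, two distinct $h,h'\in P$ satisfy $g_q(h)=g_q(h')=m$. With $a=\pi_q(m)$ we get $\max\{\Delta_q(h,a),\Delta_q(h',a)\}\le\eps$, hence $d_{\mathrm{dec}}^q(h,h')\le\eps\le2\eps$. This contradicts $P$ being a $2\eps$-packing. The argument in fact gives the stronger statement with $\cN^{\mathrm{dec}}_{\mathrm{pack}}(\eps;q)$; the factor $2$ only makes the claim weaker, so no triangle inequality for $d_{\mathrm{dec}}^q$ is needed.

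\emph{Information bound.} I formalize ``supporting worst-case distortion at most $\eps$'' as follows. There is a channel $P_{M\mid H}$ and a decision rule $\pi_q$ (deterministic on $M$, or randomized, in which case I require the bound for every realized action) such that $\Delta_q(h,\pi_q(m))\le\eps$ whenever $P(M=m\mid H=h)>0$. Let $Z$ be uniform on a maximum packing $P$ at scale $2\eps$, and feed $H=Z$ into the channel. Fix any $m$ in the support of $M$. If two distinct $h,h'\in P$ both had $P(M=m\mid Z=\cdot)>0$, the same pigeonhole computation as in part (2) would give $d_{\mathrm{dec}}^q(h,h')\le\eps$, which is impossible. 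So each $m$ is reachable from at most one packing point, $Z$ is almost surely a function of $M$, and $H(Z\mid M)=0$. Therefore $I(Z;M)=H(Z)=\log|P|=\log\cN^{\mathrm{dec}}_{\mathrm{pack}}(2\eps;q)$, which yields the stated inequality (with equality).

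The main obstacle is this last step, specifically pinning down what ``supporting'' means for a stochastic $M$. If distortion were only controlled in expectation, the almost-sure disjointness of supports would fail. In that case I would fall back to a Fano-type argument: decode $\hat Z$ from $M$ as the unique packing point compatible with $\pi_q(M)$, and bound the decoding error by the probability of an $\eps$-violation. I will adopt the almost-sure (worst-case) reading, which matches the phrase ``worst-case distortion'' in the statement.
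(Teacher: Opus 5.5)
Your proposal is correct and follows essentially the same argument as the paper: a covering partition with one minimizing action per cell for (1), pigeonhole on a $2\eps$-packing of size $K+1$ for (2), and disjointness of the conditional supports of $M$ over the packing to get $H(Z\mid M)=0$ for the information bound. The differences are cosmetic. You argue (2) by contraposition, using finiteness to attain the infimum, whereas the paper argues directly that every $K$-state rule has worst-case distortion exceeding $2\eps>\eps$. You also spell out the padding of unused states and what ``supporting'' means for randomized rules, which the paper leaves implicit.
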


\noindent Proofs are in Appendices~\ref{app:proof-covering}--\ref{app:info-lower-bound}.
The relevant notion of task entropy is the number of \emph{decision-distinguishable} situations that must remain separable, not generic representational complexity.
A distribution-dependent refinement replacing the worst-case frontier with average cluster radii is given in Appendix~\ref{app:avg-distortion} (Proposition~\ref{prop:avg-lb}); in Section~\ref{sec:exp} we directly measure its empirical counterpart.

\vspace{-0.5em}
\subsection{Computational hardness of optimal memory}
\label{sec:hardness}
\vspace{-0.5em}
The covering and packing bounds determine the minimum budget at any target distortion, but can this optimal partition be found efficiently?
\begin{theorem}[Computational hardness]
\label{thm:np-hard}
Given $\mu\in[0,1]^{N\times A}$, an integer $K$, and a rational
$0\le \eps < 1$, deciding whether $\eps^\star_\infty(K)\le\eps$ is
\textup{NP}-complete.
\end{theorem}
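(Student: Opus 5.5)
Membership in NP and hardness are handled separately. For membership, the certificate is a pair $(g,\pi)$, with $g:[N]\to[K]$ and $\pi:[K]\to\cA$. We may assume $K\le N$, since with $K\ge N$ every history gets its own state and the answer is trivially yes. The certificate then has size $O(N\log K + K\log A)$, and verifying it means checking $\Delta(h,\pi(g(h)))\le\eps$ for each of the $N$ rows, which takes polynomial time in the encoding of $\mu$. By Theorem~\ref{thm:forgetting}, an equivalent certificate is a partition into $K$ blocks, each labeled with a common $\eps$-optimal action. Either way the problem is in NP.

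For hardness, Theorem~\ref{thm:forgetting} recasts the question as a covering problem. For each $h$, define $S_\eps(h)=\{a:\Delta(h,a)\le\eps\}$. Then $\eps^\star_\infty(K)\le\eps$ holds exactly when some set of at most $K$ actions meets every $S_\eps(h)$. In other words, it is a hitting-set problem: each block of the partition is labeled by one action, and that action must be $\eps$-good for every history in the block. Conversely, any $K$ actions hitting all the $S_\eps(h)$ give a partition, by routing each $h$ to some action in $S_\eps(h)$ that lies in the chosen set. So I reduce from Hitting Set, or equivalently from Vertex Cover, which is the special case where every set has size two.

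The Vertex Cover reduction works as follows. Given a graph $G=(V,E)$ and an integer $K$, take actions $\cA=V$ and one history $h_e$ for each edge $e=\{u,v\}$. Set $\mu(h_e,u)=\mu(h_e,v)=1$ and $\mu(h_e,w)=0$ for all other $w$, and take $\eps=0$, or any $\eps<1$. Then $S_\eps(h_e)=\{u,v\}$, because every other action has gap $1>\eps$. Hence $\eps^\star_\infty(K)\le\eps$ if and only if $G$ has a vertex cover of size at most $K$. The construction is polynomial, with $N=|E|$, $A=|V|$, and entries in $\{0,1\}\subseteq[0,1]$. The degenerate cases need a quick check: with $\eps=0$ ties are fine, and isolated vertices do not matter.

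The main obstacle is making the equivalence between the $K$-state frontier and hitting set airtight. Two points need care. First, the deterministic decision rule $\pi$ may assign the same action to several states, and some states may be left empty. Both only weaken the constraint, so "at most $K$ distinct actions" is the right notion. Second, Theorem~\ref{thm:forgetting} must be applied blockwise to turn a partition with $\max_m\rho_{\mathrm{dec}}(C_m)\le\eps$ into an action labeling. Once this equivalence is written out, both directions of the reduction are immediate.
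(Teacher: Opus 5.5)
Your proof is correct and is essentially the paper's: the paper reduces from \textsc{Set Cover}, with contexts as universe elements, actions as sets, and $\mu(u_i,a_j)=1$ iff $u_i\in S_j$ (else $0$). Your Vertex Cover instance is that same construction applied to the set system $\{\text{edges incident to } v\}_{v\in V}$, your hitting-set reading of the sets $S_\eps(h)$ is the dual view of the same $0/1$ membership matrix, and your NP-membership step (assume $K\le N$, since otherwise the answer is trivially yes) is slightly more careful than the paper's because it keeps the certificate polynomial even when $K$ is given in binary.
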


\noindent The proof is in Appendix~\ref{app:proof-hardness}; this NP-hardness motivates DeMem: instead of solving the global optimal
partition exactly, it learns a conservative decision-compatible partition
from data through certified refinements.

\vspace{-0.5em}
\section{Online Learning with Certified Memory Splits}
\label{sec:algo}
\vspace{-0.5em}
We now return to the online setting, where \(x=(h,q)\) denotes a full
answer-time context, while compatibility is assessed within each fixed-query
fiber. The frontier $\eps^\star_\infty(K)$ describes the best achievable compression
quality, but it depends on the unknown reward function $\mu$.
We therefore need an online learner that simultaneously
(i)~estimates which contexts are decision-compatible,
(ii)~respects the hard $K$-state budget,
and (iii)~refines memory only when the data justify doing so.
Our method, DeMem (\textbf{De}cision-aware \textbf{Mem}ory), is built
around one principle:
\begin{quote}
\centering
\emph{split a memory state only when the data certify that some pair of
contexts inside it cannot safely share a near-optimal action.}
\end{quote}
A full specification is given in Algorithm~\ref{alg:DeMem}
(Appendix~\ref{app:DeMem-algorithm}), while
Algorithm~\ref{alg:demem-overview} summarizes the epoch-level
execution flow and Figure~\ref{fig:worked-example} provides an
intuitive worked example.
We summarize the two key mechanisms below and then state the regret guarantees.

\vspace{-0.3em}
\subsection{Certified incompatibility and budgeted encoding}
\label{sec:certificates}
\vspace{-0.3em}
\paragraph{Incompatibility certificates.}
For each observed context--action pair \((x,a)\), let \(\hat\mu_t(x,a)\) be the
empirical mean, \(n_t(x,a)\) the count. From these we construct lower certificates
\(\underline d_t(x,x')\) for the pairwise decision distance satisfying,
with probability at least \(1-\delta\),
\begin{equation}
\label{eq:ddec-lower}
\underline d_t(x,x') \le d_{\mathrm{dec}}(x,x')
\qquad
\text{for all sampled pairs and all } t .
\end{equation}
Whenever \(\underline d_t(x,x')>\eps\), the data certify that \(x\) and
\(x'\) cannot share any memory state whose cluster radius is at most \(\eps\).
This yields a one-sided cannot-link relation.

Importantly, the converse is not assumed: the absence of pairwise
cannot-link edges inside a cluster does not by itself certify that the
cluster admits a single shared \(\eps\)-optimal action. For this reason,
the regret bound below is stated in terms of the realized cluster radius,
certified by \(\overline\rho_t(C)\), rather than in terms of pairwise graph
independence alone. The full construction and validity guarantees appear in
Appendix~\ref{app:certificates}. For one-pass LLM benchmarks, we instantiate this criterion with a
candidate-restricted answer-conflict test; implementation details and
calibration checks are provided in Appendix~\ref{app:practical-split-trigger}.
\vspace{-0.5em}
\paragraph{Certification exploration.}
To make cannot-link certificates statistically testable, DeMem
ensures a minimum amount of context-level exploration before relying on
cluster-level UCB. For a resolution parameter \(\gamma>0\), each observed
context--action pair is sampled until
\[n_t(x,a)\ge
B_t(\gamma)
\coloneqq
\left\lceil
\frac{8\log(4NAt^2/\delta)}{\gamma^2}
\right\rceil .
\]

Once this coverage condition holds, the algorithm falls back to the
cluster-level UCB rule. Appendix~\ref{app:cert-exploration} explains how
this step separates certifiable positive-margin conflicts from conflicts
that remain part of the realized compression error.
\vspace{-0.5em}
\paragraph{Budgeted encoder.}
At the beginning of each epoch, DeMem builds a cannot-link graph
over observed contexts and applies a polynomial-time greedy coloring
subroutine to obtain a feasible \(K\)-slot partition \(\mathcal P_e\)
(Algorithm~\ref{alg:greedy-partition},
Appendix~\ref{app:frozen-protocol}). Its certified realized compression
price is
\[
\eps^{\mathrm{cert}}_e
\coloneqq
\max_{C\in\mathcal P_e}\overline\rho_{t_e}(C),
\]
which upper-bounds the true cluster distortion on the high-probability
confidence event. The partition is then frozen for the epoch, and
DeMem runs UCB at the memory-state level. The full graph
construction and frozen-epoch protocol are deferred to
Appendix~\ref{app:frozen-protocol}.

\vspace{-0.3em}
\subsection{Regret guarantees}
\label{sec:regret}
\vspace{-0.3em}
Even if $\mu$ were known, a $K$-state memory induces an unavoidable
approximation error; learning adds a second, statistical term.
Let $N_T\coloneqq|\{X_1,\dots,X_T\}|\le\min\{N,T\}$ be the number of distinct
contexts observed.

\begin{theorem}[Upper bound for DeMem]
\label{thm:upper}
Let \(\bar\eps^{\mathrm{cert}}_T
\coloneqq
\max_{e\le\lceil\log_2 T\rceil}\eps^{\mathrm{cert}}_e\), where
\(\eps^{\mathrm{cert}}_e\) is the certified compression price of the
epoch-\(e\) partition defined in Section~\ref{sec:certificates}. Let
\(B_T(\gamma)\) denote the certification threshold \(B_t(\gamma)\) evaluated
at \(t=T\). With probability at least \(1-\delta\),
\begin{equation}
\label{eq:upper}
\Reg(T)
\;\le\;
T\cdot O\!\bigl(\bar\eps^{\mathrm{cert}}_T\bigr)
\;+\;
\tilde O\!\bigl(\sqrt{AKT}\bigr)
\;+\;
O\!\bigl(A N_T B_T(\gamma)\bigr).
\end{equation}
\end{theorem}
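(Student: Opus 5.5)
We will prove Theorem~\ref{thm:upper} on a single high-probability ``good event'' $\mathcal E$. Regret will be split per round into three sources: certification-exploration rounds, approximation error from the frozen partition, and statistical error of memory-state UCB. Let $\mathcal E$ be the event on which every context-level confidence interval $|\hat\mu_t(x,a)-\mu(x,a)|\le\sqrt{2\log(4NAt^2/\delta)/n_t(x,a)}$ holds, and every cluster-level UCB interval holds, simultaneously for all $t$. A Hoeffding bound with the union over $N$ contexts, $A$ actions and $t$ (summable via the $t^2$ factor) gives $\Pr(\mathcal E)\ge 1-\delta$ after splitting $\delta$ between the two families. On $\mathcal E$ the certificates of Appendix~\ref{app:certificates} are valid. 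In particular, $\overline\rho_{t_e}(C)\ge\rho_{\mathrm{dec}}(C)$ for every cluster $C$ of every epoch partition $\mathcal P_e$, so each cluster admits an action $a_C$ with $\Delta(x,a_C)\le\eps^{\mathrm{cert}}_e\le\bar\eps^{\mathrm{cert}}_T$ for all $x\in C$.

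\textbf{Exploration term.} A round is an exploration round only if some pair $(X_t,a)$ with $X_t$ already observed has $n_t(X_t,a)<B_t(\gamma)\le B_T(\gamma)$. Each such round increments one count. Hence at most $N_T\cdot A\cdot B_T(\gamma)$ rounds are exploration rounds, and each has regret at most $1$. This gives $O(AN_TB_T(\gamma))$.

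\textbf{Approximation and UCB terms.} Fix an epoch $e$ with frozen partition $\mathcal P_e$ of at most $K$ clusters. Non-exploration rounds run UCB over arms $(m,a)\in[K]\times\cA$, with per-state mean $\bar\mu_m(a)=\EE[\mu(X,a)\mid g(X)=m]$. We decompose
\[
\mu^\star(X_t)-\mu(X_t,A_t)=\bigl(\mu^\star(X_t)-\mu(X_t,a_{M_t})\bigr)+\bigl(\mu(X_t,a_{M_t})-\mu(X_t,A_t)\bigr).
\]
The first bracket is at most $\bar\eps^{\mathrm{cert}}_T$ on $\mathcal E$, which summed over $T$ rounds gives $T\cdot O(\bar\eps^{\mathrm{cert}}_T)$. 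For the second bracket, we replace the realized terms by their conditional expectations given $M_t$. The difference is a bounded martingale difference sequence, so Azuma adds $O(\sqrt{T\log(1/\delta)})$. The expectation is then $\bar\mu_{M_t}(a_{M_t})-\bar\mu_{M_t}(A_t)\le\max_a\bar\mu_{M_t}(a)-\bar\mu_{M_t}(A_t)$, which is exactly the regret of $K$ independent $A$-armed UCB instances against their own best arms. One subtlety: exploration rounds interleave, but since the UCB update uses cluster samples, extra samples only shrink its confidence widths. The standard bound $\sum_{m}\sqrt{A T_m\log T}\le\sqrt{AKT\log T}$ (Cauchy--Schwarz over the states) then holds within the epoch. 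Summing over $\lceil\log_2T\rceil$ doubling epochs, with epoch lengths $2^e$, gives $\sum_e\sqrt{AK2^e}=O(\sqrt{AKT})$, i.e.\ $\tilde O(\sqrt{AKT})$.

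\textbf{Main obstacle.} The delicate step is the second bracket. The UCB comparator is the state-averaged best action, not $a_{M_t}$, and the inequality $\bar\mu_m(a_m)\le\max_a\bar\mu_m(a)$ must be used in the correct direction. Using it this way avoids any need for $a_{M_t}$ to be pointwise optimal. A second difficulty is ensuring that the partition, although data-dependent, is frozen before the epoch's samples are drawn. This makes the per-epoch cluster means fixed and keeps the UCB concentration argument valid. We will handle it by conditioning on the history up to $t_e$ and invoking the frozen-epoch protocol of Appendix~\ref{app:frozen-protocol}.
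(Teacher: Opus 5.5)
Your three-way split is the paper's own: exploration charged as $O(AN_TB_T(\gamma))$, compression through Lemma~\ref{lem:cert} and $\eps^{\mathrm{cert}}_e$, and a per-epoch $K\times A$ UCB term summed over doubling epochs. Your comparison through $\bar\mu_m(a_m)\le\max_a\bar\mu_m(a)$ correctly makes precise the paper's ``best action available under $g^{(e)}$''. The gap is at the step you flagged yourself. In Algorithm~\ref{alg:DeMem}, $(n_{m,a},\hat\mu_{m,a})$ is updated on every round, including certification-exploration rounds, and those samples are not extra i.i.d.\ draws of arm $(m,a)$. A round is an exploration round precisely because $X_t$ is under-sampled, and its action is forced to $\arg\min_a n(X_t,a)$. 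The sample therefore has conditional mean $\mu(x,a)$ for a non-representative $x$, not $\bar\mu_m(a)$. Such samples bias $\hat\mu_{m,a}$ rather than merely shrinking its width, and there can be up to $B_T(\gamma)$ of them per context--action pair. If they depress the good arm of some state, its index can stay below a worse arm's mean, and nothing in the UCB argument forces that arm to be resampled. So the cluster-level half of your event $\mathcal E$ need not have high probability, and the reduction to $K$ stationary $A$-armed UCB instances does not go through. The UCB rounds have a related problem. Conditional on $M_t=m$ and on round $t$ being a UCB round, $X_t$ follows $\cD$ restricted to the contexts of cluster $m$ that currently meet the threshold $B_t(\gamma)$. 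That set changes with $t$, because new contexts arrive and $B_t(\gamma)$ grows, so your Azuma step yields drifting means $\bar\mu_{m,t}(a)$ rather than the fixed $\bar\mu_m(a)$.

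The paper's sketch does not resolve this. It ``ignores'' exploration rounds and asserts a standard bandit on the rest, which is legitimate only if cluster statistics are fed by UCB rounds alone. To close the gap you have two options. One is to change the algorithm so that $(n_{m,a},\hat\mu_{m,a})$ is updated only on non-exploration rounds, and then bound the drift of $\bar\mu_{m,t}$, whose cumulative size is tied to the number of exploration rounds. The other is to treat exploration samples as bounded corruption and use a corruption-tolerant analysis, paying an extra additive term.

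There are two smaller points, both also glossed over in the paper. First, $\eps^{\mathrm{cert}}_e$ only certifies the clusters of $\mathcal P_e$ built on $V_e$. A context first seen inside epoch $e$ is attached by $\phi_e$, and at arrival its certified radius is trivially $1$. Your bound of the first bracket by $\bar\eps^{\mathrm{cert}}_T$ therefore fails for such a context once it finishes exploration within the epoch. You can handle this by showing that, with high probability (for $\gamma\le 1$), such a context does not complete its $AB_t(\gamma)$ exploration pulls before the epoch ends, because the rest of the epoch is shorter than its first-arrival time. Second, your context-level radius $\sqrt{2\log(4NAt^2/\delta)/n}$ is wider than the $c_t$ used to build $\overline\rho_t$ and $\underline d_t$, so your event does not imply Lemma~\ref{lem:cert}. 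Take $\mathcal E$ to contain that lemma's event instead.
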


The three terms correspond to realized compression error, statistical
learning over \(K\) memory states, and context-level certification
exploration; the last term is \(\tilde O(A N_T/\gamma^2)\). This cost makes cannot-link certificates statistically valid, avoiding a pairwise-independence proxy for cluster compatibility. By
Lemma~\ref{lem:cert}, the certified compression prices upper-bound the true
cluster radii on the high-probability event, so the regret is charged to the
realized cluster-level distortion rather than to pairwise graph
independence. A proof sketch is in Appendix~\ref{app:upper-proof}.

\begin{theorem}[Minimax lower bound under $K$-state memory]
\label{thm:lower}
There exists a universal constant $c>0$ such that
\begin{equation}
\label{eq:lower}
\inf_{\mathrm{alg}}\;\sup_{\mu,\cD}\;
\EE[\Reg(T)]
\;\ge\;
c\,T\,\eps^\star_\infty(K)
\;+\;
c\,\sqrt{AKT},
\end{equation}
where the infimum ranges over algorithms whose answer-time decision rule is
deterministic and whose access to the history is mediated by a \(K\)-state
runtime memory.
\end{theorem}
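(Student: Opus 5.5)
We will prove the two terms separately and combine them using \(\max\{u,v\}\ge (u+v)/2\). It suffices to exhibit two hard families: one where every algorithm pays \(cT\eps^\star_\infty(K)\), and one where every algorithm pays \(c\sqrt{AKT}\). For the combined statement, we either take the maximum of the two bounds, which gives the sum up to halving \(c\), or embed both families into a single instance on disjoint parts of the context space, with the mixing weight of \(\cD\) split evenly between the parts. We will use the max route, since the supremum over \((\mu,\cD)\) lets us choose the worse instance.

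\emph{Approximation term.} Fix any \(\mu\) and take \(\cD\) uniform over \(\cX_q\) for a single query \(q\) with \(N=|\cX_q|\) histories. If the answer-time rule is deterministic and history access goes through \(K\) states, then at each round \(t\) the action is \(\pi_t(g_t(H_t,q),q)\). This is a \(K\)-state encoder/decision pair, possibly depending on the past. By the definition of \(\eps^\star_\infty(K;q)\) in \eqref{eq:epsK}, some history \(h_t\in\cX_q\) has \(\Delta_q(h_t,A_t)\ge\eps^\star_\infty(K)\). This is only a worst-case statement, so it alone controls just a \(1/N\) fraction of rounds. To obtain the expectation bound, we choose the instance so that the frontier is attained uniformly. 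We build a packing of \(K+1\) histories whose pairwise decision distances exceed \(2\eps\): each history \(h_i\) has its own optimal arm \(a_i\) with value \(1/2+\eps\), and every other arm has value \(1/2-\eps\). Then \(\eps^\star_\infty(K)\asymp\eps\) by Theorem~\ref{thm:cover-pack}, and under the uniform \(\cD\) on these \(K+1\) histories, pigeonhole forces two histories into one state at every round. One of them then receives a \(2\eps\)-suboptimal action, which costs at least \(2\eps/(K+1)\) per round. To remove the \(1/(K+1)\) factor we instead put mass \(1/2\) on the merged pair. Since the algorithm's encoder is arbitrary, we average over a uniformly random relabeling of histories: any fixed \(K\)-partition merges a given pair with probability \(\Omega(1/K)\). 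We expect this to be the main obstacle, and we handle it as follows. The supremum over \(\mu\) lets us choose a scale where \(\eps^\star_\infty(K)\) is defined by the instance itself. For \(N=K+1\), \(\eps^\star_\infty(K)\) equals the smallest pairwise \(d_{\mathrm{dec}}\), and it suffices to assign \(\cD\)-mass \(\Theta(1)\) to the worst merge realized by the best partition. We will therefore construct an instance in which all pairs have equal \(d_{\mathrm{dec}}\) and argue via symmetry that the expected loss per round is at least \(c\,\eps^\star_\infty(K)\).

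\emph{Statistical term.} We reduce to \(K\) independent \(A\)-armed bandits. Take \(K\) histories with a single query, \(\cD\) uniform, and a reward \(\mu\) in which each history has one unknown good arm with gap \(\Delta=\sqrt{AK/T}\). Histories with distinct packing labels whose optimal arms differ can be separated by a \(K\)-state memory at zero distortion (Theorem~\ref{thm:forgetting}), so \(\eps^\star_\infty(K)=0\) and the memory constraint plays no role. Each history is visited about \(T/K\) times, and the standard Le Cam/KL argument on Bernoulli rewards gives regret \(\Omega(\sqrt{A\,T/K})\) per history. Summing over the \(K\) histories gives \(\Omega(\sqrt{AKT})\). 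This requires \(T\ge AK\), and for smaller \(T\) the bound is trivial after adjusting \(c\).

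Taking the maximum of the two constructions and adjusting \(c\) yields \eqref{eq:lower}.
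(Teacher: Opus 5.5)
Your two-instance decomposition and your statistical term match the paper's sketch: $K$ contexts, independent $A$-armed subproblems, $\eps^\star_\infty(K)=0$, and $\Omega(\sqrt{AT/K})$ regret per context. One correction there: for $T<AK$ the bound is not trivial but false. Once $T<c^2AK$ we have $c\sqrt{AKT}>T\ge\Reg(T)$, so the statement implicitly needs $T\gtrsim AK$.

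The genuine gap is the compression term. You rightly observe that worst-case distortion controls only the worst history. (The paper's sketch infers the $T\eps^\star_\infty(K)$ term directly from worst-case distortion and never addresses this.) Neither of your repairs works.

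\begin{itemize}
\item The relabeling claim is false. A partition of $K+1$ histories into $K$ nonempty cells merges exactly one of the $\binom{K+1}{2}$ pairs. A uniformly random designated pair is therefore merged with probability $2/(K(K+1))$, not $\Omega(1/K)$.
\item More fundamentally, no pair can be forced into a shared state. Contexts are observed, and $g_t,\pi_t$ may depend on the entire past; only the answer-time state is budgeted, and DeMem itself keeps unbounded context-level statistics. The learner can therefore estimate $\cD$ and merge the lightest histories.
\item The symmetric instance produces the $1/(K+1)$ factor rather than removing it. Take $N=K+1$ with all pairwise $d_{\mathrm{dec}}$ equal to $\eps^\star_\infty(K)$. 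Merging the two lightest histories under the action attaining their $d_{\mathrm{dec}}$ costs at most $\tfrac{2}{K+1}\eps^\star_\infty(K)$ per round. In your uniform instance the best rule loses exactly $\eps^\star_\infty(K)/(K+1)$.
\item The obstruction is intrinsic to compression. If $A=K+1$, every instance admits a $K$-state rule with average loss at most $1/(K+1)$: leave unused the arm that is the unique optimum on the least $\cD$-mass. Meanwhile $\eps^\star_\infty(K)$ can equal $1$.
\end{itemize}

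What is missing is to lower-bound the per-round loss by the average-case frontier (Proposition~\ref{prop:avg-lb}), on an instance where that frontier is within a constant of $\eps^\star_\infty(K)$. Getting there needs enough arms, or asymmetric masses and gaps, not symmetry.

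For example, suppose $A\ge 2K$. Take $2K$ histories for one query with distinct optimal arms $a_i$. Set the reward mean to $\tfrac12+\tfrac{\eps}{2}$ on $a_i$ and $\tfrac12-\tfrac{\eps}{2}$ elsewhere, and let $\cD$ be uniform.
\begin{itemize}
\item Then $\eps^\star_\infty(K)=\eps$.
\item At every round the map $m\mapsto\pi_t(m,q)$ takes at most $K$ values, however it depends on the past.
\item So the fresh history $H_t$ receives an $\eps$-suboptimal action with conditional probability at least $1/2$.
\item Hence $\EE[\Reg(T)]\ge T\eps^\star_\infty(K)/2$ for every admissible algorithm.
\end{itemize}
Combining this with your statistical instance via the maximum gives the claim for $A\ge 2K$ and $T\gtrsim AK$. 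Recall also that $\eps^\star_\infty(K)=0$ whenever $K\ge A$, by grouping histories by optimal arm. The $A=K+1$ example shows that a compression-only argument cannot dispense with some restriction of this kind.
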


\noindent A proof sketch is in Appendix~\ref{app:lower-proof}.
Together, Theorems~\ref{thm:upper} and~\ref{thm:lower} show that
DeMem matches the minimax statistical learning term up to
logarithmic factors. The remaining excess term is the realized compression
price
\[
\bar\eps^{\mathrm{cert}}_T-\eps^\star_\infty(K),
\]
together with the context-certification overhead. The greedy graph routine
affects how the partition is found, but the regret bound is charged to the
certified cluster radius of the returned partition. Block-stationary
extensions are stated in Appendix~\ref{app:nonstationary}.

\vspace{-0.5em}
\paragraph{From theory to slot-based agent memory.}
A $K$-slot memory system directly realizes the runtime bottleneck of
Section~\ref{sec:setup}: the router implements $g$, the selected slot identity
is $M_t$, and the policy conditions on the current query together with the
selected memory state. The complete DeMem procedure is provided in
Appendix~\ref{app:DeMem-algorithm}.
Proposition~\ref{prop:bridge} (Appendix~\ref{app:impl}) decomposes the
realized distortion into
\[
\underbrace{\eps^\star_\infty(K)}_{\text{compression}}
\;+\;
\underbrace{\eta_{\mathrm{route}}}_{\text{routing}}
\;+\;
\underbrace{\eta_{\mathrm{read}}}_{\text{realization}}.
\]
The complete DeMem procedure is given in
Appendix~\ref{app:DeMem-algorithm}, and the theory--implementation
correspondence is summarized in the alignment table of
Appendix~\ref{app:alignment-table}. Appendix~\ref{app:bridge-empirical}
empirically validates the decomposition above, while
Appendix~\ref{app:modularity} shows that the same decision-aware selection
criterion can improve other memory systems as a drop-in component.

\begin{algorithm}[t]
\caption{DeMem in one loop}
\label{alg:demem-overview}
\begin{algorithmic}[1]
\State Initialize a \(K\)-slot partition \(\mathcal P_0\) and context--action statistics
\For{each epoch \(e\)}
    \State \textbf{Act:} freeze \(\mathcal P_e\) and route each context,
    \(m_t \leftarrow g_{\mathcal P_e}(x_t)\)
    \State choose \(a_t\) by exploration or slot-level UCB,
    \(a_t \leftarrow \mathrm{Explore/UCB}(m_t,x_t)\); update statistics
    \State \textbf{Certify:} build cannot-link edges
    \(E_e=\{(x,x'):\underline d_e(x,x')>\eps\}\)
    \State \textbf{Refresh:} compute
    \(\mathcal P_{e+1}\leftarrow \mathrm{GreedyColor}_K(E_e)\)
\EndFor
\end{algorithmic}
\end{algorithm}
\begin{figure*}[t]
\centering
\begin{minipage}[t]{0.32\textwidth}
\centering
\includegraphics[width=\linewidth]{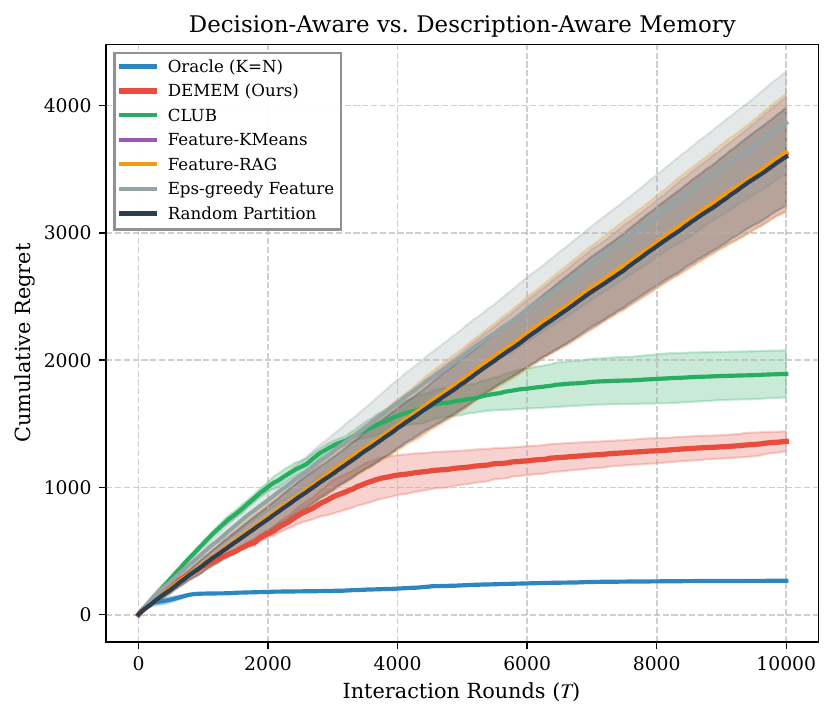}
\\[-1mm]
\small (a) Cumulative regret.
\end{minipage}\hfill
\begin{minipage}[t]{0.32\textwidth}
\centering
\includegraphics[width=\linewidth]{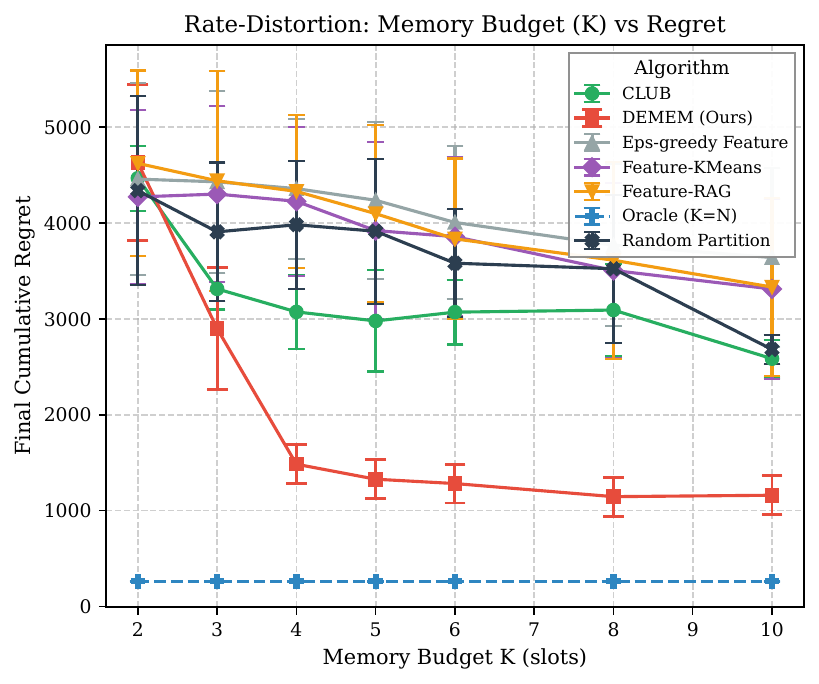}
\\[-1mm]
\small (b) Memory Budget vs. Regret.
\end{minipage}\hfill
\begin{minipage}[t]{0.32\textwidth}
\centering
\includegraphics[width=\linewidth]{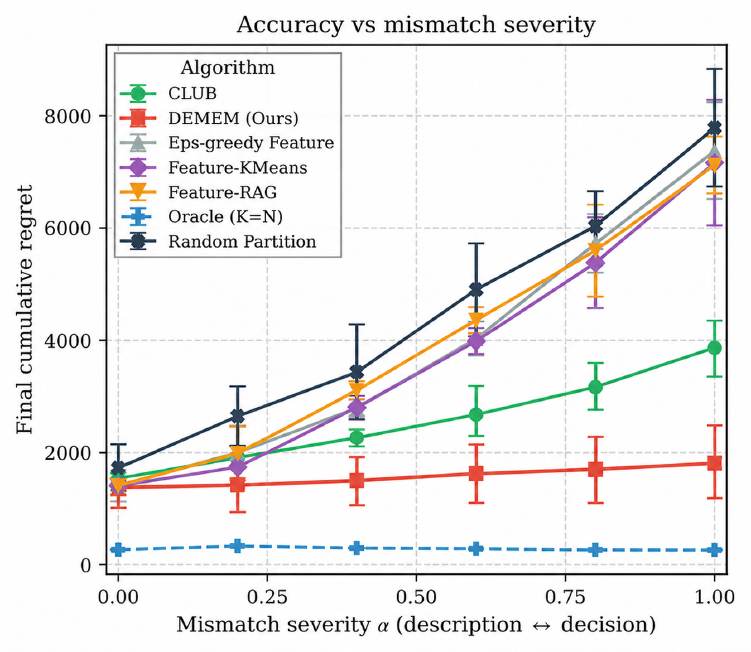}
\\[-1mm]
\small (c) Mismatch mechanism analysis.
\end{minipage}
\caption{Synthetic results.
(a) Cumulative regret on the Decoupled Bandit.
(b) Memory--distortion tradeoff under varying $K$.
(c) Performance under increasing description--decision mismatch.}
\label{fig:synthetic-results}
\end{figure*}

\vspace{-0.5em}
\section{Experiments}
\label{sec:exp}
\vspace{-0.5em}
\subsection{Experimental Setup}

\vspace{-0.3em}
\paragraph{Synthetic environment.}
We use a \emph{Decoupled Bandit} in which each context has a descriptive representation and a latent decision identity that are intentionally misaligned: descriptively similar contexts need not share the same optimal action.
Baselines include an unconstrained \textbf{Oracle}, \textbf{CLUB}-style online clustering~\citep{gentile2014onlineclusteringbandits}, description-aware methods (\textbf{Feature-KMeans}, \textbf{Feature-RAG}, $\epsilon$-greedy feature clustering), and a \textbf{Random Partition} control.
We report cumulative regret, an empirical memory--distortion curve (varying $K$), and a mismatch-severity sweep interpolating between aligned and misaligned regimes.
Full definitions are in Appendix~\ref{app:synth}.
\vspace{-0.5em}
\paragraph{benchmarks and baselines.}
We evaluate on \textbf{LoCoMo}~\citep{maharana-etal-2024-evaluating}, \textbf{LongMemEval}~\citep{wu2025longmemevalbenchmarkingchatassistants}, and \textbf{MemoryArena}~\citep{he2026memoryarenabenchmarkingagentmemory} against \textbf{FullContext}, \textbf{RAG}~\citep{lewis2021retrievalaugmentedgenerationknowledgeintensivenlp}, \textbf{LangMem}~\citep{langchain_langmem_2024}, \textbf{Mem0}~\citep{chhikara2025mem0buildingproductionreadyai}, \textbf{Zep}~\citep{rasmussen2025zeptemporalknowledgegraph}, \textbf{Nemori}~\citep{nan2025nemoriselforganizingagentmemory}, \textbf{EMem-G}~\citep{zhou2025simplestrongbaselinelongterm}, and \textbf{Mnemis}~\citep{tang2026mnemisdualrouteretrievalhierarchical}.
All methods share the same answering backbones (\texttt{gpt-4o-mini}, \texttt{gpt-4.1-mini}), deterministic decoding.
Answers are scored by two held-out LLM judges, \texttt{gpt-4o-mini}
and \texttt{gpt-4.1-mini}(binary on LoCoMo, graded on LongMemEval); reported numbers average across judges and instances.A human-agreement study on 150 stratified LoCoMo instances confirms
that the LLM judges align well with human majority vote
($\kappa = 0.79$, 91.3\% agreement), with no systematic bias toward
any method (Appendix~\ref{app:human-judge}).
Details on preprocessing, prompting, baseline instantiation, budget matching, and judge prompts are in Appendix~\ref{app:bench-details}.

\vspace{-0.3em}
\subsection{Synthetic Results}
\label{sec:synthetic}
\vspace{-0.3em}

Figure~\ref{fig:synthetic-results} summarizes the synthetic results.
DeMem achieves the lowest cumulative regret among all budgeted methods~(a), attains a more favorable memory--distortion tradeoff~(b), and widens its advantage as the mismatch between descriptive and decision similarity grows~(c).
Description-aware baselines merge contexts that require different actions, leading to persistent decision error; DeMem avoids this by refining memory only on certified decision conflict.
We additionally measure the induced partition distortion (Appendix~\ref{app:partition-distortion}, Figure~\ref{fig:partition-validation}): DeMem consistently learns the lowest-distortion partition, and downstream regret is broadly monotone in this quantity across methods and seeds, confirming the mechanism predicted by the theory.
Full environment definitions and baseline specifications are in Appendix~\ref{app:synth}.

\begin{table*}[t]
\centering
\caption{Results on LoCoMo. Mean \(\pm\) std over runs. }
\label{tab:locomo-main}
\small
\resizebox{\textwidth}{!}{%
\begin{tabular}{llccccc}
\toprule
\textbf{Backbone} & \textbf{Method}
& \textbf{Temporal}
& \textbf{Open Domain}
& \textbf{Multi-Hop}
& \textbf{Single-Hop}
& \textbf{Overall} \\
\midrule
\multirow{9}{*}{GPT-4o-mini}
& FullContext & $0.515_{\pm 0.050}$ & $0.428_{\pm 0.067}$ & $0.629_{\pm 0.057}$ & $0.791_{\pm 0.039}$ & $0.681_{\pm 0.053}$ \\
& LangMem     & $0.208_{\pm 0.090}$ & $0.423_{\pm 0.108}$ & $0.470_{\pm 0.096}$ & $0.565_{\pm 0.079}$ & $0.464_{\pm 0.093}$ \\
& Mem0        & $0.453_{\pm 0.072}$ & $0.365_{\pm 0.099}$ & $0.552_{\pm 0.080}$ & $0.644_{\pm 0.066}$ & $0.570_{\pm 0.077}$ \\
& RAG         & $0.572_{\pm 0.101}$ & $0.590_{\pm 0.126}$ & $0.543_{\pm 0.120}$ & $0.698_{\pm 0.108}$ & $0.637_{\pm 0.115}$ \\
& Zep         & $0.545_{\pm 0.076}$ & $0.336_{\pm 0.100}$ & $0.457_{\pm 0.088}$ & $0.592_{\pm 0.073}$ & $0.541_{\pm 0.082}$ \\
& Nemori      & $0.656_{\pm 0.057}$ & $0.401_{\pm 0.086}$ & $0.596_{\pm 0.068}$ & $0.787_{\pm 0.045}$ & $0.701_{\pm 0.060}$ \\
& EMem-G      & $0.717_{\pm 0.053}$ & $0.517_{\pm 0.078}$ & $0.702_{\pm 0.059}$ & $0.782_{\pm 0.042}$ & $0.737_{\pm 0.051}$ \\
& Mnemis      & $0.878_{\pm 0.037}$ & $0.793_{\pm 0.060}$ & $0.798_{\pm 0.054}$ & \textbf{0.938}$_{\pm 0.028}$ & $0.888_{\pm 0.042}$ \\
& \textbf{DeMem}
  & \textbf{0.919}$_{\pm 0.031}$
  & \textbf{0.868}$_{\pm 0.043}$
  & \textbf{0.847}$_{\pm 0.042}$
  & $0.932_{\pm 0.018}$
  & \textbf{0.911}$_{\pm 0.040}$ \\
\midrule
\multirow{9}{*}{GPT-4.1-mini}
& FullContext & $0.692_{\pm 0.057}$ & $0.511_{\pm 0.081}$ & $0.725_{\pm 0.053}$ & $0.831_{\pm 0.040}$ & $0.763_{\pm 0.058}$ \\
& LangMem     & $0.461_{\pm 0.087}$ & $0.531_{\pm 0.080}$ & $0.659_{\pm 0.068}$ & $0.800_{\pm 0.046}$ & $0.687_{\pm 0.063}$ \\
& Mem0        & $0.514_{\pm 0.077}$ & $0.428_{\pm 0.093}$ & $0.629_{\pm 0.066}$ & $0.669_{\pm 0.062}$ & $0.614_{\pm 0.071}$ \\
& RAG         & $0.710_{\pm 0.110}$ & $0.634_{\pm 0.117}$ & $0.593_{\pm 0.103}$ & $0.727_{\pm 0.113}$ & $0.693_{\pm 0.106}$ \\
& Zep         & $0.554_{\pm 0.076}$ & $0.383_{\pm 0.097}$ & $0.495_{\pm 0.084}$ & $0.623_{\pm 0.069}$ & $0.570_{\pm 0.073}$ \\
& Nemori      & $0.731_{\pm 0.056}$ & $0.454_{\pm 0.086}$ & $0.706_{\pm 0.059}$ & $0.811_{\pm 0.042}$ & $0.753_{\pm 0.051}$ \\
& EMem-G      & $0.757_{\pm 0.047}$ & $0.660_{\pm 0.064}$ & $0.747_{\pm 0.052}$ & $0.872_{\pm 0.031}$ & $0.812_{\pm 0.040}$ \\
& Mnemis      & $0.891_{\pm 0.031}$ & $0.858_{\pm 0.034}$ & $0.805_{\pm 0.046}$ & \textbf{0.940}$_{\pm 0.025}$ & $0.906_{\pm 0.039}$ \\
& \textbf{DeMem}
  & \textbf{0.921}$_{\pm 0.039}$
  & \textbf{0.908}$_{\pm 0.040}$
  & \textbf{0.831}$_{\pm 0.048}$
  & $0.935_{\pm 0.037}$
  & \textbf{0.920}$_{\pm 0.038}$ \\
\bottomrule
\end{tabular}
}
\end{table*}

\vspace{-0.3em}
\subsection{Main Results on LoCoMo and LongMemEval}
\vspace{-0.3em}
Table~\ref{tab:locomo-main} reports results on \textbf{LoCoMo}.
DeMem achieves the highest mean overall score on both backbones.
The pattern is especially clear on Temporal and Open Domain questions, where
preserving distinctions across distant interactions is important.
On LongMemEval (Table~\ref{tab:longmemeval-main}, Appendix~\ref{app:longmemeval-table}),
 DeMem again obtains the best mean overall score on both backbones,
with the largest gains on categories requiring cross-session integration.
DeMem also lies on a favorable accuracy--resource frontier
(Appendix~\ref{app:budget-latency}, Figure~\ref{fig:acc-budget-latency}),
indicating that the gains come from more efficient use of the fixed memory
budget rather than from additional memory or compute.
Results under Llama-3.1-70B (Appendix~\ref{app:llama-results}) confirm that
the advantage is not tied to a specific backbone family and generalizes
across both proprietary and open-weight models.
Mechanism audits on real benchmarks show the same pattern: DeMem
forms purer, less conflicted memory states and yields partitions closer to
oracle support than non-decision-aware alternatives
(Appendices~\ref{app:locomo-audit} and~\ref{app:partition-intervention}). Results on the agentic MemoryArena benchmark~\citep{he2026memoryarenabenchmarkingagentmemory}
show the same pattern: DeMem achieves the highest mean average success rate among the compared methods
(Appendix~\ref{app:memoryarena}).
\vspace{-0.3em}
\subsection{Ablation and Robustness}
\label{sec:ablation}
\vspace{-0.3em}
\begin{figure*}[t]
\centering
\begin{minipage}[t]{0.49\textwidth}
\centering
\includegraphics[width=\linewidth]{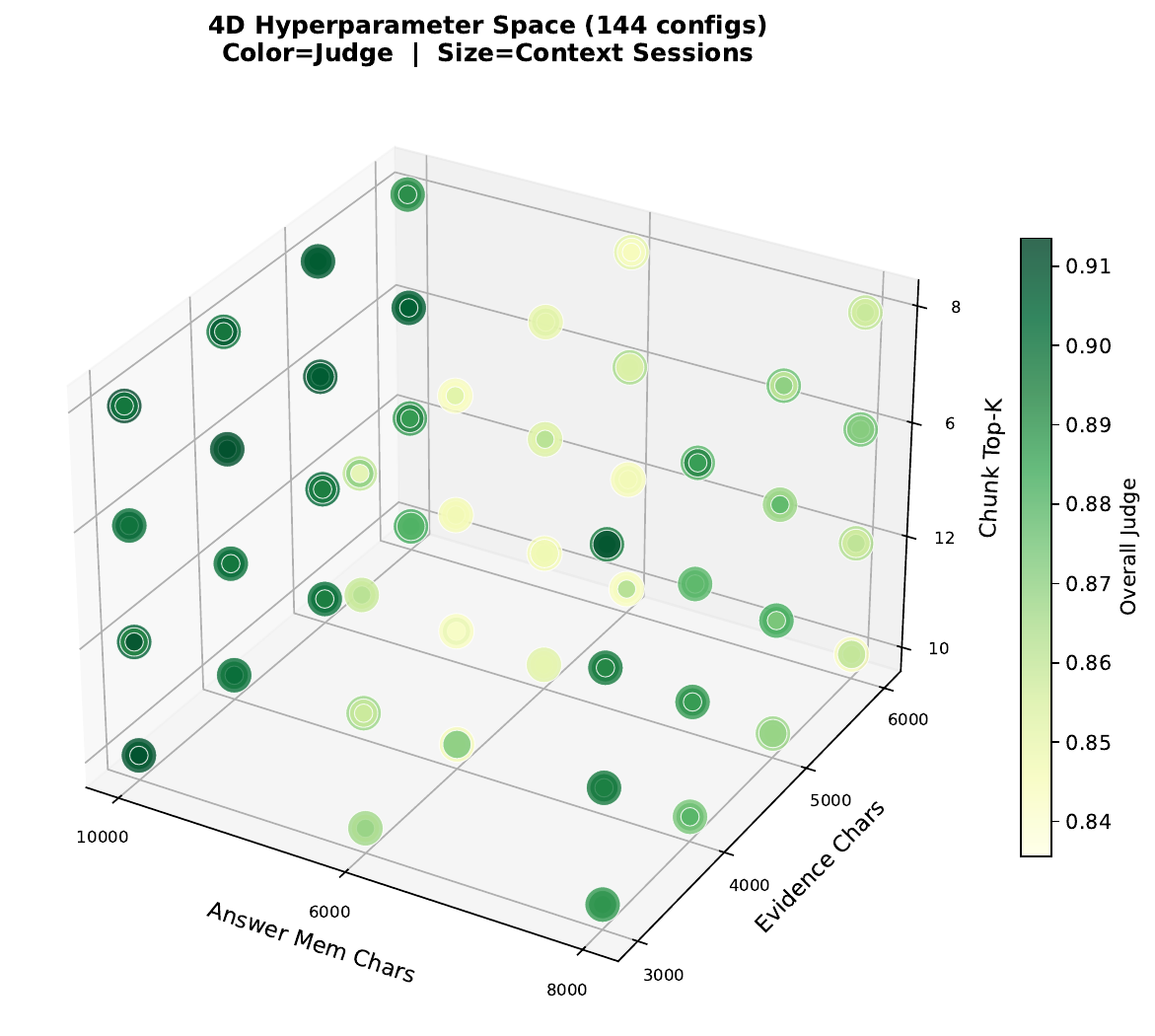}
\\[-1mm]
\small (a) Parameter sensitivity (144 configs).
\end{minipage}\hfill
\begin{minipage}[t]{0.49\textwidth}
\centering
\includegraphics[width=\linewidth]{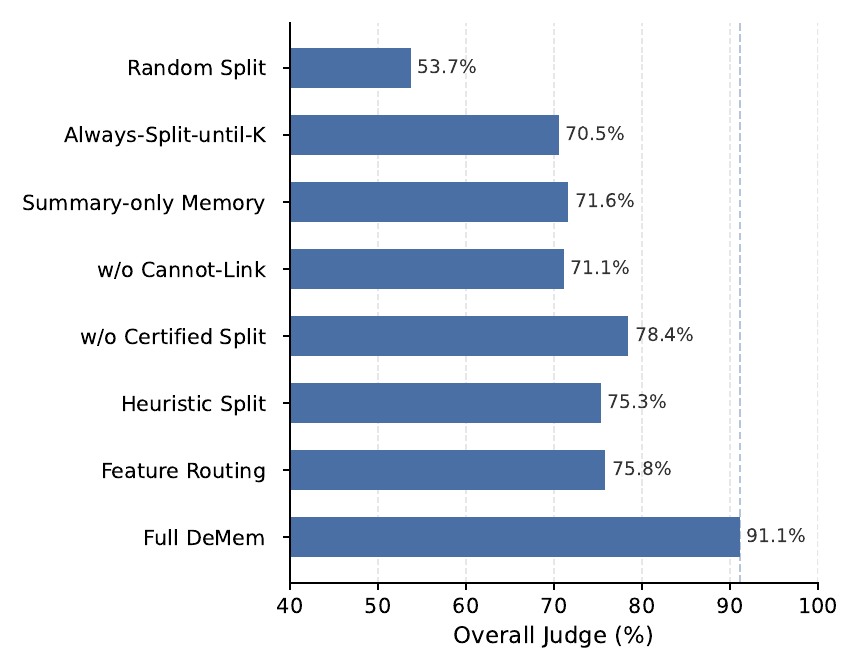}
\\[-1mm]
\small (b) Component ablations.
\end{minipage}

\vspace{1mm}

\begin{minipage}[t]{0.49\textwidth}
\centering
\includegraphics[width=\linewidth]{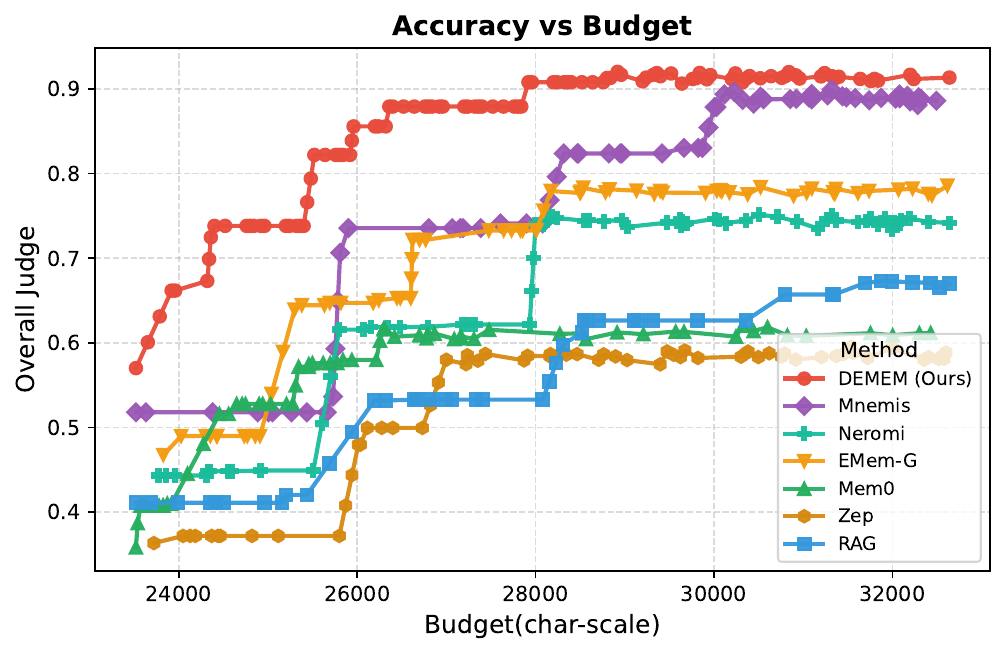}
\\[-1mm]
\small (c) Accuracy vs.\ memory budget.
\end{minipage}\hfill
\begin{minipage}[t]{0.49\textwidth}
\centering
\includegraphics[width=\linewidth]{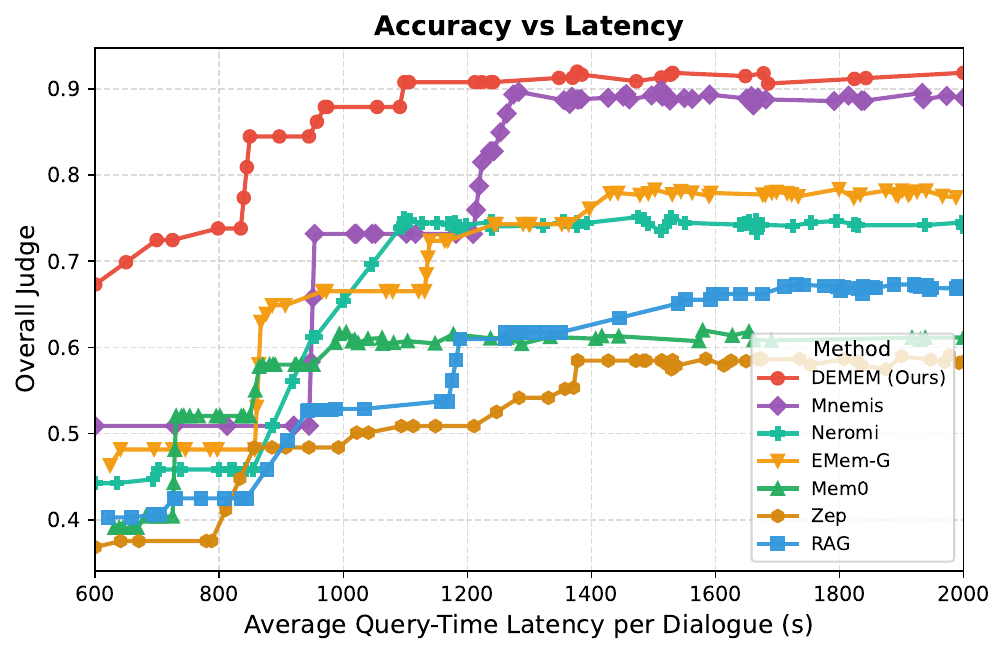}
\\[-1mm]
\small (d) Accuracy vs.\ latency.
\end{minipage}
\caption{Ablation and robustness. (a) Parameter sensitivity over 144 configurations. (b) Component ablations(defined in App~\ref{app:ablation-details}). (c) Accuracy vs. memory budget. (d) Accuracy vs. latency.}
\label{fig:ablation-param}
\label{fig:acc-budget-latency}
\end{figure*}

We ablate splitting, routing, and memory representation
(variant definitions in Appendix~\ref{app:ablation-details}).
Figure~\ref{fig:ablation-param}(b) shows that all ablations degrade
performance. Removing certified splitting is damaging: heuristic,
aggressive, and random split variants all underperform the certified
split rule. Certified splits are also selective and accurate, firing on
only \(4.6\%\) of LoCoMo routing events while reaching \(85\%\) precision
against gold annotations (Appendix~\ref{app:split-audit}). Feature-based
routing and description-only summaries further reduce performance,
supporting the need to preserve decision-relevant distinctions throughout
the memory pipeline.

Additional diagnostics support the same mechanism. On an annotated LoCoMo
subset, the observed answer-level distortion reaches about \(81\%\) of the
predicted bridge upper bound, with routing as the largest implementation
contributor (Appendix~\ref{app:bridge-empirical}). A theory-faithful
DeMem-Core variant retaining only slot structure, routing, and
certified refinement still reaches 90.8 overall
(Appendix~\ref{app:core-locomo}). Under corrupted LoCoMo feedback and
against feedback-trained or memory-action baselines,
DeMem remains strongest overall
(Appendices~\ref{app:reward-noise} and
\ref{app:feedback-aware-baselines},
Table~\ref{tab:locomo_qwen_feedbackaware}).
Overall, the split-precision, ablation, and DeMem-Core results are consistent with decision-aware state refinement being a key contributor to DeMem’s gains.

\vspace{-0.5em}
\section{Conclusion}
\vspace{-0.5em}
We introduced a decision-centric framework for budgeted agent memory, viewing
runtime memory as a compression bottleneck whose quality is measured by
downstream decision loss. This formulation yields an exact forgetting boundary,
a memory--distortion frontier, and complexity measures for the
decision-relevant distinctions that must be retained.
Building on this formulation, DeMem refines memory only under certified
decision conflict and enjoys regret guarantees that separate unavoidable
compression loss from statistical learning. Empirically, DeMem improves
performance under fixed memory budgets, supporting decision-centric memory
organization.
Overall, agent memory should preserve the distinctions that matter for action,
not merely description. At the same time, DeMem is best viewed as a budgeted decision layer that
complements richer descriptive memory stores; implementation considerations
and broader impacts are discussed in Appendices~\ref{app:limitations}
and~\ref{sec:impact}.

\bibliographystyle{plainnat}
\bibliography{reference}

\begin{thebibliography}{53}
\providecommand{\natexlab}[1]{#1}
\providecommand{\url}[1]{\texttt{#1}}
\expandafter\ifx\csname urlstyle\endcsname\relax
  \providecommand{\doi}[1]{doi: #1}\else
  \providecommand{\doi}{doi: \begingroup \urlstyle{rm}\Url}\fi

\bibitem[Abel et~al.(2017)Abel, Hershkowitz, and Littman]{abel2017nearoptimalbehaviorapproximate}
David Abel, D.~Ellis Hershkowitz, and Michael~L. Littman.
\newblock Near optimal behavior via approximate state abstraction, 2017.
\newblock URL \url{https://arxiv.org/abs/1701.04113}.

\bibitem[Abel et~al.(2018)Abel, Arumugam, Lehnert, and Littman]{abel2018lifelong}
David Abel, Dilip Arumugam, Lucas Lehnert, and Michael Littman.
\newblock State abstractions for lifelong reinforcement learning.
\newblock In \emph{Proceedings of the 35th International Conference on Machine Learning}, 2018.

\bibitem[Chhikara et~al.(2025)Chhikara, Khant, Aryan, Singh, and Yadav]{chhikara2025mem0buildingproductionreadyai}
Prateek Chhikara, Dev Khant, Saket Aryan, Taranjeet Singh, and Deshraj Yadav.
\newblock Mem0: Building production-ready ai agents with scalable long-term memory, 2025.
\newblock URL \url{https://arxiv.org/abs/2504.19413}.

\bibitem[Du(2026)]{du2026memoryautonomousllmagentsmechanisms}
Pengfei Du.
\newblock Memory for autonomous {LLM} agents:mechanisms, evaluation, and emerging frontiers, 2026.
\newblock URL \url{https://arxiv.org/abs/2603.07670}.

\bibitem[Du et~al.(2026)Du, Wang, Xiang, Wang, Huang, XUE, Liang, Zeng, Mi, Bai, Shang, Pan, Jiang, and Wong]{du2026memoryt}
Yiming Du, Baojun Wang, Yifan Xiang, Zhaowei Wang, Wenyu Huang, Boyang XUE, Bin Liang, Xingshan Zeng, Fei Mi, Haoli Bai, Lifeng Shang, Jeff~Z. Pan, Yuxin Jiang, and Kam-Fai Wong.
\newblock Memory-t1: Reinforcement learning for temporal reasoning in multi-session agents.
\newblock In \emph{The Fourteenth International Conference on Learning Representations}, 2026.
\newblock URL \url{https://openreview.net/forum?id=vQf2YR2Kpd}.

\bibitem[Ferns et~al.(2011)Ferns, Panangaden, and Precup]{Bisimulation11Ferns}
Norm Ferns, Prakash Panangaden, and Doina Precup.
\newblock Bisimulation metrics for continuous markov decision processes.
\newblock \emph{SIAM Journal on Computing}, 40\penalty0 (6):\penalty0 1662--1714, 2011.
\newblock \doi{10.1137/10080484X}.
\newblock URL \url{https://doi.org/10.1137/10080484X}.

\bibitem[Gentile et~al.(2014)Gentile, Li, and Zappella]{gentile2014onlineclusteringbandits}
Claudio Gentile, Shuai Li, and Giovanni Zappella.
\newblock Online clustering of bandits, 2014.
\newblock URL \url{https://arxiv.org/abs/1401.8257}.

\bibitem[Givan et~al.(2003)Givan, Dean, and Greig]{GIVAN2003163}
Robert Givan, Thomas Dean, and Matthew Greig.
\newblock Equivalence notions and model minimization in markov decision processes.
\newblock \emph{Artificial Intelligence}, 147\penalty0 (1):\penalty0 163--223, 2003.
\newblock ISSN 0004-3702.
\newblock \doi{https://doi.org/10.1016/S0004-3702(02)00376-4}.
\newblock URL \url{https://www.sciencedirect.com/science/article/pii/S0004370202003764}.
\newblock Planning with Uncertainty and Incomplete Information.

\bibitem[Gutiérrez et~al.(2025)Gutiérrez, Shu, Gu, Yasunaga, and Su]{gutiérrez2025hipporagneurobiologicallyinspiredlongterm}
Bernal~Jiménez Gutiérrez, Yiheng Shu, Yu~Gu, Michihiro Yasunaga, and Yu~Su.
\newblock Hipporag: Neurobiologically inspired long-term memory for large language models, 2025.
\newblock URL \url{https://arxiv.org/abs/2405.14831}.

\bibitem[He et~al.(2026)He, Wang, Zhi, Hu, Chen, Yin, Chen, Wu, Ouyang, Wang, Pei, McAuley, Choi, and Pentland]{he2026memoryarenabenchmarkingagentmemory}
Zexue He, Yu~Wang, Churan Zhi, Yuanzhe Hu, Tzu-Ping Chen, Lang Yin, Ze~Chen, Tong~Arthur Wu, Siru Ouyang, Zihan Wang, Jiaxin Pei, Julian McAuley, Yejin Choi, and Alex Pentland.
\newblock Memoryarena: Benchmarking agent memory in interdependent multi-session agentic tasks, 2026.
\newblock URL \url{https://arxiv.org/abs/2602.16313}.

\bibitem[Hu et~al.(2026{\natexlab{a}})Hu, Wang, and McAuley]{hu2026evaluating}
Yuanzhe Hu, Yu~Wang, and Julian McAuley.
\newblock Evaluating memory in {LLM} agents via incremental multi-turn interactions.
\newblock In \emph{The Fourteenth International Conference on Learning Representations}, 2026{\natexlab{a}}.
\newblock URL \url{https://openreview.net/forum?id=DT7JyQC3MR}.

\bibitem[Hu et~al.(2026{\natexlab{b}})Hu, Liu, Yue, Zhang, Liu, Zhu, Lin, Guo, Dou, Xi, Jin, Tan, Yin, Liu, Zhang, Sun, Zhu, Sun, Peng, Cheng, Fan, Guo, Yu, Zhou, Hu, Huo, Wang, Niu, Wang, Yin, Hu, Liao, Li, Wang, Zhou, Liu, Cheng, Zhang, Gui, Pan, Zhang, Torr, Dou, Wen, Huang, Jiang, and Yan]{hu2026memoryageaiagents}
Yuyang Hu, Shichun Liu, Yanwei Yue, Guibin Zhang, Boyang Liu, Fangyi Zhu, Jiahang Lin, Honglin Guo, Shihan Dou, Zhiheng Xi, Senjie Jin, Jiejun Tan, Yanbin Yin, Jiongnan Liu, Zeyu Zhang, Zhongxiang Sun, Yutao Zhu, Hao Sun, Boci Peng, Zhenrong Cheng, Xuanbo Fan, Jiaxin Guo, Xinlei Yu, Zhenhong Zhou, Zewen Hu, Jiahao Huo, Junhao Wang, Yuwei Niu, Yu~Wang, Zhenfei Yin, Xiaobin Hu, Yue Liao, Qiankun Li, Kun Wang, Wangchunshu Zhou, Yixin Liu, Dawei Cheng, Qi~Zhang, Tao Gui, Shirui Pan, Yan Zhang, Philip Torr, Zhicheng Dou, Ji-Rong Wen, Xuanjing Huang, Yu-Gang Jiang, and Shuicheng Yan.
\newblock Memory in the age of ai agents, 2026{\natexlab{b}}.
\newblock URL \url{https://arxiv.org/abs/2512.13564}.

\bibitem[Huo et~al.(2026)Huo, Lu, Zhang, Chen, and Lin]{huo2026atommemlearnabledynamic}
Yupeng Huo, Yaxi Lu, Zhong Zhang, Haotian Chen, and Yankai Lin.
\newblock Atommem : Learnable dynamic agentic memory with atomic memory operation, 2026.
\newblock URL \url{https://arxiv.org/abs/2601.08323}.

\bibitem[Kang et~al.(2025)Kang, Ji, Zhao, and Bai]{kang-etal-2025-memory}
Jiazheng Kang, Mingming Ji, Zhe Zhao, and Ting Bai.
\newblock Memory {OS} of {AI} agent.
\newblock In Christos Christodoulopoulos, Tanmoy Chakraborty, Carolyn Rose, and Violet Peng, editors, \emph{Proceedings of the 2025 Conference on Empirical Methods in Natural Language Processing}, pages 25961--25970, Suzhou, China, November 2025. Association for Computational Linguistics.
\newblock ISBN 979-8-89176-332-6.
\newblock \doi{10.18653/v1/2025.emnlp-main.1318}.
\newblock URL \url{https://aclanthology.org/2025.emnlp-main.1318/}.

\bibitem[{LangChain AI}(2024)]{langchain_langmem_2024}
{LangChain AI}.
\newblock Langmem: Long-term memory sdk for {LLM} agents, 2024.
\newblock URL \url{https://github.com/langchain-ai/langmem}.

\bibitem[Latimer et~al.(2025)Latimer, Boschi, Neeser, Bartholomew, Srivastava, Wang, and Ramakrishnan]{latimer2025hindsight2020buildingagent}
Chris Latimer, Nicoló Boschi, Andrew Neeser, Chris Bartholomew, Gaurav Srivastava, Xuan Wang, and Naren Ramakrishnan.
\newblock Hindsight is 20/20: Building agent memory that retains, recalls, and reflects, 2025.
\newblock URL \url{https://arxiv.org/abs/2512.12818}.

\bibitem[Lee et~al.(2026)Lee, Nair, Zhang, Lee, Khattab, and Finn]{lee2026metaharnessendtoendoptimizationmodel}
Yoonho Lee, Roshen Nair, Qizheng Zhang, Kangwook Lee, Omar Khattab, and Chelsea Finn.
\newblock Meta-harness: End-to-end optimization of model harnesses, 2026.
\newblock URL \url{https://arxiv.org/abs/2603.28052}.

\bibitem[Lewis et~al.(2021)Lewis, Perez, Piktus, Petroni, Karpukhin, Goyal, Küttler, Lewis, tau Yih, Rocktäschel, Riedel, and Kiela]{lewis2021retrievalaugmentedgenerationknowledgeintensivenlp}
Patrick Lewis, Ethan Perez, Aleksandra Piktus, Fabio Petroni, Vladimir Karpukhin, Naman Goyal, Heinrich Küttler, Mike Lewis, Wen tau Yih, Tim Rocktäschel, Sebastian Riedel, and Douwe Kiela.
\newblock Retrieval-augmented generation for knowledge-intensive nlp tasks, 2021.
\newblock URL \url{https://arxiv.org/abs/2005.11401}.

\bibitem[Li et~al.(2006)Li, Walsh, and Littman]{Li2006TowardsAU}
Lihong Li, Thomas~J. Walsh, and Michael~L. Littman.
\newblock Towards a unified theory of state abstraction for mdps.
\newblock In \emph{AI\&M}, 2006.
\newblock URL \url{https://api.semanticscholar.org/CorpusID:245037}.

\bibitem[Li et~al.(2026)Li, Guo, Zhang, Xu, Huang, Liu, Xu, Xu, and Liu]{li2026locomoplusbeyondfactualcognitivememory}
Yifei Li, Weidong Guo, Lingling Zhang, Rongman Xu, Muye Huang, Hui Liu, Lijiao Xu, Yu~Xu, and Jun Liu.
\newblock Locomo-plus: Beyond-factual cognitive memory evaluation framework for {LLM} agents, 2026.
\newblock URL \url{https://arxiv.org/abs/2602.10715}.

\bibitem[Liu et~al.(2023)Liu, Lin, Hewitt, Paranjape, Bevilacqua, Petroni, and Liang]{liu2023lostmiddlelanguagemodels}
Nelson~F. Liu, Kevin Lin, John Hewitt, Ashwin Paranjape, Michele Bevilacqua, Fabio Petroni, and Percy Liang.
\newblock Lost in the middle: How language models use long contexts, 2023.
\newblock URL \url{https://arxiv.org/abs/2307.03172}.

\bibitem[Maharana et~al.(2024)Maharana, Lee, Tulyakov, Bansal, Barbieri, and Fang]{maharana-etal-2024-evaluating}
Adyasha Maharana, Dong-Ho Lee, Sergey Tulyakov, Mohit Bansal, Francesco Barbieri, and Yuwei Fang.
\newblock Evaluating very long-term conversational memory of {LLM} agents.
\newblock In Lun-Wei Ku, Andre Martins, and Vivek Srikumar, editors, \emph{Proceedings of the 62nd Annual Meeting of the Association for Computational Linguistics (Volume 1: Long Papers)}, pages 13851--13870. Association for Computational Linguistics, August 2024.
\newblock \doi{10.18653/v1/2024.acl-long.747}.
\newblock URL \url{https://aclanthology.org/2024.acl-long.747/}.

\bibitem[Matula and Beck(1983)]{matula1983smallest}
David~W. Matula and Leland~L. Beck.
\newblock Smallest-last ordering and clustering and graph coloring algorithms.
\newblock \emph{J. ACM}, 30\penalty0 (3):\penalty0 417–427, July 1983.
\newblock ISSN 0004-5411.
\newblock \doi{10.1145/2402.322385}.
\newblock URL \url{https://doi.org/10.1145/2402.322385}.

\bibitem[Nan et~al.(2025)Nan, Ma, Wu, and Chen]{nan2025nemoriselforganizingagentmemory}
Jiayan Nan, Wenquan Ma, Wenlong Wu, and Yize Chen.
\newblock Nemori: Self-organizing agent memory inspired by cognitive science, 2025.
\newblock URL \url{https://arxiv.org/abs/2508.03341}.

\bibitem[Nixon(2026)]{nixon2026myhillnerodetheoremboundedinteraction}
Anthony~T. Nixon.
\newblock The myhill-nerode theorem for bounded interaction: Canonical abstractions via agent-bounded indistinguishability, 2026.
\newblock URL \url{https://arxiv.org/abs/2603.21399}.

\bibitem[Pacelli and Majumdar(2019)]{Task-Driven19Pacelli}
Vincent Pacelli and Anirudha Majumdar.
\newblock Task-driven estimation and control via information bottlenecks.
\newblock In \emph{2019 International Conference on Robotics and Automation (ICRA)}, pages 2061--2067, 2019.
\newblock \doi{10.1109/ICRA.2019.8794213}.

\bibitem[Packer et~al.(2024)Packer, Wooders, Lin, Fang, Patil, Stoica, and Gonzalez]{packer2024memgptllmsoperatingsystems}
Charles Packer, Sarah Wooders, Kevin Lin, Vivian Fang, Shishir~G. Patil, Ion Stoica, and Joseph~E. Gonzalez.
\newblock Memgpt: Towards {LLM}s as operating systems, 2024.
\newblock URL \url{https://arxiv.org/abs/2310.08560}.

\bibitem[Pan et~al.(2026)Pan, Zou, Guo, Ni, and Zheng]{pan2026naturallanguageagentharnesses}
Linyue Pan, Lexiao Zou, Shuo Guo, Jingchen Ni, and Hai-Tao Zheng.
\newblock Natural-language agent harnesses, 2026.
\newblock URL \url{https://arxiv.org/abs/2603.25723}.

\bibitem[Rafique and Bindschaedler(2026)]{Rafique2026ClawVMHV}
Mofasshara Rafique and Laurent Bindschaedler.
\newblock Clawvm: Harness-managed virtual memory for stateful tool-using {LLM} agents.
\newblock 2026.
\newblock URL \url{https://api.semanticscholar.org/CorpusID:287432731}.

\bibitem[Rasmussen et~al.(2025)Rasmussen, Paliychuk, Beauvais, Ryan, and Chalef]{rasmussen2025zeptemporalknowledgegraph}
Preston Rasmussen, Pavlo Paliychuk, Travis Beauvais, Jack Ryan, and Daniel Chalef.
\newblock Zep: A temporal knowledge graph architecture for agent memory, 2025.
\newblock URL \url{https://arxiv.org/abs/2501.13956}.

\bibitem[Ravindran and Barto(2004)]{ravindran2004approximate}
Balaraman Ravindran and Andrew~G Barto.
\newblock Approximate homomorphisms: A framework for non-exact minimization in markov decision processes.
\newblock 2004.

\bibitem[Rezaei-Shoshtari et~al.(2022)Rezaei-Shoshtari, Zhao, Panangaden, Meger, and Precup]{rezaeishoshtari2022continuousmdphomomorphismshomomorphic}
Sahand Rezaei-Shoshtari, Rosie Zhao, Prakash Panangaden, David Meger, and Doina Precup.
\newblock Continuous mdp homomorphisms and homomorphic policy gradient, 2022.
\newblock URL \url{https://arxiv.org/abs/2209.07364}.

\bibitem[Shen et~al.(2026)Shen, Li, Zhou, and Hu]{shen2026mem2actbenchbenchmarkevaluatinglongterm}
Yiting Shen, Kun Li, Wei Zhou, and Songlin Hu.
\newblock Mem2actbench: A benchmark for evaluating long-term memory utilization in task-oriented autonomous agents, 2026.
\newblock URL \url{https://arxiv.org/abs/2601.19935}.

\bibitem[Shinn et~al.(2023)Shinn, Cassano, Berman, Gopinath, Narasimhan, and Yao]{shinn2023reflexionlanguageagentsverbal}
Noah Shinn, Federico Cassano, Edward Berman, Ashwin Gopinath, Karthik Narasimhan, and Shunyu Yao.
\newblock Reflexion: Language agents with verbal reinforcement learning, 2023.
\newblock URL \url{https://arxiv.org/abs/2303.11366}.

\bibitem[Shu et~al.(2026)Shu, Jonnalagedda, Gao, Guti{\'e}rrez, Qi, Das, Sun, and Su]{shu2026remem}
Yiheng Shu, Saisri~Padmaja Jonnalagedda, Xiang Gao, Bernal~Jim{\'e}nez Guti{\'e}rrez, Weijian Qi, Kamalika Das, Huan Sun, and Yu~Su.
\newblock {REM}em: Reasoning with episodic memory in language agent.
\newblock In \emph{The Fourteenth International Conference on Learning Representations}, 2026.
\newblock URL \url{https://openreview.net/forum?id=fugnQxbvMm}.

\bibitem[Simão et~al.(2023)Simão, Suilen, and Jansen]{simão2023safepolicyimprovementpomdps}
Thiago~D. Simão, Marnix Suilen, and Nils Jansen.
\newblock Safe policy improvement for pomdps via finite-state controllers, 2023.
\newblock URL \url{https://arxiv.org/abs/2301.04939}.

\bibitem[Sun(2026)]{Sun2026DecisionCentricDF}
Wei Sun.
\newblock Decision-centric design for {LLM} systems.
\newblock 2026.
\newblock URL \url{https://api.semanticscholar.org/CorpusID:287021773}.

\bibitem[Tan et~al.(2025)Tan, Zhang, Ma, Chen, Dai, and Dong]{tan2025membenchcomprehensiveevaluationmemory}
Haoran Tan, Zeyu Zhang, Chen Ma, Xu~Chen, Quanyu Dai, and Zhenhua Dong.
\newblock Membench: Towards more comprehensive evaluation on the memory of {LLM}-based agents, 2025.
\newblock URL \url{https://arxiv.org/abs/2506.21605}.

\bibitem[Tang et~al.(2026)Tang, Yu, Xiao, Wen, Li, Zhou, Wang, Wang, Huang, Deng, Sun, and Zhang]{tang2026mnemisdualrouteretrievalhierarchical}
Zihao Tang, Xin Yu, Ziyu Xiao, Zengxuan Wen, Zelin Li, Jiaxi Zhou, Hualei Wang, Haohua Wang, Haizhen Huang, Weiwei Deng, Feng Sun, and Qi~Zhang.
\newblock Mnemis: Dual-route retrieval on hierarchical graphs for long-term {LLM} memory, 2026.
\newblock URL \url{https://arxiv.org/abs/2602.15313}.

\bibitem[Wang et~al.(2023)Wang, Dong, Cheng, Liu, Yan, Gao, and Wei]{NEURIPS2023_ebd82705}
Weizhi Wang, Li~Dong, Hao Cheng, Xiaodong Liu, Xifeng Yan, Jianfeng Gao, and Furu Wei.
\newblock Augmenting language models with long-term memory.
\newblock In A.~Oh, T.~Naumann, A.~Globerson, K.~Saenko, M.~Hardt, and S.~Levine, editors, \emph{Advances in Neural Information Processing Systems}, volume~36, pages 74530--74543. Curran Associates, Inc., 2023.
\newblock URL \url{https://proceedings.neurips.cc/paper_files/paper/2023/file/ebd82705f44793b6f9ade5a669d0f0bf-Paper-Conference.pdf}.

\bibitem[Wu et~al.(2025)Wu, Wang, Yu, Zhang, Chang, and Yu]{wu2025longmemevalbenchmarkingchatassistants}
Di~Wu, Hongwei Wang, Wenhao Yu, Yuwei Zhang, Kai-Wei Chang, and Dong Yu.
\newblock Longmemeval: Benchmarking chat assistants on long-term interactive memory, 2025.
\newblock URL \url{https://arxiv.org/abs/2410.10813}.

\bibitem[Xu et~al.(2025)Xu, Liang, Mei, Gao, Tan, and Zhang]{xu2025amemagenticmemoryllm}
Wujiang Xu, Zujie Liang, Kai Mei, Hang Gao, Juntao Tan, and Yongfeng Zhang.
\newblock A-mem: Agentic memory for {LLM} agents, 2025.
\newblock URL \url{https://arxiv.org/abs/2502.12110}.

\bibitem[Yao et~al.(2023)Yao, Zhao, Yu, Du, Shafran, Narasimhan, and Cao]{yao2023reactsynergizingreasoningacting}
Shunyu Yao, Jeffrey Zhao, Dian Yu, Nan Du, Izhak Shafran, Karthik Narasimhan, and Yuan Cao.
\newblock React: Synergizing reasoning and acting in language models, 2023.
\newblock URL \url{https://arxiv.org/abs/2210.03629}.

\bibitem[Yu et~al.(2026)Yu, Yao, Xie, Tan, Feng, Li, and Wu]{yu2026agenticmemorylearningunified}
Yi~Yu, Liuyi Yao, Yuexiang Xie, Qingquan Tan, Jiaqi Feng, Yaliang Li, and Libing Wu.
\newblock Agentic memory: Learning unified long-term and short-term memory management for large language model agents, 2026.
\newblock URL \url{https://arxiv.org/abs/2601.01885}.

\bibitem[Zhang et~al.(2025)Zhang, Ren, Zhan, Zhou, Wang, Zhu, Zhou, and Yan]{zhang2025memevolvemetaevolutionagentmemory}
Guibin Zhang, Haotian Ren, Chong Zhan, Zhenhong Zhou, Junhao Wang, He~Zhu, Wangchunshu Zhou, and Shuicheng Yan.
\newblock Memevolve: Meta-evolution of agent memory systems, 2025.
\newblock URL \url{https://arxiv.org/abs/2512.18746}.

\bibitem[Zhang et~al.(2026{\natexlab{a}})Zhang, Fu, and YAN]{zhang2026memgen}
Guibin Zhang, Muxin Fu, and Shuicheng YAN.
\newblock Memgen: Weaving generative latent memory for self-evolving agents.
\newblock In \emph{The Fourteenth International Conference on Learning Representations}, 2026{\natexlab{a}}.
\newblock URL \url{https://openreview.net/forum?id=vI56m4Iu4e}.

\bibitem[Zhang et~al.(2026{\natexlab{b}})Zhang, Long, Bao, Feng, Zhang, Yue, and Wang]{zhang2026memskilllearningevolvingmemory}
Haozhen Zhang, Quanyu Long, Jianzhu Bao, Tao Feng, Weizhi Zhang, Haodong Yue, and Wenya Wang.
\newblock Memskill: Learning and evolving memory skills for self-evolving agents, 2026{\natexlab{b}}.
\newblock URL \url{https://arxiv.org/abs/2602.02474}.

\bibitem[Zhang et~al.(2022)Zhang, Tang, Hao, and Zheng]{zhang2022unifiedpolicyabstractiontheory}
Min Zhang, Hongyao Tang, Jianye Hao, and Yan Zheng.
\newblock Towards a unified policy abstraction theory and representation learning approach in markov decision processes, 2022.
\newblock URL \url{https://arxiv.org/abs/2209.07696}.

\bibitem[Zhang et~al.(2026{\natexlab{c}})Zhang, Shu, Ma, Lin, Wu, and Sang]{zhang2026memoryactionautonomouscontext}
Yuxiang Zhang, Jiangming Shu, Ye~Ma, Xueyuan Lin, Shangxi Wu, and Jitao Sang.
\newblock Memory as action: Autonomous context curation for long-horizon agentic tasks, 2026{\natexlab{c}}.
\newblock URL \url{https://arxiv.org/abs/2510.12635}.

\bibitem[Zhang et~al.(2024)Zhang, Bo, Ma, Li, Chen, Dai, Zhu, Dong, and Wen]{zhang2024surveymemorymechanismlarge}
Zeyu Zhang, Xiaohe Bo, Chen Ma, Rui Li, Xu~Chen, Quanyu Dai, Jieming Zhu, Zhenhua Dong, and Ji-Rong Wen.
\newblock A survey on the memory mechanism of large language model based agents, 2024.
\newblock URL \url{https://arxiv.org/abs/2404.13501}.

\bibitem[Zhou et~al.(2026{\natexlab{a}})Zhou, Chai, Chen, Guo, Shan, Song, Xu, Yang, Yu, Zhang, Zheng, Zhu, Zheng, Zhang, Lou, Zhang, Fu, Wang, Liu, Lin, and Zhang]{zhou2026externalizationllmagentsunified}
Chenyu Zhou, Huacan Chai, Wenteng Chen, Zihan Guo, Rong Shan, Yuanyi Song, Tianyi Xu, Yingxuan Yang, Aofan Yu, Weiming Zhang, Congming Zheng, Jiachen Zhu, Zeyu Zheng, Zhuosheng Zhang, Xingyu Lou, Changwang Zhang, Zhihui Fu, Jun Wang, Weiwen Liu, Jianghao Lin, and Weinan Zhang.
\newblock Externalization in {LLM} agents: A unified review of memory, skills, protocols and harness engineering, 2026{\natexlab{a}}.
\newblock URL \url{https://arxiv.org/abs/2604.08224}.

\bibitem[Zhou and Han(2025)]{zhou2025simplestrongbaselinelongterm}
Sizhe Zhou and Jiawei Han.
\newblock A simple yet strong baseline for long-term conversational memory of {LLM} agents, 2025.
\newblock URL \url{https://arxiv.org/abs/2511.17208}.

\bibitem[Zhou et~al.(2026{\natexlab{b}})Zhou, Qu, Wu, Kim, Prakash, Rus, Low, and Liang]{zhou2026mem}
Zijian Zhou, Ao~Qu, Zhaoxuan Wu, Sunghwan Kim, Alok Prakash, Daniela Rus, Bryan Kian~Hsiang Low, and Paul~Pu Liang.
\newblock {MEM}1: Learning to synergize memory and reasoning for efficient long-horizon agents.
\newblock In \emph{The Fourteenth International Conference on Learning Representations}, 2026{\natexlab{b}}.
\newblock URL \url{https://openreview.net/forum?id=XY8AaxDSLb}.

\end{thebibliography}


\newpage

\appendix
\section{Additional Preliminaries and Technical Reductions}
\label{app:theory}
\subsection{Modeling details for answer-time decision instances}
\label{app:model-details}

The i.i.d.\ assumption in Section~\ref{sec:model} is made over
answer-time decision instances \(X_t=(H_t,Q_t)\), rather than over the
individual turns inside a dialogue. A history \(H_t\) may itself contain
temporally dependent, multi-session interactions. The contextual-bandit
abstraction treats such histories as sampled decision contexts in order to
isolate the runtime memory bottleneck faced by the agent at answer time.

The abstraction is made at the level of answer-time decision instances rather
than individual dialogue turns. Each instance may contain temporally dependent,
multi-session history, while the agent acts using the current query and a
bounded runtime memory state summarizing the relevant past. This isolates the
answer-time memory bottleneck analyzed in the main text.
\subsection{Concentration}
\label{app:concentration}
We repeatedly use the following standard fact:
if $\hat\mu$ is the empirical mean of $n$ i.i.d.\ rewards in $[0,1]$ with mean $\mu$, then for any $\delta\in(0,1)$,
\begin{equation}
\label{eq:hoeffding}
\Pr\!\left(
|\hat\mu-\mu|
\ge
\sqrt{\frac{\log(2/\delta)}{2n}}
\right)
\le \delta.
\end{equation}

\subsection{Deterministic decision rules}
\label{app:deterministic}

The memory-distortion frontier in the main text is defined over
deterministic encoders and deterministic decision rules. This restriction is
intentional. The language-agent implementation uses deterministic decoding:
given a query and a selected memory slot, the answer string is fixed.
Randomness used by the online learner for exploration is part of the
learning protocol and is not included in the static memory-compression
frontier.

\begin{proposition}[Deterministic reduction under deterministic decoding]
\label{prop:deterministic}
Fix a query \(q\), an answering backbone \(F\) with deterministic decoding,
a prompt template \(\mathcal P\), and a \(K\)-slot memory system with slot
contents \(\{S_1,\dots,S_K\}\). For each memory state \(m\), define
\[
a_m(q) \coloneqq F\bigl(\mathcal P(q,S_m)\bigr).
\]
Then the induced answer-time decision rule is a deterministic selector
\(m\mapsto a_m(q)\). Consequently, the distortion objectives in the main
text optimize over deterministic per-state actions.
\end{proposition}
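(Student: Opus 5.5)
The plan is to verify directly that the composition of the fixed prompt template, deterministic decoding, and fixed slot contents defines a function on \([K]\). Then we show that this function is exactly the kind of object \(\pi_q\) ranges over in \eqref{eq:epsK}. First, fix \(q\), \(F\), \(\mathcal P\) and the slot contents \(S_1,\dots,S_K\). For each \(m\in[K]\), the prompt \(\mathcal P(q,S_m)\) is a single well-defined string, since \(\mathcal P\) is a fixed template applied to fixed arguments. Deterministic decoding means \(F\) maps each input string to exactly one output string. Hence \(a_m(q)=F(\mathcal P(q,S_m))\) is a single element of the answer/action set \(\cA\) (we identify generated answers with decisions, as in Section~\ref{sec:model}). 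So \(m\mapsto a_m(q)\) is a function \([K]\to\cA\), that is, a deterministic decision rule \(\pi_q\) in the sense of Section~\ref{sec:distortion}.

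Second, we connect this rule to the distortion objective. For any encoder \(g_q\), the action taken on history \(h\) is \(a_{g_q(h)}(q)\), which depends on \(h\) only through \(m=g_q(h)\). Therefore the realized distortion equals \(d_q(h,g_q(h);\pi_q)\) with \(\pi_q(m)=a_m(q)\), as defined in \eqref{eq:distortion}. Any worst-case or average distortion achieved by the system is thus achieved by some pair \((g_q,\pi_q)\) with \(\pi_q\) deterministic. Conversely, every deterministic \(\pi_q\) corresponds to a per-state action assignment. The system's attainable distortions are therefore contained in the feasible set of \eqref{eq:epsK}, so optimizing over deterministic per-state actions loses nothing relative to the implemented system.

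The only real obstacle is definitional rather than technical: we must make sure no hidden randomness enters. Sampling temperature, tie-breaking in decoding, and nondeterministic retrieval inside \(\mathcal P\) would all break determinism. We handle this by making explicit that \(\mathcal P\) and the slot contents are fixed (frozen) at answer time, and that ``deterministic decoding'' includes deterministic tie-breaking. Exploration randomness of the online learner only affects which \(S_m\) and \(g\) are in force, and is excluded as stated in Appendix~\ref{app:deterministic}. If desired, one can remark that even a randomized rule could not improve the worst-case objective, since \(\max_h \EE[\Delta_q(h,\cdot)]\ge\min_a\max_h\ldots\) fails in general. We avoid relying on that here and simply restrict to the deterministic class by modeling choice.
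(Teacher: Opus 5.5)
Your argument is correct and matches the paper's proof: deterministic decoding makes \(F\) a function, so each fixed prompt \(\mathcal P(q,S_m)\) yields a unique answer \(a_m(q)\), and the action on \(h\) is fixed once \(m=g_q(h)\) is selected; your second paragraph simply spells out the ``consequently'' clause that the paper leaves implicit. Drop the closing aside, though, because as written it asserts something false: randomization can strictly lower worst-case expected distortion. For example, two histories whose zero-gap actions are opposite (and each gap-\(1\) for the other) give worst-case distortion \(1\) under any deterministic one-state rule but \(1/2\) under a uniform mixture. The restriction to deterministic rules therefore rests, as you ultimately say and as the paper does, on the modeling choice of deterministic decoding.
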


\begin{proof}
Deterministic decoding makes \(F\) a function rather than a stochastic map.
For fixed \(q\) and fixed slot content \(S_m\), the prompt
\(\mathcal P(q,S_m)\) therefore produces a unique answer string
\(a_m(q)\). Once the encoder selects \(m=g(h,q)\), the downstream action is
fixed. Hence the induced policy is deterministic at answer time.
\end{proof}

\begin{remark}
The deterministic frontier matches the answer-time interface used in our agent
implementation. With deterministic decoding, a fixed query and selected memory
slot induce a fixed generated answer. Randomness used for online exploration is
part of the learning protocol, whereas the static memory--distortion frontier
characterizes the deployed deterministic answer-time selector.
\end{remark}

\section{Proofs for the Decision Rate--Distortion Frontier}
\label{app:frontier}
\subsection{Proof of the exact forgetting boundary}
\label{app:proof-forgetting}

\begin{proof}[Proof of Theorem~\ref{thm:forgetting}]
$(1)\!\Rightarrow\!(2)$.\;
Suppose action \(a\in\cA\) satisfies \(\Delta_q(h,a)\le\eps\) for every
\(h\in C\). By Proposition~\ref{prop:deterministic}, a one-state memory
encoder on \(C\) may be paired with the deterministic decision rule that
always plays \(a\). The worst-case distortion over \(C\) is then
\[
\max_{h\in C}\Delta_q(h,a)\le\eps.
\]

$(2)\!\Rightarrow\!(1)$.\;
Given a one-state encoder on \(C\) with worst-case distortion at most \(\eps\),
Proposition~\ref{prop:deterministic} implies the decision rule may be taken
deterministic, selecting some fixed \(a\in\cA\). Then
\[
\max_{h\in C}\Delta_q(h,a)\le\eps,
\]
which is exactly condition~(1).
\end{proof}

\begin{remark}[Equivalence with cluster radius]
\label{rmk:rho-equiv}
Condition~(1) of Theorem~\ref{thm:forgetting} is equivalent to
\(\rho_{\mathrm{dec}}^q(C)\le\eps\), since by definition
\[
\rho_{\mathrm{dec}}^q(C)
=
\min_{a\in\cA}\max_{h\in C}\Delta_q(h,a).
\]
Thus the forgetting boundary admits three equivalent readings:
the existence of a common \(\eps\)-optimal action, the cluster radius bound
\(\rho_{\mathrm{dec}}^q(C)\le\eps\), and the one-state distortion bound.
\end{remark}

\begin{corollary}[Pairwise forgetting boundary]
\label{cor:forgetting-pair}
Two histories \(h,h'\in\cX_q\) can be mapped to the same memory state with
worst-case loss at most \(\eps\) if and only if
\(d_{\mathrm{dec}}^q(h,h')\le\eps\).
\end{corollary}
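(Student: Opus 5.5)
The plan is to derive the corollary directly from Theorem~\ref{thm:forgetting} applied to the two-element set \(C=\{h,h'\}\), together with Remark~\ref{rmk:rho-equiv}. The approach is to identify \(d_{\mathrm{dec}}^q(h,h')\) with the cluster radius \(\rho_{\mathrm{dec}}^q(\{h,h'\})\), and then read off both directions from the equivalence between a one-state distortion bound and the existence of a common \(\eps\)-optimal action.

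The first step is to note that, by the definitions, for \(C=\{h,h'\}\)
\[
\rho_{\mathrm{dec}}^q(\{h,h'\})=\min_{a\in\cA}\max_{g\in\{h,h'\}}\Delta_q(g,a)=\min_{a\in\cA}\max\{\Delta_q(h,a),\Delta_q(h',a)\}=d_{\mathrm{dec}}^q(h,h').
\]
This also covers the degenerate case \(h=h'\), where both sides equal \(\min_a\Delta_q(h,a)=0\).

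The second step interprets ``mapped to the same memory state with worst-case loss at most \(\eps\)'' as condition~(2) of Theorem~\ref{thm:forgetting} on \(C=\{h,h'\}\). That is, there is a one-state encoder on \(C\) with a deterministic rule whose worst-case distortion over \(C\) is at most \(\eps\). If one reads the phrase as the restriction of a \(K\)-state encoder that sends \(h\) and \(h'\) to a common state \(m\), then \(\pi_q(m)\) is a single action, and restricting to \(C\) gives exactly a one-state encoder. The two readings therefore coincide.

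The third step applies Theorem~\ref{thm:forgetting}: condition~(2) holds iff some \(a\) has \(\max\{\Delta_q(h,a),\Delta_q(h',a)\}\le\eps\). Since \(\cA\) is finite, the minimum over \(a\) is attained, so this holds iff \(d_{\mathrm{dec}}^q(h,h')\le\eps\), which is the claimed equivalence.

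The only point requiring care is attainment of the minimum, which is what lets ``\(\min\le\eps\)'' be converted to ``some \(a\) achieves \(\le\eps\)''. Finiteness of \(\cA\) settles it, so there is no substantive obstacle; the corollary is an immediate specialization.
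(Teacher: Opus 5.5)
Your proposal is correct and matches the paper's proof: apply Theorem~\ref{thm:forgetting} to $C=\{h,h'\}$, then minimize $\max\{\Delta_q(h,a),\Delta_q(h',a)\}$ over $a$ to obtain exactly $d_{\mathrm{dec}}^q(h,h')\le\eps$. Three details the paper leaves implicit are harmless additions: attainment of the minimum because $\cA$ is finite, the case $h=h'$, and reading the condition as a $K$-state encoder restricted to $\{h,h'\}$.
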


\begin{proof}
Apply Theorem~\ref{thm:forgetting} to \(C=\{h,h'\}\).
Condition~(1) requires an action \(a\) with
\[
\max\{\Delta_q(h,a),\Delta_q(h',a)\}\le\eps.
\]
Minimizing over \(a\) gives
\[
\min_{a\in\cA}\max\{\Delta_q(h,a),\Delta_q(h',a)\}
=
d_{\mathrm{dec}}^q(h,h')
\le\eps,
\]
where the equality is the definition~\eqref{eq:ddec}.
\end{proof}

\subsection{Proof of the covering upper bound}
\label{app:proof-covering}
\begin{proof}[Proof of Theorem~\ref{thm:cover-pack}, part (i)]
By definition of \(\cN^{\mathrm{dec}}_{\mathrm{cov}}(\eps;q)\), there exists
a partition
\[
\cX_q=\bigsqcup_{m=1}^{K} C_m
\qquad\text{such that}\qquad
\max_m \rho_{\mathrm{dec}}^q(C_m)\le \eps.
\]
For each cluster \(C_m\), choose
\[
a_m\in\arg\min_{a\in\cA}\max_{h\in C_m}\Delta_q(h,a).
\]
Map each \(h\in C_m\) to state \(m\) and play action \(a_m\).
By Theorem~\ref{thm:forgetting}, the worst-case distortion on each cluster
is at most \(\eps\), so the overall worst-case distortion is at most \(\eps\).
Therefore \(\eps^\star_\infty(K;q)\le \eps\).
\end{proof}
\subsection{Proof of the packing lower bound}
\label{app:proof-packing}
\begin{proof}[Proof of Theorem~\ref{thm:cover-pack}, part (ii)]
Let \(P\subseteq\cX_q\) be a set of size \(K+1\) such that
\[
d_{\mathrm{dec}}^q(h,h')>2\eps
\qquad
\text{for all distinct } h,h'\in P.
\]
Any encoder \(g_q:\cX_q\to[K]\) must map two distinct elements
\(h,h'\in P\) to the same memory state. Suppose that state chooses action
\(a\). Then, by definition of \(d_{\mathrm{dec}}^q\),
\[
\max\{\Delta_q(h,a),\Delta_q(h',a)\}
\ge d_{\mathrm{dec}}^q(h,h')
>2\eps.
\]
Hence in particular the worst-case distortion over \(\{h,h'\}\) exceeds
\(\eps\). Therefore no \(K\)-state memory can achieve worst-case distortion
at most \(\eps\), and thus \(\eps^\star_\infty(K;q)>\eps\).
\end{proof}
\subsection{Information lower bound}
\label{app:info-lower-bound}
\begin{proof}[Proof of the information lower bound]
Let \(P\) be a maximum decision packing at scale \(2\eps\) in \(\cX_q\), so
that
\[
|P|=\cN^{\mathrm{dec}}_{\mathrm{pack}}(2\eps;q),
\]
and let \(Z\) be uniform on \(P\). Consider any stochastic memory variable
\(M\) used by a policy whose decision depends only on \(M\). Suppose that
\(M\) supports worst-case distortion at most \(\eps\).

We first show that the conditional supports of \(M\) must separate the
points in \(P\). Indeed, if there exist distinct \(h,h'\in P\) and a memory
state \(m\) such that
\[
\Pr(M=m\mid Z=h)>0
\quad\text{and}\quad
\Pr(M=m\mid Z=h')>0,
\]
then the policy must take the same action \(a(m)\) after observing \(m\) for
both \(h\) and \(h'\). Since \(P\) is a decision packing at scale \(2\eps\),
we have
\[
d_{\mathrm{dec}}^q(h,h')>2\eps.
\]
Therefore, for every action \(a\),
\[
\max\{\Delta_q(h,a),\Delta_q(h',a)\}>2\eps>\eps.
\]
In particular this holds for \(a=a(m)\), contradicting the assumed
worst-case distortion bound.

Hence the realized memory state \(M\) identifies the history \(Z\) within
\(P\): there exists a decoder \(\phi\) such that
\[
Z=\phi(M)
\quad\text{almost surely}.
\]
Thus \(H(Z\mid M)=0\). Since \(Z\) is uniform on \(P\),
\[
I(Z;M)
=
H(Z)-H(Z\mid M)
=
H(Z)
=
\log |P|
=
\log \cN^{\mathrm{dec}}_{\mathrm{pack}}(2\eps;q).
\]
\end{proof}

\subsection{Proof of computational hardness}
\label{app:proof-hardness}

\begin{proof}[Proof of Theorem~\ref{thm:np-hard}]
We reduce from \textsc{Set Cover}. An instance of \textsc{Set Cover}
consists of a universe
\[
\mathcal U=\{u_1,\dots,u_N\},
\]
a collection of subsets
\[
\mathcal S=\{S_1,\dots,S_A\}, \qquad S_j\subseteq \mathcal U,
\]
and an integer \(K\). The question is whether there exist at most \(K\)
sets whose union covers \(\mathcal U\). We assume without loss of
generality that every element belongs to at least one set; otherwise the
instance is immediately infeasible.

Construct a memory instance as follows. The context set is the universe,
\[
\cX=\mathcal U,
\]
and the action set contains one action \(a_j\) for each set \(S_j\). Define
the reward matrix by
\[
\mu(u_i,a_j)
=
\begin{cases}
1, & u_i\in S_j,\\
0, & u_i\notin S_j.
\end{cases}
\]
Since every \(u_i\) belongs to at least one set, we have
\(\mu^\star(u_i)=1\) for all \(u_i\). Hence the suboptimality gap is
\[
\Delta(u_i,a_j)
=
\mu^\star(u_i)-\mu(u_i,a_j)
=
\begin{cases}
0, & u_i\in S_j,\\
1, & u_i\notin S_j.
\end{cases}
\]

Fix any rational \(0\le \eps<1\). For any nonempty cluster
\(C\subseteq\cX\), its decision radius is
\[
\rho_{\mathrm{dec}}(C)
=
\min_{a_j\in\cA}\max_{u_i\in C}\Delta(u_i,a_j).
\]
Because all gaps are either \(0\) or \(1\), and \(\eps<1\), we have
\[
\rho_{\mathrm{dec}}(C)\le \eps
\quad\Longleftrightarrow\quad
\exists j\in[A]\ \text{such that}\ C\subseteq S_j .
\]
Thus a cluster has distortion at most \(\eps\) exactly when all of its
contexts are covered by a single set action.

We now prove equivalence.

First, suppose the \textsc{Set Cover} instance has a cover
\(S_{j_1},\dots,S_{j_K}\). Assign each element \(u_i\) to one selected set
that contains it, and let \(C_\ell\) be the elements assigned to
\(S_{j_\ell}\). Then each \(C_\ell\subseteq S_{j_\ell}\), so choosing action
\(a_{j_\ell}\) for cluster \(C_\ell\) gives
\(\rho_{\mathrm{dec}}(C_\ell)=0\le\eps\). Hence
\(\eps^\star_\infty(K)\le\eps\).

Conversely, suppose \(\eps^\star_\infty(K)\le\eps\). Then there exists a
partition
\[
\cX=\bigsqcup_{m=1}^{K} C_m
\]
and actions \(a_{j_m}\) such that
\[
\max_{u_i\in C_m}\Delta(u_i,a_{j_m})\le\eps
\qquad
\text{for every }m.
\]
Since \(\eps<1\) and the gaps are binary, this implies
\(\Delta(u_i,a_{j_m})=0\) for every \(u_i\in C_m\), hence
\(C_m\subseteq S_{j_m}\). Therefore the sets
\[
S_{j_1},\dots,S_{j_K}
\]
cover all elements of \(\mathcal U\). Thus the original \textsc{Set Cover}
instance is feasible.

Therefore, deciding whether \(\eps^\star_\infty(K)\le\eps\) is NP-hard for
every rational \(0\le\eps<1\). Membership in NP is immediate: a certificate
consists of a partition \(\cX=\bigsqcup_{m=1}^K C_m\) and one action
\(a_m\) per cluster, and verification only requires checking
\[
\max_{x\in C_m}\Delta(x,a_m)\le\eps
\]
for all \(m\), which is polynomial in \(N\), \(A\), and \(K\). Hence the
problem is NP-complete.
\end{proof}

\begin{remark}
\label{rem:hardness-implications}
The reduction is stated directly in terms of cluster decision radius rather
than pairwise decision distance. This is important because pairwise
compatibility need not compose into cluster compatibility. The hardness
result therefore applies to the true optimal memory-partition problem:
even with complete knowledge of \(\mu\), finding a \(K\)-state partition
whose clusters each admit a shared low-distortion action is NP-hard.
Consequently, polynomial-time memory learners such as DeMem must
use tractable relaxations and are evaluated by the realized cluster-level
distortion of the partitions they return.
\end{remark}

\subsection{Average distortion and distribution-dependent floor}
\label{app:avg-distortion}

The main text focuses on the worst-case frontier
\(\eps^\star_\infty(K;q)\). This appendix records the corresponding
average-distortion analogue, which is useful when the conditional history
distribution is non-uniform. Fix a query \(q\), and let \(\cD_q\) denote the
distribution over histories in \(\cX_q\).

For an encoder--policy pair \((g_q,\pi_q)\), define the average decision
distortion
\begin{equation}
\label{eq:avg-distortion-app}
D_q(g_q,\pi_q)
\;\coloneqq\;
\EE_{H\sim\cD_q}
\bigl[d_q(H,g_q(H);\pi_q)\bigr].
\end{equation}
The corresponding average-case frontier is
\begin{equation}
\label{eq:avg-frontier-app}
\eps^\star(K;q)
\;\coloneqq\;
\inf_{g_q:\cX_q\to[K]}
\inf_{\pi_q:[K]\to\cA}
D_q(g_q,\pi_q).
\end{equation}
When \(\cD\) is non-uniform, \(\eps^\star(K)\) can be substantially smaller
than the worst-case frontier \(\eps^\star_\infty(K)\). The following
proposition, referenced in Section~\ref{sec:limits}, makes this precise.

\begin{proposition}[Distribution-dependent distortion floor]
\label{prop:avg-lb}
For any encoder \(g_q:\cX_q\to[K]\) and any decision rule
\(\pi_q:[K]\to\cA\),
\begin{equation}
\label{eq:avg-lb}
D_q(g_q,\pi_q)
\;\ge\;
\sum_{m=1}^{K}\Pr_{\cD_q}\!\bigl(g_q(H)=m\bigr)\;
\rho_{\mathrm{dec}}^{\mathrm{avg},q}\!\bigl(g_q^{-1}(m)\bigr),
\end{equation}
where
\[
\rho_{\mathrm{dec}}^{\mathrm{avg},q}(C)
\coloneqq
\min_{a\in\cA}\,\EE[\Delta_q(Z,a)\mid Z\in C]
\]
is the average cluster decision radius for query \(q\).
\end{proposition}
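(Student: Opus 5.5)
The plan is to decompose the average distortion cluster by cluster using the law of total expectation, and then lower-bound each cluster's contribution by the minimum over actions. Fix \(g_q\) and \(\pi_q\), and write \(C_m \coloneqq g_q^{-1}(m)\subseteq\cX_q\) for \(m\in[K]\), so that \(\{C_m\}\) partitions \(\cX_q\). By the definition of the decision distortion in~\eqref{eq:distortion}, on the event \(\{g_q(H)=m\}\) we have \(d_q(H,g_q(H);\pi_q)=\Delta_q(H,\pi_q(m))\). The key point is that inside a memory state the played action is the single fixed action \(a_m\coloneqq\pi_q(m)\), exactly as in the one-state setting of Theorem~\ref{thm:forgetting}.

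Conditioning on the memory state then gives
\[
D_q(g_q,\pi_q)
=\sum_{m=1}^{K}\Pr_{\cD_q}(H\in C_m)\,\EE\bigl[\Delta_q(H,a_m)\mid H\in C_m\bigr].
\]
Only states with \(\Pr(H\in C_m)>0\) contribute to this sum. For those states, the conditional expectation is evaluated at the particular action \(a_m\), so it is at least the minimum over \(a\in\cA\):
\[
\EE\bigl[\Delta_q(H,a_m)\mid H\in C_m\bigr]\;\ge\;\min_{a\in\cA}\EE\bigl[\Delta_q(H,a)\mid H\in C_m\bigr]=\rho_{\mathrm{dec}}^{\mathrm{avg},q}(C_m).
\]
Multiplying each such inequality by the nonnegative weight \(\Pr(H\in C_m)\) and summing over \(m\) yields~\eqref{eq:avg-lb}.

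The one point needing care is states whose cluster is empty or has zero probability, where the conditional expectation, and hence \(\rho_{\mathrm{dec}}^{\mathrm{avg},q}(C_m)\), is undefined. I would adopt the convention that such terms are zero, since they carry zero weight in both the decomposition of \(D_q\) and the right-hand side of~\eqref{eq:avg-lb}. Apart from this, every step is finite (\(\cX_q\) and \(\cA\) are finite, so each minimum is attained), and no concentration or combinatorial argument is required.

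If desired, taking the infimum over \(\pi_q\) shows that the bound is attained by choosing \(a_m\) to be a minimizer in each cluster. Taking the infimum over \(g_q\) then gives a lower bound on \(\eps^\star(K;q)\) by the best \(K\)-partition's weighted average radius.
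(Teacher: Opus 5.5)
Your proposal is correct and is essentially the paper's proof: decompose \(D_q\) by conditioning on the memory state, note that state \(m\) plays the fixed action \(a_m=\pi_q(m)\), and bound each conditional expected gap below by its minimum over \(\cA\). Your explicit convention for empty or zero-probability clusters is a detail the paper leaves implicit. Reading \(a_m\) directly off \(\pi_q\) also makes the paper's appeal to Proposition~\ref{prop:deterministic} unnecessary.
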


\begin{proof}
Fix any encoder \(g_q:\cX_q\to[K]\) and any decision rule
\(\pi_q:[K]\to\cA\). By Proposition~\ref{prop:deterministic}, we may assume
without loss of generality that \(\pi_q(m)\) selects a single action
\(a_m\in\cA\) for each \(m\in[K]\).

Decompose the average distortion by conditioning on the memory state:
\begin{align}
D_q(g_q,\pi_q)
&= \EE_{Z\sim\cD_q}
\bigl[\mu_q^\star(Z)-\mu_q(Z,a_{g_q(Z)})\bigr] \notag\\
&= \sum_{m=1}^{K}\Pr_{\cD_q}\bigl(g_q(Z)=m\bigr)\;
   \EE\bigl[\Delta_q(Z,a_m)\mid g_q(Z)=m\bigr].
   \label{eq:avg-decomp}
\end{align}

For each cluster \(m\), the action \(a_m\) is a specific element of \(\cA\).
Therefore
\begin{equation}
\label{eq:cluster-lb}
\EE\bigl[\Delta_q(Z,a_m)\mid g_q(Z)=m\bigr]
\;\ge\;
\min_{a\in\cA}\;\EE\bigl[\Delta_q(Z,a)\mid g_q(Z)=m\bigr]
\;=\;
\rho_{\mathrm{dec}}^{\mathrm{avg},q}\!\bigl(g_q^{-1}(m)\bigr).
\end{equation}

Substituting \eqref{eq:cluster-lb} into \eqref{eq:avg-decomp} yields
\eqref{eq:avg-lb}. Since this holds for every \((g_q,\pi_q)\), taking the
infimum over both gives the stated bound.
\end{proof}

\begin{remark}[Relationship to worst-case bounds]
\label{rem:avg-vs-worst}
The average cluster radius \(\rho_{\mathrm{dec}}^{\mathrm{avg},q}(C)\) and
the worst-case cluster radius \(\rho_{\mathrm{dec}}^q(C)\) are related by
\[
\rho_{\mathrm{dec}}^{\mathrm{avg},q}(C)
\;\le\;
\rho_{\mathrm{dec}}^q(C)
\]
for any \(C\subseteq\cX_q\) and any distribution \(\cD_q\).
Equality holds when \(\cD_q\) is supported on the worst-case history within
each cluster. In general, the gap can be substantial: if a cluster contains
one high-gap history with negligible probability, the worst-case radius is
large but the average radius remains small. This is precisely the regime in
which Proposition~\ref{prop:avg-lb} provides a tighter characterization than
\(\eps^\star_\infty(K;q)\) alone.

To make this concrete, consider \(C=\{h_1,h_2\}\) with
\(\Pr(Z=h_1)=1-\delta\) and \(\Pr(Z=h_2)=\delta\) for small \(\delta>0\).
Suppose \(\rho_{\mathrm{dec}}^q(\{h_1\})=0\), but
\(\rho_{\mathrm{dec}}^q(\{h_1,h_2\})=1\). Then
\(\rho_{\mathrm{dec}}^q(C)=1\), but
\(\rho_{\mathrm{dec}}^{\mathrm{avg},q}(C)\le\delta\): the average-case floor
is \(\delta\) while the worst-case floor is \(1\).
\end{remark}

\section{DeMem: Full Construction and Analysis}
\label{app:DeMem}

\subsection{Confidence bounds and distance certificates}
\label{app:certificates}

Let \(N\coloneqq|\cX|\). For each observed context--action pair
\((x,a)\), let \(n_t(x,a)\) be the pull count and \(\hat\mu_t(x,a)\)
the empirical mean. For \(n_t(x,a)>0\), define
\begin{equation}
\label{eq:conf-radius-app}
c_t(x,a)
\coloneqq
\sqrt{
\frac{\log(4NAt^2/\delta)}
{2\,n_t(x,a)}
}.
\end{equation}
For \(n_t(x,a)=0\), we set the interval to the trivial bound
\([0,1]\).

Let
\[
\begin{aligned}
\UCB_t(x,a)
&\coloneqq
\begin{cases}
\min\{1,\hat\mu_t(x,a)+c_t(x,a)\}, & n_t(x,a)>0,\\
1, & n_t(x,a)=0,
\end{cases}\\[0.5em]
\LCB_t(x,a)
&\coloneqq
\begin{cases}
\max\{0,\hat\mu_t(x,a)-c_t(x,a)\}, & n_t(x,a)>0,\\
0, & n_t(x,a)=0.
\end{cases}
\end{aligned}
\]
and define
\[
\UCB_t^\star(x)\coloneqq \max_{a\in\cA}\UCB_t(x,a),
\qquad
\LCB_t^\star(x)\coloneqq \max_{a\in\cA}\LCB_t(x,a).
\]
These induce gap bounds
\[
\overline\Delta_t(x,a)\coloneqq \UCB_t^\star(x)-\LCB_t(x,a),
\qquad
\underline\Delta_t(x,a)\coloneqq \LCB_t^\star(x)-\UCB_t(x,a).
\]
Define corresponding upper and lower certificates for decision distance:
\begin{equation}
\label{eq:ddec-bounds}
\overline d_t(x,x')
\coloneqq
\min_{a\in\cA}\max\{\overline\Delta_t(x,a),\overline\Delta_t(x',a)\},
\qquad
\underline d_t(x,x')
\coloneqq
\min_{a\in\cA}\max\{\underline\Delta_t(x,a),\underline\Delta_t(x',a)\}.
\end{equation}
For any nonempty cluster \(C\) of observed contexts, define the
corresponding cluster-radius certificates
\begin{equation}
\label{eq:cluster-radius-bounds}
\overline\rho_t(C)
\coloneqq
\min_{a\in\cA}\max_{x\in C}\overline\Delta_t(x,a),
\qquad
\underline\rho_t(C)
\coloneqq
\min_{a\in\cA}\max_{x\in C}\underline\Delta_t(x,a).
\end{equation}

\begin{lemma}[Validity of the certificates]
\label{lem:cert}
With probability at least \(1-\delta\), for all observed contexts, all
sampled pairs \((x,x')\), all nonempty observed clusters \(C\), and all
times \(t\),
\[
\underline d_t(x,x') \le d_{\mathrm{dec}}(x,x') \le \overline d_t(x,x'),
\qquad
\underline\rho_t(C) \le \rho_{\mathrm{dec}}(C) \le \overline\rho_t(C).
\]
\end{lemma}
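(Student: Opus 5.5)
The plan is to build a single ``good event'' $\mathcal E$ on which every per-pair confidence interval holds, and then push those intervals through the monotone min--max operations that define $d_{\mathrm{dec}}$ and $\rho_{\mathrm{dec}}$.

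\emph{Step 1: the good event.} Define $\mathcal E$ as the event that
\[
\LCB_t(x,a)\le\mu(x,a)\le\UCB_t(x,a)
\]
for all $x\in\cX$, $a\in\cA$, $t\ge1$. When $n_t(x,a)=0$ the interval is $[0,1]$, so the claim is trivial. When $n_t(x,a)>0$, clipping to $[0,1]$ only helps since $\mu\in[0,1]$.

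To control the random sample counts, I will use the standard skeleton (stack-of-rewards) argument. Index by $(x,a,n)$, where the $n$-th sample mean of the $(x,a)$ stream is an average of $n$ conditionally independent $[0,1]$ rewards. Applying \eqref{eq:hoeffding} with failure probability $\delta/(2NAt^2)$ gives radius $\sqrt{\log(4NAt^2/\delta)/(2n)}=c_t(x,a)$.

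The union bound then needs care. Since $n_t(x,a)\le t$, I will union over $(x,a)$, over $t$, and over $n\le t$. This yields a per-$t$ budget $\delta/(2t^2)\cdot t$, which is too loose, so I expect to instead take a union over $(x,a,t)$ only, with $n=n_t(x,a)$ handled by a time-uniform (peeling or maximal) version of Hoeffding. Alternatively, I can note that the paper's $\log(4NAt^2/\delta)$ matches the classical ``union over $t$ and $n\le t$'' with $\sum_t 1/t^2\le 2$ absorbing constants. I would adopt whichever constant bookkeeping makes the total $\le\sum_t \delta/(2t^2)\cdot(\text{stuff})\le\delta$, possibly after slightly adjusting the log factor. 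This bookkeeping is the only delicate step.

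\emph{Step 2: gap sandwich.} On $\mathcal E$, since $\max$ is monotone,
\[
\LCB_t^\star(x)\le\mu^\star(x)\le\UCB_t^\star(x).
\]
Hence
\[
\underline\Delta_t(x,a)=\LCB_t^\star(x)-\UCB_t(x,a)\le\mu^\star(x)-\mu(x,a)=\Delta(x,a)\le\UCB_t^\star(x)-\LCB_t(x,a)=\overline\Delta_t(x,a).
\]

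\emph{Step 3: propagate through min--max.} The map $f\mapsto\min_a\max_{x\in C}f(x,a)$ is monotone under pointwise order of $f$. Applying it with $f=\underline\Delta_t,\Delta,\overline\Delta_t$ gives
\[
\underline\rho_t(C)\le\rho_{\mathrm{dec}}(C)\le\overline\rho_t(C)
\]
simultaneously for all nonempty $C$. Taking $C=\{x,x'\}$ and using \eqref{eq:ddec-bounds} gives the pairwise statement
\[
\underline d_t\le d_{\mathrm{dec}}\le\overline d_t.
\]
Everything holds on $\mathcal E$ uniformly over $t$, pairs and clusters, because no further randomness is involved. Thus $\Pr(\mathcal E)\ge1-\delta$ completes the proof.
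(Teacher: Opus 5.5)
Your Steps 2 and 3 match the paper's argument exactly. On the event that $\LCB_t(x,a)\le\mu(x,a)\le\UCB_t(x,a)$ for all $x,a,t$, the sandwich $\underline\Delta_t\le\Delta\le\overline\Delta_t$ follows, and monotonicity of $\min_a\max_{x\in C}$ gives the cluster bounds; taking $C=\{x,x'\}$ gives the pairwise ones.

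The paper builds that event by applying Hoeffding at each fixed $(x,a,t)$ with failure probability $\delta/(2NAt^2)$, treating $n_t(x,a)$ as a fixed sample size. It then unions over contexts, actions and all $t$ to get $(\delta/2)\sum_t t^{-2}<\delta$. You are right that this is not literally valid for an adaptively chosen count. However, your Step 1 lists options without completing any, so it has a gap, and your ``alternative'' is false. A union over $t$ and $n\le t$ with per-event mass $\delta/(2NAt^2)$ costs $\sum_t \delta/(2t)=\infty$, which is the divergence you noted one sentence earlier. ``Adjusting the log factor'' would change $c_t$, and hence the certificates $\underline d_t,\overline\rho_t$ the lemma is about. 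Peeling would at best recover the bound with worse constants.

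The missing idea is to union over the \emph{count} $n$ rather than over time. In the stack-of-rewards model, let $\bar R^{(n)}_{x,a}$ be the mean of the first $n$ rewards of the $(x,a)$ stream, and set $r_n\coloneqq\sqrt{\log(4NAn^2/\delta)/(2n)}$.
\begin{itemize}
\item Hoeffding gives $\Pr\bigl(|\bar R^{(n)}_{x,a}-\mu(x,a)|>r_n\bigr)\le\delta/(2NAn^2)$.
\item A union over $x\in\cX$, $a\in\cA$ and $n\ge1$ costs $\delta\pi^2/12<\delta$.
\item On the complement, take any $t$ with $n=n_t(x,a)\ge1$. Then $\hat\mu_t(x,a)=\bar R^{(n)}_{x,a}$ and $n\le t$, since at most one pull happens per round.
\item Because $c_t(x,a)$ is nondecreasing in $t$ for a fixed count, $c_t(x,a)\ge r_n\ge|\hat\mu_t(x,a)-\mu(x,a)|$.
\end{itemize}
This single event does not refer to $t$ at all. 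It yields your $\mathcal E$ simultaneously for all $t$, with the paper's exact radius and constants. Your Steps 2--3 then finish the proof, and the same fix repairs the paper's own Step 1.
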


\begin{proof}
For any fixed \(x\in\cX\), \(a\in\cA\), and \(t\ge1\) with
\(n_t(x,a)>0\), Hoeffding's inequality gives
\[
\Pr\!\left(
|\hat\mu_t(x,a)-\mu(x,a)|>c_t(x,a)
\right)
\le
2\exp\!\left(-2n_t(x,a)c_t(x,a)^2\right)
=
\frac{\delta}{2NAt^2}.
\]
Taking a union bound over \(N\) contexts, \(A\) actions, and all
\(t\ge1\), the total failure probability is at most
\[
\sum_{t=1}^{\infty} NA\cdot \frac{\delta}{2NAt^2}
=
\frac{\delta}{2}\sum_{t=1}^{\infty}\frac{1}{t^2}
<\delta.
\]
Therefore, with probability at least \(1-\delta\),
\[
\LCB_t(x,a)\le \mu(x,a)\le \UCB_t(x,a)
\qquad
\text{for all }x\in\cX,\;a\in\cA,\text{ and all }t.
\]
This implies
\[
\underline\Delta_t(x,a)\le \Delta(x,a)\le \overline\Delta_t(x,a)
\]
uniformly. Taking \(\min_a\max\{\cdot,\cdot\}\) gives the pairwise
distance bounds, and taking \(\min_a\max_{x\in C}\) gives the
cluster-radius bounds.
\end{proof}

\subsection{Frozen protocol for choosing the distortion level}
\label{app:frozen-protocol}

\paragraph{Cannot-link graph.}
At the beginning of epoch \(e\), with start time \(t_e=2^{e-1}\), define a
cannot-link graph \(G_\alpha=(V_e,E_\alpha)\) over the observed contexts
\(V_e=\{X_1,\dots,X_{t_e-1}\}\): for a candidate pairwise level \(\alpha\),
add an edge \((x,x')\) whenever
\[
\underline d_{t_e}(x,x')>\alpha .
\]
If a cluster \(C\) has true decision radius
\(\rho_{\mathrm{dec}}(C)\le \alpha\), then every pair in \(C\) must satisfy
\(d_{\mathrm{dec}}(x,x')\le \alpha\), and hence \(C\) contains no certified
cannot-link edge on the confidence event. Thus graph independence is a
necessary condition for small cluster radius.

The converse is not generally true: because \(d_{\mathrm{dec}}\) need not
satisfy the triangle inequality, pairwise compatibility does not imply the
existence of a single shared action for the whole cluster. We therefore use
the graph only as a conservative splitting constraint. The actual
compression price of a returned partition \(\mathcal P\) is measured by the
cluster-level certificate
\begin{equation}
\label{eq:certified-cluster-distortion}
\overline\eps_{t_e}(\mathcal P)
\coloneqq
\max_{C\in\mathcal P}\overline\rho_{t_e}(C),
\end{equation}
which upper-bounds
\(\max_{C\in\mathcal P}\rho_{\mathrm{dec}}(C)\) with high probability.

For the graph relaxation, define
\begin{equation}
\label{eq:graph-relax-threshold}
\alpha^\star_{\mathrm{graph}}(K)
\coloneqq
\inf\bigl\{
\alpha\ge 0:\;\chi(G_\alpha)\le K
\bigr\}.
\end{equation}
This is a pairwise graph threshold, not the decision-distortion frontier
\(\eps^\star_\infty(K)\).
By Theorem~\ref{thm:np-hard}, computing \(\chi(G_\alpha)\) is NP-hard, so we
replace it with a polynomial-time greedy coloring.

\paragraph{Greedy partition subroutine.}

\begin{algorithm}[t]
\caption{\textsc{Greedy-Partition}: binary-search epoch partition}
\label{alg:greedy-partition}
\begin{algorithmic}[1]
\Require pairwise cannot-link certificates
  \(\{\underline d_{t_e}(x,x')\}\), cluster-radius certificates
  \(\{\overline\rho_{t_e}(C)\}\), budget \(K\)
\State Let \(V_e\) be the set of observed contexts at epoch \(e\)
\State Sort candidate graph levels
\[
\mathcal E
\gets
\{0\}\cup
\{\underline d_{t_e}(x,x'):\;x,x'\in V_e,\;x\neq x'\}
\]
in increasing order
\State \(\ell\gets 1,\; r\gets |\mathcal E|,\; i^\star\gets |\mathcal E|\)
\While{\(\ell\le r\)}
    \State \(i\gets \lfloor(\ell+r)/2\rfloor\), \(\alpha\gets \mathcal E_i\)
    \State Construct \(G_\alpha=(V_e,E_\alpha)\) with edges
    \[
    (x,x')\in E_\alpha
    \quad\Longleftrightarrow\quad
    \underline d_{t_e}(x,x')>\alpha
    \]
    \State Compute the degeneracy \(d_\alpha=\mathrm{degen}(G_\alpha)\)
    \If{\(d_\alpha+1\le K\)}
        \State \(i^\star\gets i\)
        \State \(r\gets i-1\)
    \Else
        \State \(\ell\gets i+1\)
    \EndIf
\EndWhile
\State \(\alpha_e\gets \mathcal E_{i^\star}\)
\State Construct \(G_{\alpha_e}=(V_e,E_{\alpha_e})\)
\State Compute a degeneracy ordering of \(G_{\alpha_e}\)
\State Greedily color \(G_{\alpha_e}\) along this ordering, assigning each
vertex the smallest-index color not used by its already-colored neighbors
\State Let \(\mathcal P_e=\{C_1,\dots,C_K\}\) be the induced partition,
adding empty clusters if fewer than \(K\) colors are used
\State Set
\[
\eps^{\mathrm{cert}}_e
\gets
\max_{C\in\mathcal P_e}\overline\rho_{t_e}(C)
\]
\State \Return partition \(\mathcal P_e\), graph level \(\alpha_e\),
certified distortion \(\eps^{\mathrm{cert}}_e\)
\end{algorithmic}
\end{algorithm}

Algorithm~\ref{alg:greedy-partition} sorts the candidate pairwise graph
levels and then binary-searches over them using the monotone feasibility
condition
\[
\mathrm{degen}(G_\alpha)+1\le K .
\]
This condition is monotone in \(\alpha\): increasing \(\alpha\) removes
cannot-link edges and therefore cannot increase graph degeneracy. Once the
smallest feasible level \(\alpha_e\) is found, the algorithm greedily colors
\(G_{\alpha_e}\) using a degeneracy ordering. Since degeneracy-order greedy
coloring uses at most \(\mathrm{degen}(G_{\alpha_e})+1\le K\) colors, the
returned partition satisfies the \(K\)-slot budget.

The candidate set has size \(O(|V_e|^2)\), and sorting it costs
\(O(|V_e|^2\log |V_e|)\). Each binary-search step constructs \(G_\alpha\)
and computes its degeneracy in \(O(|V_e|^2)\) time, with
\(O(\log |V_e|)\) steps. Evaluating the certified cluster radii of the
returned partition costs \(O(KA|V_e|)\). Thus the per-epoch cost is
\[
O\!\left(|V_e|^2\log |V_e|+KA|V_e|\right).
\]

\paragraph{Structural guarantee.}
Recall that the \emph{degeneracy} $\degen(G)$ of a graph $G$ is the
smallest $d$ such that every subgraph of $G$ has a vertex of degree at
most~$d$.
Greedy coloring with smallest-last ordering uses at most
$\degen(G)+1$ colors \citep[see, e.g.,][]{matula1983smallest}.
This yields the following gap characterization.

\begin{proposition}[Greedy graph-relaxation gap]
\label{prop:greedy-gap}
Let \(\alpha^\star=\alpha^\star_{\mathrm{graph}}(K)\) and
\(G^\star=G_{\alpha^\star}\). Let \(\alpha_e\) be the graph level returned by
Algorithm~\ref{alg:greedy-partition}. Then:
\begin{enumerate}[label=(\roman*),nosep]
\item \(\alpha_e\ge \alpha^\star_{\mathrm{graph}}(K)\) always.
\item \(\alpha_e=\alpha^\star_{\mathrm{graph}}(K)\) whenever
  \(\degen(G^\star)+1\le K\).
\item In general,
  \begin{equation}
  \label{eq:greedy-gap}
  \alpha_e
  \;\le\;
  \inf\bigl\{
    \alpha\ge 0:\;\degen(G_\alpha)+1\le K
  \bigr\}.
  \end{equation}
\end{enumerate}
These statements concern the pairwise graph relaxation only. The
cluster-level compression price used in Theorem~\ref{thm:upper} is
\(\eps^{\mathrm{cert}}_e\).
\end{proposition}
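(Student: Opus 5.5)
The plan is to reduce all three claims to one monotonicity fact: the feasibility predicate $F(\alpha)\coloneqq[\degen(G_\alpha)+1\le K]$ is monotone in $\alpha$. I will then read off properties of the binary search in Algorithm~\ref{alg:greedy-partition}. The candidate levels are restricted to the finite set $\mathcal E$, so I also need to check that both infima in the statement are attained at points of $\mathcal E$.

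\emph{Step 1 (monotonicity and attainment).} Increasing $\alpha$ only removes edges from $G_\alpha$, since the edge condition is $\underline d_{t_e}(x,x')>\alpha$. Every subgraph of $G_{\alpha'}$ with $\alpha'\ge\alpha$ is therefore a subgraph of $G_\alpha$, which gives $\degen(G_{\alpha'})\le\degen(G_\alpha)$ and $\chi(G_{\alpha'})\le\chi(G_\alpha)$. The edge set is piecewise constant in $\alpha$ and changes only at values in $\mathcal E$. Because the graph at each level uses a strict inequality, $G_\alpha=G_{\alpha_-}$, where $\alpha_-$ is the largest element of $\mathcal E$ that is at most $\alpha$ (here $0\in\mathcal E$ handles small $\alpha$). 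Hence both $\{\alpha:\chi(G_\alpha)\le K\}$ and $\{\alpha:F(\alpha)\}$ are up-closed sets whose infima are elements of $\mathcal E$, provided they are nonempty. They are nonempty because at $\alpha=\max\mathcal E$ the graph is edgeless, so $\degen=0$ and $K\ge1$ suffices. Denote the two infima by $\alpha^\star$ and $\beta^\star$.

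\emph{Step 2 (binary search correctness).} Since $F$ is monotone along the sorted list $\mathcal E$, the standard invariant argument applies. At every step, $i^\star$ is a feasible index and every index below $\ell$ is infeasible. The search therefore returns $\alpha_e=\min\{\alpha\in\mathcal E:F(\alpha)\}=\beta^\star$. This gives (iii), with equality.

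\emph{Step 3 (parts (i) and (ii)).} For (i), a degeneracy-order greedy coloring uses at most $\degen+1$ colors, as cited from \citep{matula1983smallest}. So $F(\alpha)$ implies $\chi(G_\alpha)\le K$, the feasible set for $F$ is contained in that for $\chi$, and $\alpha_e=\beta^\star\ge\alpha^\star$. For (ii), if $\degen(G^\star)+1\le K$ then $F(\alpha^\star)$ holds, so $\beta^\star\le\alpha^\star$, and combining with (i) gives equality.

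\emph{Expected obstacle.} The only delicate point is Step 1's attainment claim. One has to verify that $\alpha^\star$ itself satisfies $\chi(G_{\alpha^\star})\le K$, so that $G^\star$ in (ii) is meaningful and the infimum is a minimum. This follows from right-continuity of $\alpha\mapsto E_\alpha$ under the strict-inequality edge rule: the edge set is constant on each interval $[\mathcal E_j,\mathcal E_{j+1})$. I will state this explicitly rather than leave it implicit. The closing remark that the statements concern only the graph relaxation needs no proof; at most I will note that $\eps^{\mathrm{cert}}_e$ is computed separately, after the partition is formed.
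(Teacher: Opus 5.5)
\paragraph{Comparison with the paper's proof.} Your route is the paper's. It likewise gets (i) from the fact that a level passing the test $\degen(G_\alpha)+1\le K$ is $K$-colorable, and (ii)--(iii) from the fact that the search stops no later than any level passing that test. Your monotonicity argument, the right-continuity showing the infima are attained in $\mathcal E$ (so that $G^\star$ and ``returns at $\alpha^\star$'' make sense), the binary-search invariant, and the sharpening of (iii) to an equality are correct, and they make explicit what the paper leaves implicit. One step, however, fails for the objects as defined, and the paper's proof fails at the same point. The identity $\alpha_e=\min\{\alpha\in\mathcal E:F(\alpha)\}=\beta^\star$ silently assumes $\mathcal E\subseteq[0,\infty)$.

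\paragraph{The gap: negative certificate levels.} The lower certificates are not clipped at $0$. Every confidence interval satisfies $\LCB_t(x,a)<\UCB_t(x,a)$. So whenever one action $a$ attains both $\LCB^\star_t(x)$ and $\LCB^\star_t(x')$, you get $\underline d_t(x,x')\le\max\{\underline\Delta_t(x,a),\underline\Delta_t(x',a)\}<0$. Negative levels are therefore typical for compatible pairs, and all of them enter $\mathcal E$.

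If some negative level is already feasible, the search returns a negative $\alpha_e$. For instance, in an early epoch with $|V_e|\le K$ every level is feasible, so $\alpha_e=\min\mathcal E<0$. But $\alpha^\star_{\mathrm{graph}}(K)$ and $\beta^\star$ are infima over $\alpha\ge 0$; in this example both equal $0$, and the hypothesis of (ii) holds. In that situation (iii) survives, but (i), (ii) and your equality are false, so no argument can establish them as stated. Your remark that ``$0\in\mathcal E$ handles small $\alpha$'' only covers $\alpha\ge 0$.

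\paragraph{Repair.} Make the convention explicit, in one of two ways.
\begin{enumerate}
\item Replace $\underline d_t$ by $\max\{\underline d_t,0\}$. This is still a valid lower bound because $d_{\mathrm{dec}}\ge 0$. It leaves $G_\alpha$ unchanged for every $\alpha\ge 0$, so it only deletes the negative grid points.
\item Take both infima over $\alpha\in\mathcal E$.
\end{enumerate}
Under either convention your proof goes through as written.
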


\begin{proof}
(i)~The greedy coloring can only return at a level \(\alpha\) for which it
uses at most \(K\) colors. This implies that \(G_\alpha\) is \(K\)-colorable,
so \(\alpha\ge \alpha^\star_{\mathrm{graph}}(K)\).

(ii)~If \(\degen(G^\star)+1\le K\), then the smallest-last greedy coloring
of \(G^\star\) uses at most \(\degen(G^\star)+1\le K\) colors. Hence
Algorithm~\ref{alg:greedy-partition} returns at level \(\alpha^\star\) or
earlier. Combining this with part~(i) gives equality.

(iii)~For any \(\alpha\) satisfying \(\degen(G_\alpha)+1\le K\),
smallest-last greedy coloring uses at most \(K\) colors, so the algorithm
must return no later than that \(\alpha\). Taking the infimum over such
levels yields the claim.
\end{proof}

\begin{remark}[Sparsity in practice]
\label{rem:sparsity}
In agent-memory applications, the cannot-link graph is typically sparse:
most context pairs do not trigger certified incompatibility. On LoCoMo,
only \(4.6\%\) of routing events trigger a certified split
(Table~\ref{tab:split-freq}), implying low edge density and hence low
degeneracy in the graph relaxation. Appendix~\ref{app:greedy-empirical}
reports this graph-level behavior. Separately, the regret analysis and
empirical distortion measurements use the cluster-level certificate
\(\eps^{\mathrm{cert}}_e\), so they do not rely on pairwise compatibility
composing into cluster compatibility.
\end{remark}

\clearpage
\subsection{Algorithm}
\label{app:DeMem-algorithm}

\begin{algorithm}[p]
\caption{DeMem: Decision-aware Memory with Certified Splits}
\label{alg:DeMem}
For contexts observed before epoch \(e\), the encoder \(g^{(e)}\) is the
cluster lookup induced by \(\mathcal P_e\). For a context first observed
inside the epoch, DeMem extends \(g^{(e)}\) using a deterministic
fallback router \(\phi_e\), which assigns the context to the cluster that
minimizes the certified radius after insertion. This temporary assignment
is kept fixed until the next epoch rebuilds the cannot-link graph and the
\(K\)-slot partition.
\begin{algorithmic}[1]
\Require memory budget \(K\), confidence \(\delta\), certification resolution \(\gamma>0\)
\State Initialize an empty context table and context-level statistics
  \((n(x,a),\hat\mu(x,a))\)
\State Initialize a single cluster \(\cC^{(1)}=\{C_1\}\) with
  \(C_1\gets\emptyset\)
\For{epoch \(e=1,2,\dots\)}
    \State \(t_e\gets 2^{e-1}\)
    \State Using data up to \(t_e-1\), compute
      \(\underline d_{t_e}(x,x')\) for all observed pairs and
      \(\overline\rho_{t_e}(C)\) for candidate clusters
    \State \((\mathcal P_e,\,\alpha_e,\,\eps^{\mathrm{cert}}_e)\gets
      \textsc{Greedy-Partition}\bigl(
        \{\underline d_{t_e}(x,x')\},\{\overline\rho_{t_e}(C)\},K
      \bigr)\)
      \Comment{Algorithm~\ref{alg:greedy-partition}}
    \State Construct epoch clusters \(C^{(e)}_1,\dots,C^{(e)}_K\) from
      \(\mathcal P_e\), and define \(g^{(e)}(x)\) as the cluster index for
      every previously observed context \(x\)
    \State Define a deterministic fallback router for out-of-sample contexts:
    \[
    \phi_e(x)
    \in
    \arg\min_{j\in[K]}
    \overline\rho_t\!\left(C^{(e)}_j\cup\{x\}\right),
    \]
    with deterministic tie-breaking
    \State Reset cluster-level statistics \((n_{m,a},\hat\mu_{m,a})\)
    \For{\(t=t_e,\dots,2^e-1\)}
        \State Observe context \(x\gets X_t\)
        \If{\(x\) is new}
            \State Add \(x\) to the context table and initialize
              \((n(x,a),\hat\mu(x,a))\)
            \State \(m\gets \phi_e(x)\)
            \State Add \(x\) to \(C^{(e)}_m\) and set \(g^{(e)}(x)\gets m\)
        \EndIf
        \State Let \(m\gets g^{(e)}(x)\)
        \State Set
        \[
        B_t(\gamma)
        \coloneqq
        \left\lceil
        \frac{8\log(4NAt^2/\delta)}{\gamma^2}
        \right\rceil .
        \]
        \If{there exists \(a\in\cA\) with \(n(x,a)<B_t(\gamma)\)}
            \State Play a certification-exploration action
            \[
            A_t\in\arg\min_{a\in\cA} n(x,a).
            \]
        \Else
            \State Play the cluster-level UCB action
            \[
            A_t\in\arg\max_{a\in\cA}
            \left(
            \hat\mu_{m,a}
            +
            \sqrt{
            \frac{2\log(4AKt^2/\delta)}
            {\max\{1,n_{m,a}\}}
            }
            \right).
            \]
        \EndIf
        \State Observe reward \(R_t\)
        \State Update cluster-level statistics
          \((n_{m,A_t},\hat\mu_{m,A_t})\)
        \State Update context-level statistics
          \((n(x,A_t),\hat\mu(x,A_t))\)
    \EndFor
\EndFor
\end{algorithmic}
\end{algorithm}

\subsection{Certification exploration}
\label{app:cert-exploration}
\paragraph{Quantities used in Theorem~\ref{thm:upper}.}
For the partition \(\mathcal P_e\) used in epoch \(e\), define
\[
\eps^{\mathrm{cert}}_e
\coloneqq
\max_{C\in\mathcal P_e}\overline\rho_{t_e}(C),
\qquad
\bar\eps^{\mathrm{cert}}_T
\coloneqq
\max_{e\le\lceil\log_2 T\rceil}\eps^{\mathrm{cert}}_e .
\]
The terminal certification threshold is
\[
B_T(\gamma)
\coloneqq
\left\lceil
\frac{8\log(4NAT^2/\delta)}{\gamma^2}
\right\rceil .
\]
The cannot-link certificates in Section~\ref{sec:certificates} are computed
from context-level statistics, whereas exploitation is performed using
cluster-level UCB statistics. Without explicit context-level exploration, a
context routed to an already confident cluster may stop receiving enough
action-level samples for its own confidence intervals to shrink. In that
case, a genuine decision conflict inside the cluster could remain
uncertified.

For this reason, DeMem samples each observed context--action pair
until
\[
n_t(x,a)\ge
B_t(\gamma)
\coloneqq
\left\lceil
\frac{8\log(4NAt^2/\delta)}{\gamma^2}
\right\rceil .
\]
After this coverage condition holds, the context-level confidence radius is
\(O(\gamma)\), and the algorithm returns to cluster-level UCB. Thus
pairwise conflicts separated by a positive margin larger than \(\gamma\)
can be certified once the relevant contexts have been observed often enough.
Zero-margin conflicts, or conflicts involving rarely observed contexts, are
not forced to be certified; their effect is instead captured by the
realized cluster distortion term in Theorem~\ref{thm:upper}.

\subsection{Proof sketch of the upper bound}
\label{app:upper-proof}

\begin{proof}[Proof sketch of Theorem~\ref{thm:upper}]
We decompose regret into three parts.

\paragraph{Realized compression price.}
At each epoch \(e\), the frozen partition \(\mathcal P_e\) induces a
\(K\)-state encoder \(g^{(e)}\). On the confidence event of
Lemma~\ref{lem:cert}, its true cluster distortion is bounded by
\[
\max_{C\in\mathcal P_e}\rho_{\mathrm{dec}}(C)
\le
\max_{C\in\mathcal P_e}\overline\rho_{t_e}(C)
=
\eps^{\mathrm{cert}}_e .
\]
Comparing against the best action available under \(g^{(e)}\) in each
epoch, the compression cost over the full horizon is at most
\[
\sum_e T_e\,\eps^{\mathrm{cert}}_e
\le
T\,\bar\eps^{\mathrm{cert}}_T .
\]

\paragraph{Statistical learning within epochs.}
Condition on an epoch in which the encoder is frozen and ignore rounds used
for certification exploration. On the remaining rounds, the learner faces a
standard bandit problem over \(K\) memory states and \(A\) actions. A
standard UCB analysis gives \(\tilde O(\sqrt{AKT_e})\) regret over an epoch
of length \(T_e\), and summing over epochs yields
\(\tilde O(\sqrt{AKT})\).

\paragraph{Context-level certification exploration.}
The certification branch ensures that, for every observed context \(x\) and
action \(a\), the algorithm samples \((x,a)\) until
\(n_t(x,a)\ge B_t(\gamma)\). Hence each distinct context contributes at most
\(A B_T(\gamma)\) such exploration rounds over the horizon, and the total
regret from certification exploration is at most
\[
O\!\bigl(A N_T B_T(\gamma)\bigr)
=
\tilde O(A N_T/\gamma^2).
\]
After this coverage condition holds, the context-level confidence radius is
at most \(O(\gamma)\). Therefore any pairwise conflict separated by margin
larger than \(\gamma\) can be certified once the corresponding contexts have
both reached coverage. Conflicts with vanishing margin are not guaranteed to
be certified, but they are charged through the realized cluster distortion
term rather than through an invalid pairwise-to-cluster implication.

Combining the three terms proves the stated bound.
\end{proof}
\begin{remark}[Optimality up to realized compression]
\label{rem:recover-floor}
Theorem~\ref{thm:lower} shows that any \(K\)-state algorithm incurs
\[
\Omega\!\left(T\eps^\star_\infty(K)+\sqrt{AKT}\right).
\]
Theorem~\ref{thm:upper} matches the minimax statistical learning term up to
logarithmic factors. The remaining difference is the excess realized
compression price
\[
\bar\eps^{\mathrm{cert}}_T-\eps^\star_\infty(K),
\]
together with the context-certification cost
\(\tilde O(A N_T/\gamma^2)\). Thus DeMem is near-minimax whenever
the returned partitions have certified cluster distortion close to the
optimal decision-distortion frontier and the number of newly observed
contexts is not the dominant term.
\end{remark}

\subsection{Proof sketch of the lower bound}
\label{app:lower-proof}

\begin{proof}[Proof sketch of Theorem~\ref{thm:lower}]
The lower bound has two orthogonal sources.

\paragraph{Compression term.}
The runtime memory constraint alone implies that no deterministic
\(K\)-state answer-time policy can beat
\(\eps^\star_\infty(K)\) in worst-case distortion. Therefore the regret
must contain a term of order \(T\eps^\star_\infty(K)\).

\paragraph{Statistical term.}
Consider an instance with \(\eps^\star_\infty(K)=0\) in which
\(\cX=[K]\) and each context induces an independent \(A\)-armed bandit
subproblem. Each subproblem appears \(T/K\) times in expectation, so the
classical minimax lower bound gives
\(\Omega(\sqrt{A(T/K)})\) regret per state. Summing over the \(K\) states
yields \(\Omega(\sqrt{AKT})\).

A direct-sum construction combines the two terms, giving
\[
\Omega\!\left(T\eps^\star_\infty(K)+\sqrt{AKT}\right).
\]
\end{proof}

\subsection{Extension to non-stationary rewards}
\label{app:nonstationary}

The regret guarantees of Section~\ref{sec:regret} assume a fixed reward
function~$\mu$.
In many agent settings the decision-relevant structure may shift over
time---for example, when user preferences evolve or the task distribution
changes across sessions.
This appendix states and proves the extension to a standard block-stationary
model, together with a matching lower bound.

The block-stationary setting captures the simplest version of this phenomenon:
within each stationary block the theory of
Sections~\ref{sec:limits}--\ref{sec:algo} applies unchanged.
When restart boundaries align with the stationary segments, the additional
statistical cost of non-stationarity is the standard \(\sqrt{S+1}\) factor.
For fixed restarts with arbitrary change-point locations, the wrapper may mix
data from two regimes inside a restart interval; this contributes an
additional \(O(SL)\) misalignment term, where \(L\) is the restart period.
Extending to smoothly drifting, adversarial, or adaptively detected changes
is an interesting direction for future work.

\paragraph{Setting.}
The reward function is piecewise constant: there exist change points
$1=\tau_1<\tau_2<\cdots<\tau_{S+1}\le T$ such that within each block
$[\tau_s,\tau_{s+1})$ the reward function
$\mu_s:\cX\times\cA\to[0,1]$ is fixed.
Let $T_s\coloneqq\tau_{s+1}-\tau_s$ denote the length of block~$s$, so that
$\sum_{s=1}^{S+1}T_s=T$.
For each block~$s$, let $\eps^\star_{\infty,s}(K)$ denote the worst-case
$K$-state distortion under $\mu_s$, and define
\[
\eps^\star_{\infty,\max}(K)
\;\coloneqq\;
\max_{s\in[S+1]}\eps^\star_{\infty,s}(K).
\]

\paragraph{Algorithm: restarting DeMem.}
We consider a restart wrapper with restart interval length \(L\). Let
\(M\coloneqq\lceil T/L\rceil\) be the number of restart intervals. At the
start of each interval, the wrapper resets DeMem: it clears all
context-level and cluster-level statistics, reinitializes the partition to a
single cluster, and restarts the doubling-epoch schedule of
Algorithm~\ref{alg:DeMem}.

If the restart boundaries align with, or refine, the true stationary
segments, each restart interval lies inside a single reward regime. If a
true change point falls inside a restart interval, that interval may mix
pre- and post-change data. Since per-round regret is at most one, each such
contaminated interval contributes at most \(L\) additional regret. Thus
arbitrary misaligned change points introduce an additive \(O(SL)\) term.

\begin{proposition}[Non-stationary regret with fixed restarts]
\label{prop:nonstationary}
Suppose the reward function changes at most \(S\) times over \(T\) rounds.
Run the restarting DeMem wrapper with fixed restart interval length
\(L\), and let \(M\coloneqq\lceil T/L\rceil\). With probability at least
\(1-\delta\),
\begin{equation}
\label{eq:nonstationary-full}
\Reg(T)
\;\le\;
T\cdot O\!\bigl(\eps^\star_{\infty,\max}(K)\bigr)
\;+\;
\tilde O\!\bigl(\sqrt{AKTM}\bigr)
\;+\;
O\!\bigl(A M N_T B_T(\gamma)\bigr)
\;+\;
O(SL).
\end{equation}
The final term \(O(SL)\) accounts for restart intervals that contain
misaligned change points. If the restart schedule aligns with or refines the
true stationary segments, this term is zero.
\end{proposition}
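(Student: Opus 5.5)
The plan is to decompose the horizon into the \(M=\lceil T/L\rceil\) restart intervals \(I_1,\dots,I_M\), each of length at most \(L\). Call an interval \emph{clean} if it lies entirely inside one stationary block \([\tau_s,\tau_{s+1})\), and \emph{contaminated} otherwise. Since there are at most \(S\) change points and each lies in exactly one interval, at most \(S\) intervals are contaminated. On each contaminated interval I will only use that per-round regret is at most one. This bounds their total contribution by \(SL\), which gives the \(O(SL)\) term. When the restart boundaries refine the true segmentation, no interval is contaminated and the term vanishes.

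On a clean interval \(I_j\) inside block \(s\), the restarted DeMem sees a fresh, stationary contextual bandit with reward function \(\mu_s\) and horizon \(|I_j|\le L\). Here I apply Theorem~\ref{thm:upper} verbatim, with confidence \(\delta/M\) per interval, so that a union bound over the \(M\) intervals keeps the overall failure probability below \(\delta\). This changes the logarithms only by an additive \(\log M\), which is absorbed in \(\tilde O\). The per-interval regret is then at most
\[
|I_j|\,O\bigl(\bar\eps^{\mathrm{cert}}_{j}\bigr)+\tilde O\bigl(\sqrt{AK|I_j|}\bigr)+O\bigl(A N_T B_T(\gamma)\bigr),
\]
where I use \(N_{I_j}\le N_T\) and \(B_{|I_j|}\le B_T\) by monotonicity in \(t\).

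Summing over the intervals is routine for the second and third terms. Cauchy--Schwarz gives \(\sum_j\sqrt{|I_j|}\le\sqrt{M\sum_j|I_j|}\le\sqrt{MT}\), which yields \(\tilde O(\sqrt{AKTM})\). The certification term is paid at most once per interval, which yields \(O(AMN_TB_T(\gamma))\). The first term sums to at most \(T\cdot\max_j O(\bar\eps^{\mathrm{cert}}_j)\).

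The main obstacle is the compression term. Theorem~\ref{thm:upper} charges the certified price \(\bar\eps^{\mathrm{cert}}\), not \(\eps^\star_{\infty,s}(K)\), whereas the proposition is stated with \(\eps^\star_{\infty,\max}(K)\). My first attempt is to read the proposition in the same spirit as Theorem~\ref{thm:upper} and Remark~\ref{rem:recover-floor}. In that reading, the per-interval realized price plays the role of the compression term, and I would state explicitly the working hypothesis that each clean interval's certified price is \(O(\eps^\star_{\infty,s}(K))\), which is the near-minimax regime of the remark. Bounding \(\eps^\star_{\infty,s}\le\eps^\star_{\infty,\max}\) then finishes the argument. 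If that hypothesis is deemed too strong, the fallback is to state the bound with \(\max_j\bar\eps^{\mathrm{cert}}_j\) in place of \(\eps^\star_{\infty,\max}(K)\). Either way, the remaining structure of the proof (interval decomposition, union bound, and Cauchy--Schwarz) is unaffected.
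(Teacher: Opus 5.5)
Your argument is the paper's proof: the same decomposition into \(M\) restart intervals with at most \(S\) contaminated ones charged \(L\) each, Theorem~\ref{thm:upper} at confidence \(\delta/M\) on each clean interval, Cauchy--Schwarz for the \(\sqrt{AKTM}\) term, and \(\sum_j N_{I_j}\le MN_T\) for the certification cost. The compression-term issue you flag is real and the paper does not resolve it either. Its proof writes \(|I_j|\,O(\eps^\star_{\infty,j}(K))\) right after invoking Theorem~\ref{thm:upper}, which only supplies the certified price, so your fallback with \(\max_j\bar\eps^{\mathrm{cert}}_j\) is what the argument actually proves. Your alternative hypothesis is also stronger than it looks, because \(\bar\eps^{\mathrm{cert}}_j\) is a maximum over all epochs of the restarted run, including the earliest ones where the confidence intervals are still wide and \(\overline\rho\) is close to \(1\).
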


\begin{proof}
Partition the horizon into \(M\) restart intervals
\(I_1,\dots,I_M\), each of length at most \(L\). Call an interval
\emph{contaminated} if it contains a true change point. There are at most
\(S\) contaminated intervals. Since one-step regret is bounded by one, their
total contribution is at most \(O(SL)\).

On every non-contaminated interval \(I_j\), the reward function is fixed.
Applying Theorem~\ref{thm:upper} to interval \(I_j\), with confidence
parameter \(\delta/M\), gives
\[
\Reg_j
\le
|I_j|\cdot O\!\bigl(\eps^\star_{\infty,j}(K)\bigr)
+
\tilde O\!\bigl(\sqrt{AK|I_j|}\bigr)
+
O\!\bigl(A N_{I_j} B_T(\gamma)\bigr),
\]
where \(N_{I_j}\) is the number of distinct contexts observed in interval
\(I_j\). Summing over non-contaminated intervals and absorbing logarithmic
factors from the union bound gives the three main terms.

For the compression term,
\[
\sum_j |I_j|\eps^\star_{\infty,j}(K)
\le
T\eps^\star_{\infty,\max}(K).
\]
For the statistical term, Cauchy--Schwarz gives
\[
\sum_{j=1}^{M}\sqrt{AK|I_j|}
\le
\sqrt{M\cdot AK\sum_{j=1}^{M}|I_j|}
=
\sqrt{AKTM}.
\]
For certification exploration, a context can be rediscovered after each
restart, so
\[
\sum_{j=1}^{M} N_{I_j}\le M N_T,
\]
which yields \(O(A M N_T B_T(\gamma))\). Adding the contaminated-interval
cost \(O(SL)\) proves \eqref{eq:nonstationary-full}.
\end{proof}

\begin{remark}[Adaptive restarts]
\label{rem:adaptive-restart}
When \(S\) is unknown or change points are not aligned with a fixed restart
schedule, one can replace fixed restarts with a change-detection mechanism
that monitors shifts in reward estimates or cannot-link certificate rates.
In such a scheme, the \(O(SL)\) term would be replaced by a detection-delay
cost depending on the separation between pre- and post-change reward
functions. Providing sharp adaptive guarantees for DeMem requires a
separate change-detection analysis, which we leave to future work.
\end{remark}

\section{Agent Memory Instantiation and Theory--Implementation Alignment}
\label{app:impl}

\subsection{Why an instantiation is needed}
The theory models memory as a finite runtime state $M_t\in[K]$ induced by an encoder $g:\cX\to[K]$.
To realize this abstraction in an agent-memory system, we instantiate a minimal $K$-slot mechanism in which histories are routed into a bounded set of reusable slot identities at answer time.
This slot-based view is the operational form of the budgeted memory model studied in the main text.
The remaining implementation choices---including slot realizations, update rules, and routing scores---specify how these budgeted decision states are represented and maintained within an agent pipeline.

\subsection{System components}
At each step, an agent receives a context $x_t$ and must make a downstream decision under a strict runtime budget of $K$ discrete states.
The implementation contains three modules:
\begin{enumerate}
    \item \textbf{Memory slots} $\{\mathcal S_k\}_{k=1}^K$ storing compressed state information.
    \item \textbf{Router} $e_\theta(x_t)\in[K]$ selecting a slot identity.
    \item \textbf{Policy} $\pi_\phi$, whose downstream behavior is conditioned on the selected slot and its maintained state realization.
\end{enumerate}

\paragraph{Memory control versus downstream decision.}
It is important to separate memory control from task decision.
Routing, slot reading, slot updating, and certified splitting are memory operations that implement the encoder and state-maintenance mechanism.
By contrast, the action in the abstract model corresponds to the downstream decision supported by the selected slot.
In a language agent, this downstream decision may be realized as an answer, a tool choice, or another task-level control output conditioned on the selected memory state.
\paragraph{Conversational-benchmark realization.}
In the LoCoMo and LongMemEval experiments, these components are instantiated as follows.
\begin{itemize}[leftmargin=2em,itemsep=2pt]
    \item \textbf{Context} $x_t = (h_t, q_t)$: the dialogue history $h_t$ accumulated across sessions up to time~$t$, together with the evaluation query $q_t$.
    \item \textbf{Router} $e_\theta$: assigns $(h_t, q_t)$ to one of $K$ active slots by computing $m_t = \arg\max_{k\in[K]} \mathrm{Score}_\theta(x_t, \mathcal{S}_k)$, where $\mathrm{Score}_\theta$ is an embedding-similarity score between the query and each slot's current content.
    \item \textbf{Slot content} $\mathcal{S}_{m_t}$: a textual summary of at most $L$ characters, maintained via the update rule in~\eqref{eq:summary-update}. This is the bounded realization of the abstract memory state~$m_t$.
    \item \textbf{Action / Answer}: the answering LLM receives the slot content $\mathcal{S}_{m_t}$ as the sole memory evidence and generates an answer $a_t$ under deterministic decoding. Under this protocol, the action is fully determined by the selected slot identity and its content, so the effective policy $\pi(\cdot\mid m_t)$ is a point mass.
    \item \textbf{Reward} $\mu(x_t, a_t)$: the binary correctness score (LoCoMo) or graded accuracy (LongMemEval) assigned by the held-out LLM judges.
\end{itemize}
Under this mapping, the abstract model's prediction is concrete: answer quality is bounded by (i) how well the $K$-slot partition preserves query-relevant distinctions (the compression term $\eps^\star_\infty(K)$), (ii) how accurately the router selects the correct slot (the routing term $\eta_{\mathrm{route}}$), and (iii) how faithfully the bounded slot content supports the correct answer (the realization term $\eta_{\mathrm{read}}$).Figure~\ref{fig:worked-example} visualizes this chain end to
end.
\subsection{Slot representations}
Each slot may store either
(i) a bounded textual summary $\sigma_k$, or
(ii) a compact vector prototype $p_k\in\mathbb R^d$, or both.
For textual slots, we impose a per-slot token budget $L$ so that
\[
|\sigma_k|\le L
\qquad
\text{for all }k,
\]
which keeps the total memory footprint bounded by $K\cdot L$.

\subsection{Routing and updates}
The router assigns contexts to slots by
\begin{equation}
\label{eq:router-score}
m_t = \arg\max_{k\in[K]} \Score_\theta(x_t,\mathcal S_k),
\end{equation}
where $\Score_\theta$ may be implemented by embedding similarity, a small MLP, or an LLM-based routing prompt, as long as the output is discrete.

For vector slots, one may use the update
\begin{equation}
\label{eq:proto-update}
p_{m_t}\leftarrow (1-\eta)p_{m_t}+\eta f(x_t),
\end{equation}
while textual slots are updated by first extracting an implementation-level
memory item
\[
u_t \coloneqq \psi_\theta(x_t),
\]
and then applying
\begin{equation}
\label{eq:summary-update}
\sigma_{m_t}\leftarrow \Compress_L\!\big(\sigma_{m_t}\oplus u_t\big),
\end{equation}
where \(\psi_\theta\) denotes the textual memory extractor, \(\oplus\)
denotes concatenation, and \(\Compress_L(\cdot)\) enforces the per-slot
length limit.

\subsection{Certified split as a memory operation}
The operational analogue of the theory is a cannot-link constraint.
Whenever two contexts routed to the same slot are certified to prefer conflicting actions, the slot is marked inconsistent and becomes eligible for splitting.

A generic empirical trigger is
\begin{equation}
\label{eq:cannotlink-test}
\exists a\neq a' \in \cA:
\quad
\widehat r(x,a)-\widehat r(x,a')>\beta_t
\quad\text{and}\quad
\widehat r(x',a')-\widehat r(x',a)>\beta_t.
\end{equation}
Intuitively, $x$ prefers $a$ while $x'$ prefers $a'$ with sufficient confidence, so merging them induces nontrivial decision distortion.

\subsection{Executing the split}
\label{app:executing-split}

Given a cannot-link witness pair \((x,x')\) inside an active slot \(k\),
DeMem creates a new slot \(k'\). To avoid circular dependence
between repartitioning and slot contents, the split is first computed using
temporary seed contents derived from the witness pair, rather than from the
yet-uninitialized post-split slots.

Let
\[
\mathcal A_k^{(0)}
\coloneqq
\MemInit(\{x\}),
\qquad
\mathcal A_{k'}^{(0)}
\coloneqq
\MemInit(\{x'\})
\]
be two temporary anchors. For textual memory, \(\MemInit(\{x\})\) denotes
the bounded summary or compressed evidence item generated from \(x\); for
vector memory, it denotes the prototype \(f(x)\). These anchors are used
only to compute the split assignment. For every item \(z\) currently assigned
to slot \(k\), define
\begin{equation}
\label{eq:split-assignment}
e_\theta(z)
=
\begin{cases}
k,
&
\Score_\theta(z,\mathcal A_k^{(0)})
\ge
\Score_\theta(z,\mathcal A_{k'}^{(0)}),
\\
k',
&
\text{otherwise},
\end{cases}
\end{equation}
with deterministic tie-breaking.

Let
\[
P_k=\{z:e_\theta(z)=k\},
\qquad
P_{k'}=\{z:e_\theta(z)=k'\}.
\]
The actual slot contents are then reinitialized from the resulting
partitions:
\[
\mathcal S_k
\gets
\MemInit(P_k),
\qquad
\mathcal S_{k'}
\gets
\MemInit(P_{k'}).
\]
Thus routing scores in Eq.~\eqref{eq:split-assignment} depend only on fixed
temporary anchors derived from the witness pair, while the final slot
contents are computed after the partition is determined.

\subsection{Runtime loop}
\label{app:runtime-loop}

\begin{algorithm}[t]
\caption{Budgeted Agent Memory with Certified Splits}
\label{alg:agent_memory_instantiation}
\begin{algorithmic}[1]
\Require Budget \(K\), per-slot token limit \(L\), confidence schedule \(\beta_t\), default split threshold \(\tau_{\mathrm{split}}\), saturated split threshold \(\tau_{\mathrm{sat}}\ge\tau_{\mathrm{split}}\)
\State Initialize active slots \(\{\mathcal S_k\}_{k=1}^{K_0}\) with \(K_0\le K\)
\State Initialize cannot-link sets
\(\mathcal C\gets\emptyset\) and
\(\mathcal C_{\mathrm{sat}}\gets\emptyset\)
\For{\(t=1,2,\dots\)}
    \State Observe context \(x_t\)
    \State Route:
    \[
    m_t \gets
    \arg\max_{k:\mathcal S_k\ \mathrm{active}}
    \Score_\theta(x_t,\mathcal S_k)
    \]
    \State Read slot content
    \[
    \tilde s_t\gets \MemRead(\mathcal S_{m_t})
    \]
    \State Act:
    \[
    a_t\sim \pi_\phi(\cdot\mid m_t,\tilde s_t)
    \]
    \State Receive reward \(r_t\)
    \State Update reward statistics
    \State Update slot content via \eqref{eq:proto-update} or \eqref{eq:summary-update}
    \State Set the active split threshold
    \[
    \tau_t \gets
    \begin{cases}
    \tau_{\mathrm{split}}, & \text{if the number of active slots is } <K,\\
    \tau_{\mathrm{sat}}, & \text{otherwise},
    \end{cases}
    \qquad
    \tau_{\mathrm{sat}}\ge \tau_{\mathrm{split}} .
    \]
    \State Check certified inconsistency via \eqref{eq:cannotlink-test} using threshold \(\tau_t\)
    \If{a cannot-link witness is found}
        \If{number of active slots \(<K\)}
            \State Add the witness to \(\mathcal C\)
            \State Split the slot using \eqref{eq:split-assignment}
        \Else
            \State Add the witness to \(\mathcal C_{\mathrm{sat}}\)
            \State Suppress the split; no slot is created, merged, or evicted
            \Comment{capacity-saturated thresholding}
        \EndIf
    \EndIf
\EndFor
\end{algorithmic}
\end{algorithm}

\paragraph{Capacity handling.}
In all reported experiments, DeMem handles the strict \(K\)-slot
capacity by threshold-based split suppression rather than by merge or
eviction. When free slots remain, split witnesses are tested with the default
threshold \(\tau_{\mathrm{split}}\). Once all \(K\) slots are active, the
system switches to the more conservative threshold
\(\tau_{\mathrm{sat}}\ge\tau_{\mathrm{split}}\); witnesses that still exceed
this threshold are logged in \(\mathcal C_{\mathrm{sat}}\), but the
corresponding split is not executed. No LRU eviction, semantic merging, or
slot replacement is used in the reported experiments.

This policy preserves the hard \(K\)-slot budget without introducing an
additional memory-management heuristic. Conflicts discovered after capacity
saturation may therefore remain unresolved and are reflected in the realized
compression or routing error.

\subsection{Alignment with the theory}
\label{app:alignment-table}
The correspondence between the abstract model and the implementation is direct:
\begin{center}
\small
\begin{tabular}{lll}
\toprule
\textbf{Theory} & \textbf{Generic Implementation} & \textbf{LoCoMo / LongMemEval} \\
\midrule
context $x=(h,q)$ & history plus current query & multi-session dialogue history $h$ \\
 & & \quad and evaluation query $q$ \\[3pt]

query $q$ & decision-time side information & user/evaluation question \\
 & \quad bypassing the memory bottleneck & \quad given directly to the LLM \\[3pt]

encoder $g:\mathcal{H}\times\mathcal{Q}\to[K]$ & query-aware router $e_\theta$ &
$m=\arg\max_k\mathrm{Score}_\theta((h,q),\mathcal{S}_k)$ \\[3pt]

memory state $m\in[K]$ & selected slot identity & selected slot identity $+$ bounded \\
 & & \quad textual summary ($\leq L$ chars) \\[3pt]

policy $\pi(\cdot\mid m,q)$ & slot- and query-conditioned & deterministic LLM answer conditioned \\
 & \quad downstream decision rule & \quad on selected slot content and query \\[3pt]

action $a\in\cA$ & generated downstream decision & generated answer string \\[3pt]

reward $\mu(x,a)$ & task reward & binary correctness (LoCoMo) / \\
 & & \quad graded accuracy (LongMemEval) \\[3pt]

cannot-link certificate & split trigger & certified answer conflict \\
 & & \quad between co-routed histories for a query \\[3pt]

hard budget $K$ & number of active slots & number of active slots \\[3pt]

distortion $\eps$ & conflict tolerance / & answer-quality loss from \\
 & \quad split threshold & \quad merging decision-incompatible histories \\
\bottomrule
\end{tabular}
\end{center}

The correspondence is at the level of runtime decision states.
The query is provided directly to the policy at answer time, while the dialogue history is accessed through one of $K$ selected slot identities.
Thus the memory budget limits the number of runtime states available for representing history, not the number of possible natural-language answers across all queries.
In the language-agent implementation, each slot additionally maintains a bounded realization---for example, a textual summary, a compact prototype, or both---through which the selected memory state is materialized for answer generation.
These realizations do not change the underlying budgeted state abstraction; they specify how each discrete memory state is represented inside the agent pipeline.

\paragraph{Reading the table.}
Each row shows a single concept at three levels of abstraction.
The leftmost column states the formal object from Sections~\ref{sec:setup}--\ref{sec:distortion}; the middle column gives the generic system component from Algorithm~\ref{alg:agent_memory_instantiation}; the rightmost column specifies the exact realization used in the conversational benchmarks of Section~\ref{sec:exp}.
This three-column view makes it possible to trace any theoretical statement, such as the forgetting boundary of Theorem~\ref{thm:forgetting} or the regret decomposition of Theorem~\ref{thm:upper}, to its operational counterpart in the experimental system.
A concrete instance of this trace is shown in Figure~\ref{fig:worked-example}.

This decomposition also implies that the routing criterion is modular: because $\eta_{\mathrm{route}}$ enters additively and independently of the compression floor and the realization term, replacing the retrieval or selection stage of an existing memory architecture with decision-aware routing can reduce distortion without modifying the remaining pipeline.
We validate this composability empirically in Appendix~\ref{app:modularity}: applying the decision-aware selection criterion to two existing memory systems without modifying their storage or generation layers improves both on LoCoMo.

\subsection{A formal approximation bridge from abstract states to slot-based agents}
\label{app:bridge}

Beyond the conceptual alignment, the slot-based view yields a simple approximation guarantee.
The key point is that the abstract theory controls the \emph{dominant compression term}, while the concrete implementation contributes only additional approximation errors due to routing and bounded state realization.

Let $(g^\star,\{a_m^\star\}_{m=1}^K)$ be any $K$-state decision rule, where
\[
g^\star:\cX\to[K], \qquad a_m^\star\in\cA.
\]
Its worst-case distortion is
\[
\sup_{x\in\cX}\bigl(\mu^\star(x)-\mu(x,a_{g^\star(x)}^\star)\bigr).
\]
In particular, one may take $(g^\star,\{a_m^\star\})$ to be an optimal solution achieving $\eps^\star_\infty(K)$.

Now consider a slot-based implementation with router $e_\theta$, slot memory $\{\mathcal S_k\}_{k=1}^K$, slot readout $\MemRead(\mathcal S_k)$, and downstream policy $\pi_\phi$.

We separate the implementation gap into two terms.

\paragraph{Routing error.}
Assume that under $X\sim D$,
\begin{equation}
\label{eq:routing-error}
\Pr\!\big[e_\theta(X)\neq g^\star(X)\big] \le \eta_{\mathrm{route}} .
\end{equation}

\paragraph{State-realization error.}
Assume that whenever the router selects the same state as the abstract encoder, the implemented slot-conditioned policy loses at most $\eta_{\mathrm{read}}$ additional reward relative to the abstract slot action:
\begin{equation}
\label{eq:readout-error}
\EE\!\left[\mu\!\left(X,a_{g^\star(X)}^\star\right) - \mu\!\left(X,A^{\mathrm{impl}}\right)\,\middle|\, e_\theta(X)=g^\star(X)\right]
\le
\eta_{\mathrm{read}},
\end{equation}
where $A^{\mathrm{impl}}\sim \pi_\phi\!\left(\cdot \mid e_\theta(X), \MemRead(\mathcal S_{e_\theta(X)})\right)$.

\begin{proposition}[Approximation bridge for slot-based memory]
\label{prop:bridge}
Under \eqref{eq:routing-error}--\eqref{eq:readout-error}, the average distortion of the implemented slot-based agent satisfies
\begin{equation}
\label{eq:bridge-bound}
\EE_{X\sim D}\!\left[\mu^\star(X)-\mu\!\left(X,A^{\mathrm{impl}}\right)\right]
\le
\eps^\star_\infty(K) + \eta_{\mathrm{route}} + \eta_{\mathrm{read}} .
\end{equation}
More generally, for any comparator $(g^\star,\{a_m^\star\})$ with worst-case distortion at most $\bar\eps$, the same argument yields
\[
\EE\!\left[\mu^\star(X)-\mu\!\left(X,A^{\mathrm{impl}}\right)\right]
\le
\bar\eps + \eta_{\mathrm{route}} + \eta_{\mathrm{read}}.
\]
\end{proposition}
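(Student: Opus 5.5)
The plan is to decompose the implemented agent's excess loss pointwise by adding and subtracting the comparator's reward. For every context \(x\) and realized implemented action \(A^{\mathrm{impl}}\),
\[
\mu^\star(x)-\mu(x,A^{\mathrm{impl}})
=
\bigl(\mu^\star(x)-\mu(x,a^\star_{g^\star(x)})\bigr)
+
\bigl(\mu(x,a^\star_{g^\star(x)})-\mu(x,A^{\mathrm{impl}})\bigr).
\]
The first bracket is at most the comparator's worst-case distortion \(\bar\eps\) for every \(x\). Taking \((g^\star,\{a^\star_m\})\) to be a minimizer in \eqref{eq:epsK}, which exists since \(\cX\), \([K]\) and \(\cA\) are finite, gives \(\bar\eps=\eps^\star_\infty(K)\). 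Its expectation under \(D\) is then at most \(\bar\eps\), because a worst-case bound dominates an average.

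For the second bracket, I would condition on the event \(E\coloneqq\{e_\theta(X)=g^\star(X)\}\) and write
\[
\EE[Y]=\Pr(E)\,\EE[Y\mid E]+\Pr(E^c)\,\EE[Y\mid E^c],
\qquad
Y\coloneqq \mu(X,a^\star_{g^\star(X)})-\mu(X,A^{\mathrm{impl}}).
\]
On \(E\), assumption \eqref{eq:readout-error} gives \(\EE[Y\mid E]\le\eta_{\mathrm{read}}\), and \(\Pr(E)\le1\). If \(\EE[Y\mid E]\) were negative, I would simply bound \(\Pr(E)\,\EE[Y\mid E]\le\max\{0,\eta_{\mathrm{read}}\}=\eta_{\mathrm{read}}\), using \(\eta_{\mathrm{read}}\ge0\) implicitly. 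On \(E^c\), rewards lie in \([0,1]\), so \(Y\le1\) and \(\EE[Y\mid E^c]\le1\). Then \eqref{eq:routing-error} gives \(\Pr(E^c)\,\EE[Y\mid E^c]\le\eta_{\mathrm{route}}\). If \(\Pr(E)=0\) or \(\Pr(E^c)=0\), the corresponding term vanishes and the conditional expectation is not needed.

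Summing the two brackets yields \eqref{eq:bridge-bound}, and the general comparator version follows verbatim with \(\bar\eps\) in place of \(\eps^\star_\infty(K)\). The only delicate step is the bookkeeping around the conditioning: \(A^{\mathrm{impl}}\) may be random (via \(\pi_\phi\)), so expectations are taken jointly over \(X\) and the policy randomness. I also need the crude bound \(Y\le1\) on mismatched routes, rather than any structural claim, since nothing is assumed about which slot is chosen off \(E\). I expect the argument to be short once the decomposition is fixed, so no real obstacle is anticipated.
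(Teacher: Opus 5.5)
Your proof is correct and follows the paper's argument essentially verbatim. You use the same pointwise split into a compression term bounded by $\bar\eps$ (or $\eps^\star_\infty(K)$) and an implementation term, which is handled by conditioning on $\{e_\theta(X)=g^\star(X)\}$: $\eta_{\mathrm{read}}$ bounds it on the agreement event and the crude bound $1$ times $\Pr(E^c)\le\eta_{\mathrm{route}}$ bounds it off that event. Your extra bookkeeping makes explicit points the paper's proof uses silently: attainment of the minimizer, the degenerate events, and especially the requirement $\eta_{\mathrm{read}}\ge 0$ when passing from $\Pr(E)\,\EE[Y\mid E]$ to $\eta_{\mathrm{read}}$.
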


\begin{proof}
Decompose the loss according to whether the learned router agrees with the abstract encoder:
\[
\mu^\star(X)-\mu(X,A^{\mathrm{impl}})
=
\underbrace{\mu^\star(X)-\mu\!\left(X,a_{g^\star(X)}^\star\right)}_{\text{compression term}}
+
\underbrace{\mu\!\left(X,a_{g^\star(X)}^\star\right)-\mu(X,A^{\mathrm{impl}})}_{\text{implementation term}}.
\]
The first term is at most $\eps^\star_\infty(K)$ pointwise by definition of the optimal $K$-state abstraction.
For the second term, condition on the event $\{e_\theta(X)=g^\star(X)\}$.
On this event, Assumption~\eqref{eq:readout-error} bounds the excess loss by $\eta_{\mathrm{read}}$ in expectation.
On the complement event, rewards lie in $[0,1]$, so the additional loss is at most $1$.
Therefore
\[
\EE\!\left[\mu\!\left(X,a_{g^\star(X)}^\star\right)-\mu(X,A^{\mathrm{impl}})\right]
\le
\eta_{\mathrm{read}} + \Pr[e_\theta(X)\neq g^\star(X)]
\le
\eta_{\mathrm{read}}+\eta_{\mathrm{route}}.
\]
Combining the two parts proves \eqref{eq:bridge-bound}.
\end{proof}

\paragraph{Interpretation.}
Proposition~\ref{prop:bridge} identifies how the abstract memory frontier and the realized slot-based system compose.
The term $\eps^\star_\infty(K)$ captures the irreducible price of compressing histories into $K$ reusable decision states, while routing and bounded slot realization contribute additional approximation terms through $\eta_{\mathrm{route}}$ and $\eta_{\mathrm{read}}$.
This decomposition makes the role of implementation explicit:
better routing, more faithful slot realizations, and earlier conflict detection improve the realized agent by reducing the approximation terms, while the underlying memory-compression limit remains governed by the $K$-state decision abstraction.

\subsection{Theory-faithful Core Variant on LoCoMo}
\label{app:core-locomo}

To further isolate the contribution of the slot-level memory organization, we evaluate a reduced DeMem variant on \textbf{LoCoMo}, denoted \textsc{DeMem-Core}. This variant keeps the same budgeted slot structure, routing mechanism, and conflict-driven refinement rule as the full method, while removing the richer generation-time enhancements used in the complete implementation. In this sense, \textsc{DeMem-Core} measures how far the slot organization itself can go under the same overall memory regime, before adding the full set of realization and generation components.

\paragraph{Setup.}
We use the same LoCoMo preprocessing, prompting template, answering backbone, deterministic decoding protocol, and judge setup as in the main experiments. The answer-time memory budget is matched to the main setting. Relative to the full system, \textsc{DeMem-Core} uses the same slot budget and the same slot assignment / certified refinement procedure, but relies on a reduced slot realization at answer time rather than the richer memory realization used by the complete model. This yields a more stripped-down version of the method while preserving the same budgeted memory organization.

\paragraph{Result.}
Under this protocol, \textsc{DeMem-Core} reaches an overall LoCoMo score of \textbf{90.8}. For compactness, we report the overall score here; the category-wise pattern follows the same qualitative trend as in the main results.

\begin{table}[t]
\centering
\small
\begin{tabular}{lc}
\toprule
Method & Overall Score \\
\midrule
\textsc{DeMem-Core} & \textbf{90.8} \\
\bottomrule
\end{tabular}
\caption{Overall LoCoMo result for the reduced \textsc{DeMem-Core} variant under the same evaluation protocol as the main experiments.}
\label{tab:locomo-core}
\end{table}

\paragraph{Discussion.}
The result shows that a substantial portion of the final gain already arises from the budgeted slot organization and the conflict-driven refinement mechanism themselves. The full implementation improves further by using richer slot realizations and a stronger generation-time use of the selected memory state, but the reduced variant is already competitive on its own. This supports the view that the central benefit of DeMem lies in how memory is organized under a fixed budget, rather than only in the additional realization details of the full system.

\subsection{Practical split trigger in LLM benchmarks}
\label{app:practical-split-trigger}

The certified split rule in Section~\ref{sec:certificates} is defined for
the online decision model, where repeated context--action feedback supports
one-sided certificates of decision incompatibility. One-pass conversational
benchmarks such as LoCoMo provide an answer-level evaluation interface under
deterministic decoding. We therefore instantiate the same one-sided refinement
principle using candidate-restricted answer-conflict scoring: a split is
introduced only when the scored candidate set indicates that no shared candidate
memory state performs well for both contexts.

For a context \(x=(h,q)\) and a candidate memory slot \(u\), let
\[
a(x,u)
\coloneqq
\mathrm{Ans}_{\theta}(q,S_u)
\]
be the deterministic answer produced from query \(q\) and slot contents
\(S_u\). Let \(r_x\) denote the reference answer for \(x\). The
implementation evaluates the answer using \(B\) feedback scores. In our
main experiments \(B=2\), corresponding to two held-out feedback scorers:
\[
s_b(x,u)
\coloneqq
J_{\mathrm{fb}}^{(b)}\bigl(a(x,u),r_x\bigr)\in[0,1],
\qquad b\in[B].
\]
We define the answer-level reward estimate and its score dispersion as
\[
\widehat\mu_{\mathrm{impl}}(x,u)
\coloneqq
\frac{1}{B}\sum_{b=1}^{B}s_b(x,u),
\qquad
\widehat\sigma^2_{\mathrm{impl}}(x,u)
\coloneqq
\frac{1}{B-1}\sum_{b=1}^{B}
\Bigl(s_b(x,u)-\widehat\mu_{\mathrm{impl}}(x,u)\Bigr)^2 .
\]
The dispersion term is used only as a conservative guard band. Specifically,
we set
\[
\beta_{\mathrm{impl}}(x,u)
\coloneqq
c_{\beta}
\sqrt{\frac{\widehat\sigma^2_{\mathrm{impl}}(x,u)+\sigma_0^2}{B}}
+
\eta_{\mathrm{cal}},
\]
where \(c_{\beta}\) is a fixed safety multiplier, \(\sigma_0>0\) is a small
floor preventing overconfidence from a small number of scorers, and
\(\eta_{\mathrm{cal}}\) is a calibration slack chosen on held-out validation
instances. This quantity serves as an implementation-level guard band for benchmark
scoring, accounting for scorer disagreement and single-pass evaluation noise.

Given a pair of contexts \(x\) and \(x'\) currently assigned to the same
memory block, we construct a small candidate set
\[
\mathcal U(x,x')
\]
of slot-induced decisions. This set includes the current shared slot, the
top router-proposed slots for \(x\) and \(x'\), and, when available, the
candidate split slots obtained by materializing separate slot summaries for
the two contexts. All counterfactual scores are computed by re-running the
deterministic answerer with the corresponding candidate slot and scoring the
resulting answer with the same feedback scorers.

For each \(u\in\mathcal U(x,x')\), because the feedback scores lie in
\([0,1]\), we define clipped lower and upper score envelopes:
\[
\begin{aligned}
\mathrm{LCB}_{\mathrm{impl}}(x,u)
&\coloneqq
\max\!\left\{0,\,
\widehat\mu_{\mathrm{impl}}(x,u)-\beta_{\mathrm{impl}}(x,u)
\right\},\\
\mathrm{UCB}_{\mathrm{impl}}(x,u)
&\coloneqq
\min\!\left\{1,\,
\widehat\mu_{\mathrm{impl}}(x,u)+\beta_{\mathrm{impl}}(x,u)
\right\}.
\end{aligned}
\]
The candidate-restricted analogue of the optimal value for \(x\) is
\[
\underline{\mu}^{\star}_{\mathcal U}(x)
\coloneqq
\max_{v\in\mathcal U(x,x')}
\mathrm{LCB}_{\mathrm{impl}}(x,v).
\]
For a shared candidate slot \(u\), we then define a conservative lower
estimate of its candidate-restricted decision loss:
\[
\widehat\Delta^{-}_{\mathcal U}(x,u)
\coloneqq
\Bigl[
\underline{\mu}^{\star}_{\mathcal U}(x)
-
\mathrm{UCB}_{\mathrm{impl}}(x,u)
\Bigr]_+ .
\]
This follows the form of the theoretical suboptimality gap
\(\Delta(x,a)=\mu^\star(x)-\mu(x,a)\), specialized to the candidate set of
slot-induced answers and evaluated with the implementation-level guard band.

We declare a pairwise answer conflict when every shared candidate slot
in \(\mathcal U(x,x')\) incurs a sufficiently large lower-bounded loss for
at least one of the two contexts. Formally, define the candidate-restricted radius certificate
\[
\widehat\rho^{-}_{\mathrm{impl}}(x,x')
\coloneqq
\min_{u\in\mathcal U(x,x')}
\max\Bigl\{
\widehat\Delta^{-}_{\mathcal U}(x,u),
\widehat\Delta^{-}_{\mathcal U}(x',u)
\Bigr\}.
\]
A cannot-link edge is added when
\begin{equation}
\label{eq:impl-split-trigger}
\widehat\rho^{-}_{\mathrm{impl}}(x,x')
>
\epsilon_{\mathrm{split}} .
\end{equation}
The resulting cannot-link graph is then passed to the same budgeted
partitioning step used by the memory encoder. Thus the split trigger
identifies decision conflicts, while the fixed \(K\)-slot budget still
controls which conflicts can be separated at runtime.

This rule is one-sided by design. A fired edge indicates candidate-level
evidence of decision conflict, while the absence of an edge is not used as a
positive compatibility claim. This conservative convention preserves the
direction of the theoretical cannot-link logic under deterministic benchmark
evaluation.

\section{Additional Experimental Details}
\label{app:exp-details}
\subsection{Synthetic Diagnostic Environment}
\label{app:synth}

We construct a controlled \emph{Decoupled Bandit} environment to isolate the effect of memory organization under a fixed runtime memory budget.

\paragraph{Contexts, descriptions, and decision identities.}
At each round \(t=1,\dots,T\), the environment emits a context
\[
c_t = (x_t, z_t),
\]
where \(x_t \in \mathbb{R}^d\) is a \emph{descriptive representation} and \(z_t \in \{1,\dots,M\}\) is a latent \emph{decision identity}. 
The key property of the environment is that \(x_t\) and \(z_t\) are only partially aligned: descriptively similar contexts need not share the same optimal action.

Each decision identity \(z\) is associated with an action-value vector
\[
\mu_z \in [0,1]^A,
\]
where \(A\) is the number of available actions. The optimal action for a context \(c_t\) is
\[
a^\star(c_t) = \arg\max_{a \in [A]} \mu_{z_t,a}.
\]

\paragraph{Reward model.}
After choosing action \(a_t\), the learner receives reward
\[
r_t = \mu_{z_t,a_t} + \xi_t,
\]
where \(\xi_t\) is zero-mean observation noise. The cumulative regret up to round \(T\) is
\[
\mathrm{Regret}(T) = \sum_{t=1}^{T} \bigl(\mu_{z_t,a^\star(c_t)} - \mu_{z_t,a_t}\bigr).
\]

\paragraph{Runtime memory budget.}
At any time, the learner is allowed to maintain at most \(K\) reusable memory states (or slots),
\[
|\mathcal{M}_t| \le K.
\]
Each method defines its own assignment rule from a context \(c_t\) to a memory state \(m(c_t)\in \mathcal{M}_t\), but all methods are evaluated under the same budget \(K\).

\paragraph{Decision distortion induced by compression.}
Given a memory assignment \(m(c)\), define the induced decision profile of a memory state \(s\in\mathcal{M}\) as the average reward vector over the contexts mapped to it,
\[
\bar{\mu}_s = \mathbb{E}\!\left[\mu_{z} \mid m(c)=s\right].
\]
The action selected from the compressed memory is
\[
\hat{a}(c)=\arg\max_{a\in[A]} \bar{\mu}_{m(c),a}.
\]
We define the decision distortion of a memory system as
\[
D(\mathcal{M}) = \mathbb{P}\!\bigl[\hat{a}(c) \neq a^\star(c)\bigr],
\]      
or equivalently one may use expected value loss
\[
D_{\mathrm{val}}(\mathcal{M}) 
= \mathbb{E}\!\left[\mu_{z,a^\star(c)} - \mu_{z,\hat{a}(c)}\right].
\]
Figure~\ref{fig:synthetic-results}(b) reports the empirical memory--distortion curve by varying \(K\).
\paragraph{Mismatch-severity parameterization.}
To study the mechanism continuously rather than at a single operating point, we introduce a mismatch parameter \(\alpha \in [0,1]\) controlling the discrepancy between descriptive similarity and decision similarity.

Concretely, let \(\tilde{z}_t\) denote a description-induced pseudo-identity obtained from the descriptive feature \(x_t\). 
We then sample the true decision identity \(z_t\) from a mixture
\[
z_t =
\begin{cases}
\tilde{z}_t, & \text{with probability } 1-\alpha,\\
\mathrm{Perm}(\tilde{z}_t), & \text{with probability } \alpha,
\end{cases}
\]
where \(\mathrm{Perm}(\cdot)\) is a fixed permutation that maps descriptively similar contexts to decision-distinct groups. 
Thus, \(\alpha=0\) corresponds to a fully aligned regime, while larger \(\alpha\) produces increasingly severe mismatch.

Figure~\ref{fig:synthetic-results}(c) reports performance as a function of \(\alpha\) under a fixed runtime memory budget \(K\).
\paragraph{Synthetic analyses.}
The three panels in Figure~\ref{fig:synthetic-results} evaluate complementary aspects of the same environment.

\begin{itemize}
    \item \textbf{Figure~\ref{fig:synthetic-results}(a): Cumulative regret.}  
    We fix the runtime memory budget \(K\) and compare the cumulative regret trajectories of all methods over \(T\) rounds.

    \item \textbf{Figure~\ref{fig:synthetic-results}(b): Empirical memory--distortion curve.}  
    We vary \(K\) over a predefined budget grid and estimate the resulting decision distortion \(D(\mathcal{M})\). 
    This yields an empirical counterpart of the memory--distortion tradeoff in the theory.

    \item \textbf{Figure~\ref{fig:synthetic-results}(c): Mismatch mechanism analysis.}  
    We fix \(K\) and vary the mismatch severity \(\alpha\). 
    This directly tests the central hypothesis of the paper: a decision-centric memory should become increasingly advantageous as descriptive similarity becomes less informative for action selection.
\end{itemize}
\paragraph{Oracle.}
The oracle baseline has access to the true decision identity \(z_t\). 
It therefore assigns each context to the correct decision state and selects
\[
a_t^{\mathrm{oracle}} = \arg\max_{a\in[A]} \mu_{z_t,a}.
\]
This serves as an upper bound on achievable performance in the synthetic environment.
\paragraph{Feature-KMeans.}
This baseline organizes memory purely by descriptive similarity. 
Given a budget \(K\), it clusters the descriptive features \(\{x_t\}\) into \(K\) clusters with centers \(\{\nu_1,\dots,\nu_K\}\) by minimizing
\[
\sum_{t=1}^{T} \min_{k\in[K]} \|x_t - \nu_k\|_2^2.
\]
At test time, a context \(x\) is assigned to the nearest cluster
\[
m_{\mathrm{FKM}}(x) = \arg\min_{k\in[K]} \|x-\nu_k\|_2^2,
\]
and the action is chosen from the empirical reward vector stored for that cluster,
\[
a_t^{\mathrm{FKM}} = \arg\max_{a\in[A]} \hat{\mu}_{m_{\mathrm{FKM}}(x_t),a}.
\]
Its core inductive bias is that descriptively similar contexts should share memory.
\paragraph{Feature-RAG.}
This baseline retrieves memory by nearest-neighbor similarity in descriptive feature space. 
Given a memory bank \(\mathcal{B}_t = \{(x_i,\hat{\mu}_i)\}_{i\le t}\), it retrieves
\[
j^\star(x_t)=\arg\min_{i\le t}\|x_t-x_i\|_2,
\]
or the top-\(k\) nearest neighbors under the same metric. 
The predicted reward vector is then obtained by local aggregation,
\[
\hat{\mu}^{\mathrm{FRAG}}(x_t)
=
\frac{1}{Z_t}\sum_{i\in \mathcal{N}_k(x_t)} w_i(x_t)\hat{\mu}_i,
\]
where \(w_i(x_t)\) are similarity weights and \(Z_t=\sum_{i\in\mathcal{N}_k(x_t)}w_i(x_t)\). 
The action is
\[
a_t^{\mathrm{FRAG}}=\arg\max_{a\in[A]}\hat{\mu}^{\mathrm{FRAG}}_a(x_t).
\]
Unlike DeMem, this baseline does not explicitly preserve decision-relevant distinctions beyond local descriptive similarity.
\paragraph{\(\epsilon\)-greedy feature clustering.}
This baseline maintains a set of feature-based clusters online. 
Given a context \(x_t\), it is assigned to the nearest existing cluster in descriptive space, unless a new cluster is created by an exploration step. 
Let \(m_t(x_t)\) denote the selected cluster. 
The action is chosen by
\[
a_t=
\begin{cases}
\text{Uniform}([A]), & \text{with probability } \epsilon,\\
\arg\max_{a\in[A]}\hat{\mu}_{m_t(x_t),a}, & \text{with probability } 1-\epsilon.
\end{cases}
\]
Cluster statistics are then updated with the observed reward. 
The baseline is still description-aware because the state abstraction is driven by feature proximity rather than decision conflict.
\paragraph{CLUB-style online clustering baseline.}
We also include an online partition baseline in the spirit of \citet{gentile2014onlineclusteringbandits}. 
It maintains a graph \(G_t=(V_t,E_t)\) over observed contexts or local states, where nodes with similar estimated reward vectors remain connected. 
Let \(\hat{\mu}_i\in\mathbb{R}^A\) be the current reward estimate associated with node \(i\), and let \(\beta_i(t)\) be its confidence radius. 
An edge \((i,j)\in E_t\) is removed whenever
\[
\|\hat{\mu}_i - \hat{\mu}_j\|_\infty > \beta_i(t)+\beta_j(t).
\]
The connected components of \(G_t\) define the current partition, and the action for a context is chosen using the reward estimate of its current component. 
This baseline adapts partitions based on observed feedback, but it is not explicitly designed to optimize a fixed reusable memory abstraction under a strict runtime memory budget.
\paragraph{Random Partition.}
As a control baseline, we partition contexts into \(K\) groups uniformly at random:
\[
m_{\mathrm{rand}}(c) \sim \mathrm{Unif}([K]).
\]
Each group stores a pooled empirical reward estimate, and actions are selected greedily from that estimate. 
This baseline verifies that the gains of DeMem do not arise merely from partitioning itself.

\subsubsection{Induced partition distortion}
\label{app:partition-distortion}

To directly evaluate the quality of a learned memory partition in the synthetic environment, we measure the \emph{induced partition distortion} of a learned encoder $\hat g:\cX\to[K]$.
Let
\[
\hat C_m := \{x\in\cX : \hat g(x)=m\}
\qquad\text{for }m=1,\dots,K.
\]
We define the induced average decision distortion of $\hat g$ by
\begin{equation}
\label{eq:induced-partition-distortion}
D^\star(\hat g)
:=
\sum_{m=1}^K
\Pr(X\in \hat C_m)\,
\min_{a\in\cA}
\EE\!\left[\Delta(X,a)\mid X\in \hat C_m\right].
\end{equation}
Equivalently, $D^\star(\hat g)$ is the best achievable average decision distortion under the learned partition $\hat g$ itself, after assigning a single action to each learned cluster.
Because the synthetic environment exposes the ground-truth reward function $\mu(x,a)$, and hence the gaps $\Delta(x,a)$, the quantity in \eqref{eq:induced-partition-distortion} can be computed exactly for every learned partition.

For completeness, one may also define the induced worst-case distortion
\begin{equation}
\label{eq:induced-partition-distortion-worst}
D^\star_\infty(\hat g)
:=
\max_{m\in[K]}
\min_{a\in\cA}
\max_{x\in \hat C_m}
\Delta(x,a),
\end{equation}
but in the main text we report the average version in \eqref{eq:induced-partition-distortion}, since it aligns more directly with the observed cumulative regret trends.


Figure~\ref{fig:partition-validation} provides direct validation.
Panel~(a) shows that DeMem consistently learns the lowest-distortion partition under the same budget, while description-aware baselines degrade sharply as mismatch severity grows.
Panel~(b) shows that downstream regret is broadly monotone in induced partition distortion across methods, mismatch levels, and random seeds, confirming that the performance gain comes from learning a better decision-preserving compression.
Residual spread in the low-distortion regime reflects the separate online learning and discovery cost after compression error is largely eliminated.

\begin{figure*}[t]
\centering
\includegraphics[width=0.92\textwidth]{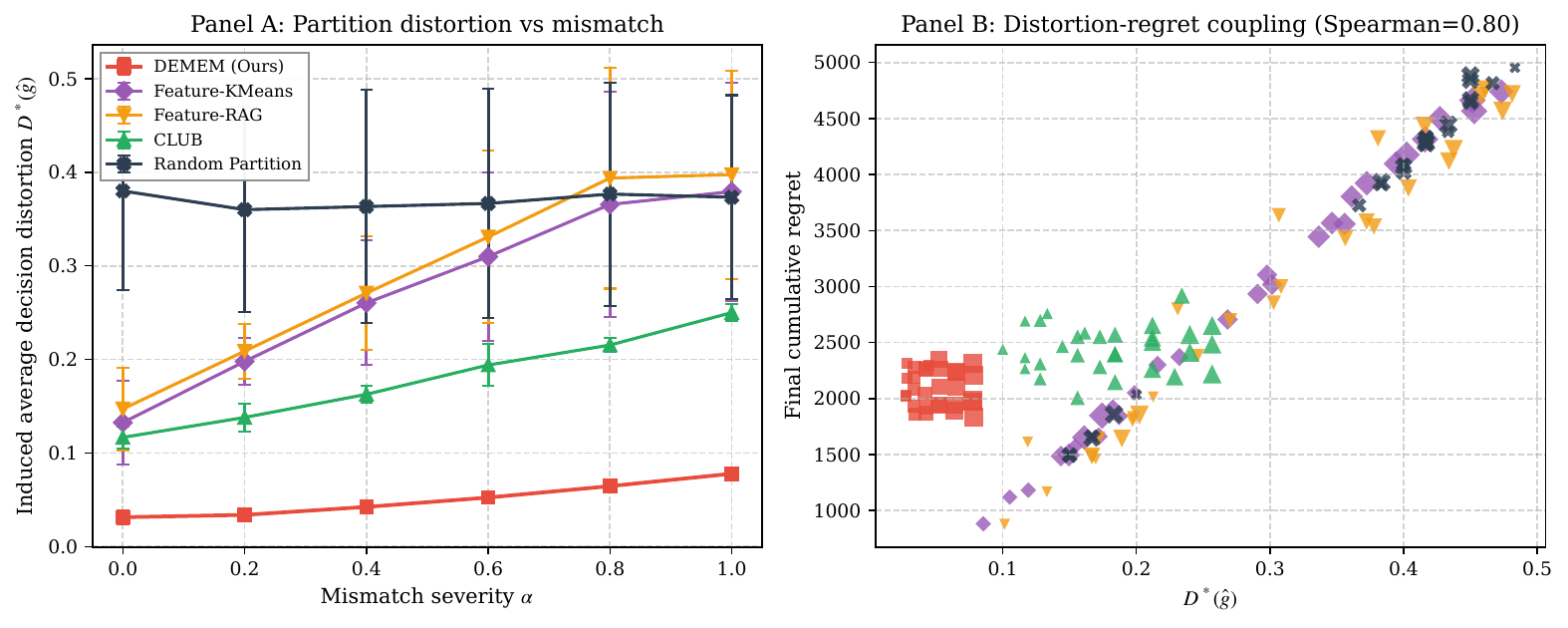}
\caption{Direct validation of the theory object in the synthetic environment.
(a) DeMem learns the lowest-distortion partition; description-aware baselines degrade as mismatch grows.
(b) Downstream regret is broadly monotone in induced partition distortion across methods and seeds.}
\label{fig:partition-validation}
\end{figure*}

\subsection{Benchmark and Evaluation Details}
\label{app:bench-details}
\paragraph{Mapping evaluation to the abstract model.}
The evaluation protocol on LoCoMo and LongMemEval directly instantiates the contextual decision model of Section~\ref{sec:setup}.
For each evaluation instance, the context $x=(h,q)$ consists of the dialogue history~$h$ and the evaluation query~$q$.
Each memory method maps $x$ to a bounded memory content~$M_q$ and produces an answer~$a$ conditioned on~$M_q$.
For slot-based methods (including DeMem), this corresponds exactly to the encoder--memory-state--action chain $x \xrightarrow{g} m \xrightarrow{\pi(\cdot\mid m)} a$ in \eqref{eq:memory-policy}, where the slot identity~$m$ is the runtime decision state and the answer~$a$ is the downstream action.
For non-slot methods (e.g., RAG, FullContext), the same chain applies with a degenerate or implicit encoder: RAG uses a retrieval-based encoder that selects a variable-size evidence set, and FullContext bypasses compression entirely.
The reward $\mu(x,a)$ is the judge-assigned correctness score defined below.
This mapping ensures that the benchmark comparisons directly test the central prediction of the theory: under a matched answer-time memory budget, methods that better preserve \emph{decision-relevant} distinctions should achieve higher reward.
\paragraph{Matched answer-time memory budget.}
Except for \textbf{FullContext}, all methods are compared under the same answer-time memory budget whenever applicable. 
For a query \(q\), let \(M_q\) denote the final memory content injected into the answering model. 
The realized answer-time budget is measured as
\[
B(q) = \mathrm{chars}(M_q),
\]
or equivalently in tokens when a token-based implementation is used. 
Budget matching is performed at the level of the final injected answer context rather than offline storage size.
\paragraph{Judge aggregation.}
Let \(s_1(q)\) and \(s_2(q)\) denote the scores assigned by the two held-out judges for query \(q\). 
The final query-level score is
\[
s(q)=\frac{1}{2}\bigl(s_1(q)+s_2(q)\bigr),
\]
and the reported benchmark score is the average of \(s(q)\) over all evaluation instances in the corresponding split.

\subsubsection{Judge prompt for LoCoMo}
\label{app:judge_prompt_locomo}

For LoCoMo, we used a binary LLM-as-a-judge protocol. The judge was shown only the reference answer and the response produced by a method. Method names were hidden from the judge. The judge was asked to determine whether the method response correctly matched the reference answer in meaning.

\paragraph{Binary judge prompt.}

\begin{quote}
\small
\textbf{System message.}

You are an impartial evaluator for long-term conversational memory. Your task is to judge whether a method's response correctly answers the question by comparing it with the reference answer. Evaluate semantic correctness rather than wording style.

\vspace{0.5em}
\textbf{User message.}

You will be given a reference answer and a method response.

Judge whether the method response is correct with respect to the reference answer.

Mark the response as \texttt{1} if it matches the reference answer in meaning. Paraphrases, minor wording differences, and harmless omissions should still be marked correct if the essential requested information is present.

Mark the response as \texttt{0} if it contradicts the reference answer, gives the wrong entity, event, time, relation, or attribute, misses the essential information, or is too vague to be considered equivalent to the reference answer.

For temporal questions, pay particular attention to dates, ordering, and whether the response refers to the same event instance as the reference answer.

Return only the following format:

\texttt{Score: <0 or 1>} \\
\texttt{Rationale: <one brief sentence>}

\vspace{0.5em}
\textbf{Reference answer:}

\texttt{\{reference\_answer\}}

\vspace{0.5em}
\textbf{Method response:}

\texttt{\{method\_response\}}
\end{quote}

\paragraph{Aggregation.}
For each method response, we queried two held-out judge instances independently and averaged their binary scores. The reported LoCoMo score is the average over judges and evaluation instances. Judge rationales were used only for sanity checking and were not used as additional labels or training data.

\subsubsection{Existing assets, licenses, and terms of use}
\label{app:existing_assets}

We use existing public benchmarks, model APIs, open-weight models, and baseline systems for evaluation and comparison. These assets are credited through their original papers or official releases, and their use follows the corresponding release terms or provider policies. We do not redistribute third-party assets as part of this submission.

\begin{table}[h]
\centering
\small
\begin{tabular}{p{0.20\linewidth} p{0.16\linewidth} p{0.28\linewidth} p{0.26\linewidth}}
\toprule
Asset & Type & Use in this paper & License / terms \\
\midrule
LoCoMo & Benchmark dataset & Long-term conversational memory evaluation & Used under the official dataset release terms; original work cited in the paper. \\
LongMemEval & Benchmark dataset & Long-term memory evaluation & Used under the official dataset release terms; original work cited in the paper. \\
MemoryArena & Benchmark dataset & Agentic memory evaluation & Used under the official benchmark release terms; original work cited in the paper. \\
GPT-4o-mini and GPT-4.1-mini & Hosted model API & Answer generation and held-out judging & Used through the provider API under the applicable API terms of service. \\
Llama-3.1-70B & Open-weight model & Supplementary backbone evaluation & Used under the official model license and usage policy. \\
LangMem, Mem0, Zep, Nemori, EMem-G, Mnemis & Baseline systems & Comparative evaluation & Original systems are credited through their papers or official releases; use followed the corresponding repository, package, or service terms. \\
RAG / FullContext baselines & Evaluation baselines & Retrieval and full-context comparison & Implemented for evaluation using public benchmark data and standard retrieval components. \\
\bottomrule
\end{tabular}
\caption{Existing assets used in the empirical evaluation. We credit the original creators and use each asset only according to its official release terms. No third-party asset is redistributed or relicensed as part of this submission.}
\label{tab:existing_assets}
\end{table}
\paragraph{Comparability to native baseline reports.}
Some baselines report results under their own native evaluation protocols. These numbers are useful references, but they are not always directly comparable to results obtained under a shared evaluation setup. In this paper, all methods are evaluated with the same answering backbone, decoding setting, benchmark split, answer format, and held-out judge protocol. Accordingly, modest numerical differences from native baseline reports should be expected and should not be interpreted as contradictions of prior work. The main comparison in this paper is the relative performance of methods under the same controlled protocol.

Uncertainty reporting. For each method and backbone, we repeat the full
evaluation pipeline for \(R=5\) independent runs. The benchmark split and
answering backbone are fixed across runs, while stochastic components of
the memory pipeline, including memory construction, routing/retrieval tie
breaking, ordering, and judge invocation seeds where applicable, are rerun
independently. Within each run, the two held-out judge scores are averaged
per instance, and benchmark-level metrics are computed by averaging over
evaluation instances. We report the mean across runs together with one
sample standard deviation:
\[
\bar{s}=\frac{1}{R}\sum_{r=1}^{R}s_r,\qquad
\sigma_s=\sqrt{\frac{1}{R-1}\sum_{r=1}^{R}(s_r-\bar{s})^2}.
\]
The reported \(\pm\) values quantify run-to-run variability and should not
be interpreted as confidence intervals over evaluation instances.

\subsection{Full LongMemEval Results}
\label{app:longmemeval-table}

Table~\ref{tab:longmemeval-main} reports the full per-category results on LongMemEval.
DeMem achieves the best overall performance on both backbones.
The advantage is most visible on MS and TR, which require cross-session integration.
Some specialized baselines remain competitive on individual categories (SSU, SSP, SSA).Values are mean \(\pm\) one standard deviation over independent full-pipeline runs.

\begin{table*}[t]
\centering
\caption{\textbf{Results on LongMemEval.} Accuracy evaluated with LLM-as-a-Judge.
We report results for SSU, MS, SSP, TR, KU, and SSA, together with the overall score.
Values are mean \(\pm\) one sample standard deviation over \(R=5\) independent
full-pipeline runs. DeMem achieves the best mean overall performance on
both backbones and is particularly strong on MS and TR, while some specialized
baselines remain competitive on individual categories.}
\label{tab:longmemeval-main}
\small
\resizebox{\textwidth}{!}{%
\begin{tabular}{llccccccc}
\toprule
\textbf{backbone model} & \textbf{Methods} & \textbf{SSU} & \textbf{MS} & \textbf{SSP} & \textbf{TR} & \textbf{KU} & \textbf{SSA} & \textbf{Overall} \\
\midrule
\multicolumn{2}{l}{\#Questions} & 70 & 133 & 30 & 133 & 78 & 56 & 500 \\
\midrule
\multirow{8}{*}{GPT-4o-mini}
 & Full Context & $0.763_{\pm 0.052}$ & $0.365_{\pm 0.068}$ & $0.053_{\pm 0.061}$ & $0.394_{\pm 0.055}$ & $0.763_{\pm 0.049}$ & $0.862_{\pm 0.041}$ & $0.527_{\pm 0.054}$ \\
 & RAG          & $0.869_{\pm 0.071}$ & $0.450_{\pm 0.084}$ & $0.667_{\pm 0.078}$ & $0.613_{\pm 0.073}$ & $0.677_{\pm 0.076}$ & $0.896_{\pm 0.065}$ & $0.650_{\pm 0.074}$ \\
 & Mem0         & $0.893_{\pm 0.058}$ & $0.633_{\pm 0.069}$ & $0.323_{\pm 0.072}$ & $0.616_{\pm 0.064}$ & $0.713_{\pm 0.061}$ & $0.951_{\pm 0.053}$ & $0.694_{\pm 0.063}$ \\
 & Nemori       & $0.867_{\pm 0.063}$ & $0.485_{\pm 0.075}$ & $0.445_{\pm 0.070}$ & $0.586_{\pm 0.066}$ & $0.598_{\pm 0.068}$ & $0.815_{\pm 0.059}$ & $0.618_{\pm 0.067}$ \\
 & EMem-G       & $0.846_{\pm 0.055}$ & $0.718_{\pm 0.062}$ & $0.307_{\pm 0.065}$ & $0.726_{\pm 0.058}$ & $0.931_{\pm 0.047}$ & $0.846_{\pm 0.054}$ & $0.761_{\pm 0.057}$ \\
 & Mnemis       & $0.917_{\pm 0.048}$ & $0.746_{\pm 0.056}$ & $0.833_{\pm 0.051}$ & $0.809_{\pm 0.053}$ & $0.874_{\pm 0.045}$ & $0.949_{\pm 0.038}$ & $0.835_{\pm 0.049}$ \\
 & \textbf{DeMem}
   & $0.901_{\pm 0.042}$
   & \textbf{0.812}$_{\pm 0.048}$
   & \textbf{0.867}$_{\pm 0.045}$
   & \textbf{0.837}$_{\pm 0.043}$
   & $0.846_{\pm 0.050}$
   & $0.929_{\pm 0.040}$
   & \textbf{0.853}$_{\pm 0.046}$ \\
\midrule
\multirow{8}{*}{GPT-4.1-mini}
 & Full Context & $0.831_{\pm 0.049}$ & $0.492_{\pm 0.064}$ & $0.154_{\pm 0.058}$ & $0.578_{\pm 0.052}$ & $0.747_{\pm 0.055}$ & $0.915_{\pm 0.037}$ & $0.629_{\pm 0.053}$ \\
 & RAG          & $0.807_{\pm 0.068}$ & $0.522_{\pm 0.079}$ & $0.848_{\pm 0.072}$ & $0.646_{\pm 0.070}$ & $0.792_{\pm 0.073}$ & $0.923_{\pm 0.062}$ & $0.701_{\pm 0.071}$ \\
 & Mem0         & $0.925_{\pm 0.056}$ & $0.646_{\pm 0.066}$ & $0.836_{\pm 0.063}$ & $0.743_{\pm 0.059}$ & $0.845_{\pm 0.057}$ & $0.945_{\pm 0.050}$ & $0.787_{\pm 0.060}$ \\
 & Nemori       & $0.873_{\pm 0.060}$ & $0.537_{\pm 0.071}$ & $0.843_{\pm 0.067}$ & $0.694_{\pm 0.063}$ & $0.774_{\pm 0.065}$ & $0.915_{\pm 0.056}$ & $0.723_{\pm 0.064}$ \\
 & EMem-G       & $0.933_{\pm 0.052}$ & $0.803_{\pm 0.058}$ & $0.481_{\pm 0.061}$ & $0.820_{\pm 0.054}$ & $0.918_{\pm 0.048}$ & $0.854_{\pm 0.051}$ & $0.830_{\pm 0.055}$ \\
 & Mnemis       & $0.923_{\pm 0.045}$ & $0.846_{\pm 0.051}$ & $0.894_{\pm 0.042}$ & $0.843_{\pm 0.048}$ & $0.879_{\pm 0.046}$ & $0.919_{\pm 0.039}$ & $0.872_{\pm 0.047}$ \\
 & \textbf{DeMem}
   & $0.930_{\pm 0.039}$
   & \textbf{0.885}$_{\pm 0.044}$
   & \textbf{0.917}$_{\pm 0.041}$
   & \textbf{0.875}$_{\pm 0.040}$
   & \textbf{0.885}$_{\pm 0.047}$
   & \textbf{0.936}$_{\pm 0.038}$
   & \textbf{0.896}$_{\pm 0.043}$ \\
\bottomrule
\end{tabular}%
}
\end{table*}

\subsection{Results on MemoryArena}
\label{app:memoryarena}

To evaluate whether decision-centric memory transfers beyond conversational QA to agentic task completion, we report results on MemoryArena~\citep{he2026memoryarenabenchmarkingagentmemory}, a benchmark of interdependent multi-session tasks spanning formal reasoning, bundled web shopping, group travel planning, progressive memory, web search, and math/physics problem solving. Tasks within each category exhibit cross-session dependencies: the output of one session serves as a prerequisite or constraint for subsequent sessions, so the agent must remember \emph{which prior results matter for the current decision}, not merely \emph{what was discussed}.

We compare DeMem against three baselines that overlap with our main experiments: Long Context (GPT-4.1-mini), Mem0~\citep{chhikara2025mem0buildingproductionreadyai}, and the average of four RAG systems (BM25, Text-Embedding-3-Small, MemoRAG, GraphRAG) reported in the original benchmark. All methods use GPT-4.1-mini as the task agent backbone. We adopt the evaluation protocol, metrics, and task splits of \citet{he2026memoryarenabenchmarkingagentmemory} without modification. SR denotes Success Rate (binary task completion), PS denotes Progress Score (partial completion), and sPS denotes sub-task Progress Score.

\begin{table}[h]
\centering
\caption{Results on MemoryArena. SR = Success Rate, PS = Progress Score, sPS = sub-task Progress Score. Avg SR is computed over the five SR columns: Bundled Web Shopping, Group Travel Planning, Progressive Web Search, Math, and Phys.}
\label{tab:memoryarena}
\resizebox{\textwidth}{!}{%
\begin{tabular}{l cc ccc cc cc cc c}
\toprule
& \multicolumn{2}{c}{Bundled Web Shop.}
& \multicolumn{3}{c}{Group Travel}
& \multicolumn{2}{c}{Progressive Web Search}
& \multicolumn{4}{c}{Formal Reasoning}
& \\
\cmidrule(lr){2-3}
\cmidrule(lr){4-6}
\cmidrule(lr){7-8}
\cmidrule(lr){9-12}
Method
& SR & PS
& SR & PS & sPS
& SR & PS
& \multicolumn{2}{c}{Math}
& \multicolumn{2}{c}{Phys.}
& Avg SR \\
\cmidrule(lr){9-10}
\cmidrule(lr){11-12}
& & & & & & & & SR & PS & SR & PS & \\
\midrule
Long Context (4.1-mini)
& 0.06 & 0.64
& 0.00 & 0.02 & 0.44
& 0.04 & 0.04
& 0.23 & 0.33
& 0.46 & 0.58
& 0.16 \\

Mem0
& 0.00 & 0.45
& 0.00 & 0.00 & 0.24
& 0.24 & 0.09
& 0.19 & 0.34
& 0.25 & 0.43
& 0.14 \\

RAG (Avg)
& 0.00 & 0.54
& 0.00 & 0.02 & 0.49
& 0.19 & 0.11
& 0.26 & 0.38
& 0.53 & 0.65
& 0.20 \\
\midrule
DeMem (Ours)
& \textbf{0.08} & \textbf{0.68}
& \textbf{0.03} & \textbf{0.05} & \textbf{0.52}
& \textbf{0.27} & \textbf{0.16}
& \textbf{0.29} & \textbf{0.39}
& \textbf{0.55} & \textbf{0.67}
& \textbf{0.24} \\
\bottomrule
\end{tabular}%
}
\end{table}

Several observations can be made from Table~\ref{tab:memoryarena}. First, DeMem obtains the highest average success rate across the five SR-based categories, with an Avg SR of 0.24 compared with 0.20 for RAG, 0.16 for Long Context, and 0.14 for Mem0. The improvement is not concentrated in a single setting: DeMem achieves the highest SR in Bundled Web Shopping, Group Travel Planning, Progressive Web Search, Math Formal Reasoning, and Physics Formal Reasoning.

Second, DeMem shows relatively consistent gains in task settings that require information to be carried across sessions. On Progressive Web Search, DeMem reaches an SR of 0.27, compared with 0.24 for Mem0 and 0.19 for RAG. On Formal Reasoning, DeMem also improves over the strongest baseline in both Math (0.29 vs.\ 0.26) and Physics (0.55 vs.\ 0.53). These results suggest that organizing memory around decision-relevant information can help preserve dependencies that become useful in later sessions.

Third, Group Travel Planning remains challenging for all methods, as reflected by the low hard SR values. Nevertheless, DeMem is the only method with nonzero SR in this category and achieves the highest soft progress score (sPS\,=\,0.52), compared with 0.49 for RAG, 0.44 for Long Context, and 0.24 for Mem0. This indicates that decision-centric memory may provide modest but consistent benefits in settings where later plans depend on preferences and constraints introduced by earlier participants.

Overall, the MemoryArena results complement our conversational benchmark findings by showing that decision-centric memory remains useful in interactive, multi-session agentic tasks. Under a fixed memory budget, DeMem tends to retain information that is relevant for downstream decisions, leading to stronger average task completion and progress across diverse task categories.

\subsection{Definition of Memory Budget and Latency}
\label{app:budget-latency}

Figure~\ref{fig:acc-budget-latency} uses two resource metrics: memory budget and query-time latency.

\paragraph{Memory budget.}
For a query $q$, let $M_q$ denote the \emph{final memory content actually injected into the answering model}. 
We define the query-level memory budget as
\[
B(q) = \mathrm{chars}(M_q),
\]
where $\mathrm{chars}(\cdot)$ counts the number of characters in the final injected memory string. 
Equivalently, one may use tokens instead of characters; throughout our plots we use a single consistent unit for all methods.

This definition is based on the realized \emph{answer-time} memory footprint, rather than offline storage size or internal state size. 
For methods with structured memory (e.g., slots, entities, edges, or retrieved evidence), we first serialize the final memory objects into the exact text block provided to the answering model, and then count its length. 
Thus, the budget reflects the amount of memory made available at inference time in a method-agnostic way.

For a dialogue $d$ with evaluation questions $Q_d$, we report the dialogue-level average budget:
\[
B(d) = \frac{1}{|Q_d|}\sum_{q \in Q_d} B(q).
\]

\paragraph{Query-time latency.}
For a query $q$, let $t_{\mathrm{start}}(q)$ be the time when the system receives the query, and let $t_{\mathrm{end}}(q)$ be the time when the final answer is returned. 
We define query-time latency as
\[
L(q) = t_{\mathrm{end}}(q) - t_{\mathrm{start}}(q).
\]

This latency includes all \emph{online} operations needed to answer the query, including retrieval, routing, memory readout, optional reranking, prompt assembly, and answer generation. 
It excludes \emph{offline} or amortized preprocessing costs such as corpus indexing, memory construction, periodic rebuilding, or background maintenance.

For a dialogue $d$, we report the dialogue-level average latency:
\[
L(d) = \frac{1}{|Q_d|}\sum_{q \in Q_d} L(q).
\]

\paragraph{Method-level operating points.}
Each point in Figure~\ref{fig:acc-budget-latency} corresponds to the dataset-level average of the dialogue-level values above. 
That is, for each method we average $B(d)$ and $L(d)$ over dialogues, and plot the resulting operating point against the corresponding accuracy.

\paragraph{Rationale.}
The two metrics capture different aspects of resource usage. 
Memory budget measures \emph{how much memory is provided} to the answering model at inference time, whereas latency measures \emph{how long the system takes} to use that memory online. 
We report them separately rather than collapsing them into a single score.


Figure~\ref{fig:acc-budget-latency} plots the resulting tradeoffs.
Across operating points, DeMem lies on a favorable frontier in both the accuracy--budget and accuracy--latency planes, indicating that its gains do not come from using more memory or paying higher query-time overhead.

\subsection{Ablation Variants and Hyperparameter Sweep}
\label{app:ablation-details}

\paragraph{Ablation variants.}
Starting from the full DeMem system, we consi  der the following variants: 
(i) no dynamic splitting, 
(ii) heuristic splitting, 
(iii) aggressive splitting, 
(iv) random splitting, 
(v) feature-based routing, and 
(vi) description-only memory summaries. 
These variants isolate the contributions of dynamic refinement, certified splitting, decision-aware routing, and decision-aware memory content.

\paragraph{Hyperparameter sweep.}
For Figure~\ref{fig:ablation-param}(a), we evaluate the Cartesian product
\[
\mathcal{G}
=
\mathcal{K}_{\mathrm{topk}}
\times
\mathcal{K}_{\mathrm{evi}}
\times
\mathcal{K}_{\mathrm{sess}}
\times
\mathcal{K}_{\mathrm{mem}},
\]
where
\[
\mathcal{K}_{\mathrm{topk}}=\{6,8,10,12\},\quad
\mathcal{K}_{\mathrm{evi}}=\{3000,4000,5000,6000\},
\]
\[
\mathcal{K}_{\mathrm{sess}}=\{2,3,4\},\quad
\mathcal{K}_{\mathrm{mem}}=\{6000,8000,10000\}.
\]
This yields \(4\times 4\times 3\times 3=144\) configurations in total.
\subsection{Results under a Unified Open-Weight Backbone}
\label{app:llama-results}
\begin{table*}[t]
\centering
\small
\setlength{\tabcolsep}{5pt}
\resizebox{\textwidth}{!}{%
\begin{tabular}{llcccc}
\toprule
Model & Method & LongMemEval Acc. & LongMemEval Rec. & LoCoMo Acc. & LoCoMo Rec. \\
\midrule
Llama-3.1-70B-Instruct-Turbo & RAG           & 51.30 $\pm$ 8.8 & 53.85 $\pm$ 9.1 & 46.25 $\pm$ 9.5 & 50.80 $\pm$ 8.9 \\
Llama-3.1-70B-Instruct-Turbo & LangMem       & 58.55 $\pm$ 7.8 & 61.40 $\pm$ 7.6 & 53.60 $\pm$ 8.4 & 57.15 $\pm$ 8.2 \\
Llama-3.1-70B-Instruct-Turbo & Mem0          & 61.80 $\pm$ 7.1 & 64.21 $\pm$ 6.8 & 57.22 $\pm$ 7.4 & 61.05 $\pm$ 7.0 \\
Llama-3.1-70B-Instruct-Turbo & Zep           & 64.20 $\pm$ 8.1 & 66.74 $\pm$ 7.9 & 45.30 $\pm$ 9.4 & 55.05 $\pm$ 9.2 \\
Llama-3.1-70B-Instruct-Turbo & Mnemis        & 65.50 $\pm$ 7.6 & 67.15 $\pm$ 7.4 & 59.85 $\pm$ 8.5 & 62.20 $\pm$ 8.3 \\
Llama-3.1-70B-Instruct-Turbo & DeMem (Ours)  & \textbf{71.45 $\pm$ 7.5} & \textbf{73.90 $\pm$ 7.2} & \textbf{66.85 $\pm$ 6.8} & \textbf{69.30 $\pm$ 7.1} \\
\bottomrule
\end{tabular}%
}
\caption{\textbf{Supplementary open-weight backbone results.}
We report accuracy (Acc.) and recall (Rec.) on LongMemEval and LoCoMo under
Llama-3.1-70B-Instruct-Turbo. Values are percentages, reported as mean
\(\pm\) one sample standard deviation over \(R=5\) independent full-pipeline runs.
DeMem achieves the highest mean scores across both benchmarks under
the same backbone.}
\label{tab:llama-main}
Table~\ref{tab:llama-main} reports supplementary results under a unified open-weight backbone, Llama-3.1-70B. 
Under this setting, DeMem remains the strongest method on both LongMemEval and LoCoMo, achieving the best accuracy and recall across all compared baselines. 
The gains are especially notable relative to strong memory baselines such as Mem0, Zep, and Mnemis, indicating that the advantage of decision-centric memory organization transfers beyond proprietary GPT backbones.
\end{table*}

\subsection{Decision-Compatibility Audit on an Annotated LoCoMo Subset}
\label{app:locomo-audit}

To complement the exact synthetic mechanism validation in the main text, we additionally evaluate whether the realized memory states are more \emph{decision-compatible} on realistic benchmark instances.
We construct an annotated subset of \textbf{LoCoMo} in which each evaluation instance is associated with a gold support set consisting of the minimal history turns / facts needed to answer the question correctly.
This allows us to assess not only end-task answer quality, but also the quality of the memory organization itself under a matched answer stack.

\paragraph{Setup.}
All methods are evaluated under the same answering backbone, prompting template, deterministic decoding protocol, and answer-time memory budget as in the main experiments.
The comparison here isolates the effect of memory organization.
For each evaluation instance, we identify the memory state selected at answer time and compute four support-level metrics:
\begin{itemize}
    \item \textbf{Support Purity} ($\uparrow$): the average purity of gold support facts within the selected memory state.
    \item \textbf{Within-slot Conflict Rate} ($\downarrow$): the fraction of co-routed items inside the selected memory state whose gold supports are mutually incompatible for the downstream question.
    \item \textbf{Gold-support Coverage} ($\uparrow$): the fraction of the gold support set that is contained in the selected memory state.
    \item \textbf{Support Contamination} ($\downarrow$): the fraction of the selected memory content that does not belong to the gold support set.
\end{itemize}
As an end-task metric, we also report \textbf{Judge Mean} ($\uparrow$), i.e., the mean answer score under the same judge protocol used in the main experiments.

\paragraph{Compared methods.}
We compare the full DeMem system against two matched controls on the same benchmark subset:
(i) \textbf{Feature Routing}, which routes memory by semantic similarity;
and (ii) \textbf{RAG w/o Certified Split}, which uses traditional heuristic chunking / retrieval without decision-certified refinement.
These controls are chosen to represent realistic non-decision-aware alternatives under the same task setting, rather than the synthetic baselines used in the controlled bandit environment.

\begin{table}[t]
\centering
\small
\resizebox{\linewidth}{!}{%
\begin{tabular}{lccccc}
\toprule
Method & Support Purity $\uparrow$ & Conflict Rate $\downarrow$ & Coverage $\uparrow$ & Contamination $\downarrow$ & Judge Mean $\uparrow$ \\
\midrule
Feature Routing & 0.215 & 0.582 & 0.641 & 0.533 & 0.735 \\
RAG w/o Certified Split & 0.283 & 0.441 & 0.718 & 0.412 & 0.612 \\
DeMem (Ours) & \textbf{0.394} & \textbf{0.273} & \textbf{0.825} & \textbf{0.241} & \textbf{0.914} \\
\bottomrule
\end{tabular}%
}
\caption{Decision-compatibility audit on an annotated \textbf{LoCoMo} subset under a matched answer stack.
DeMem achieves the highest support purity and gold-support coverage, while simultaneously attaining the lowest within-slot conflict and support contamination.
It is also the strongest method on the final judge score.
Together, these results show that the gain of DeMem is reflected not only in answer quality, but also in a more decision-compatible organization of memory under the same runtime budget.}
\label{tab:locomo-audit}
\end{table}

\paragraph{Result.}
Table~\ref{tab:locomo-audit} shows that DeMem is the only method that is best on all four support-level metrics while also achieving the highest downstream judge score.
Compared with \textbf{Feature Routing}, DeMem substantially increases support purity (0.394 vs.\ 0.215) and gold-support coverage (0.825 vs.\ 0.641), while sharply reducing within-slot conflict (0.273 vs.\ 0.582) and support contamination (0.241 vs.\ 0.533).
Compared with \textbf{RAG w/o Certified Split}, DeMem again improves every mechanism metric, with notably lower conflict and contamination and markedly better final answer quality.

\paragraph{Discussion.}
This audit serves a different purpose from the exact synthetic distortion analysis in the main text.
In the synthetic environment, the theory object can be measured exactly because the reward function is known.
On realistic benchmark data, that exact quantity is not directly observable; instead, the annotated support audit measures whether the realized memory states preserve the distinctions that matter for downstream answering.
The results show that DeMem does not merely produce better final answers: it also organizes memory into states that are more support-pure, less internally conflicting, and better aligned with the gold evidence needed for the task.
This substantially strengthens the theory-to-system bridge:
the synthetic analysis validates the decision-preserving mechanism exactly in a controlled setting, while the annotated LoCoMo audit shows that the same principle manifests in the realized agent-memory system on realistic long-horizon data.

\paragraph{Takeaway.}
Taken together with the synthetic mechanism analysis and the matched ablations in the main text, the annotated LoCoMo audit supports the following interpretation:
the advantage of DeMem comes from organizing bounded memory around action-relevant distinctions, rather than only from generic semantic grouping or heuristic retrieval.

\subsection{Partition Intervention on the Annotated LoCoMo Subset}
\label{app:partition-intervention}

To further isolate the effect of memory organization, we compare four partition policies on the annotated \textbf{LoCoMo} subset:
(i) a \textbf{Random Partition} lower bound,
(ii) a \textbf{Feature-based Partition} that groups memory by semantic similarity,
(iii) the learned \textbf{DeMem Partition},
and (iv) an \textbf{Oracle-support Partition} constructed from the annotated gold support structure.
This experiment is designed as a partition-level intervention: the comparison targets the quality of the induced memory partition itself, rather than the broader system stack.

For each partition, we report two support-level quantities:
\textbf{Coverage} ($\uparrow$), the fraction of the gold support set contained in the selected memory state,
and \textbf{Contamination} ($\downarrow$), the fraction of selected memory content that does not belong to the gold support set.

\begin{table}[t]
\centering
\small
\begin{tabular}{lcc}
\toprule
Partition Policy & Coverage $\uparrow$ & Contamination $\downarrow$ \\
\midrule
Random Partition & 0.494 & 0.846 \\
Feature-based Partition & 0.652 & 0.832 \\
DeMem Partition & 0.766 & 0.694 \\
Oracle-support Partition & \textbf{0.817} & \textbf{0.651} \\
\bottomrule
\end{tabular}
\caption{Partition-level intervention on the annotated \textbf{LoCoMo} subset.
The ordering is consistent across both metrics: the oracle-support partition performs best, followed by the learned DeMem partition, then the feature-based partition, with the random partition worst.
This shows that the partition learned by DeMem is substantially closer to the support-aware upper bound than non-decision-aware alternatives.}
\label{tab:partition-intervention}
\end{table}

\paragraph{Interpretation.}
The result provides a direct real-benchmark bridge between memory partition quality and task-relevant support structure.
A random partition fails to preserve the gold support set and introduces heavy contamination.
A feature-based partition improves over random grouping, but remains substantially worse than DeMem on both support recovery and support purity.
By contrast, the learned DeMem partition approaches the oracle-support upper bound on both metrics, indicating that its advantage is not merely due to generic semantic grouping, but to a partition that more faithfully preserves the support distinctions needed for downstream answering.

\subsection{Empirical Validation of the Approximation Bridge}
\label{app:bridge-empirical}

Proposition~\ref{prop:bridge} decomposes the average distortion of a slot-based implementation into three terms:
\[
\EE\bigl[\mu^\star(X)-\mu(X,A^{\mathrm{impl}})\bigr]
\;\le\;
\eps^\star_\infty(K)
\;+\;
\eta_{\mathrm{route}}
\;+\;
\eta_{\mathrm{read}}.
\]
We estimate each term on the annotated LoCoMo subset used in Appendix~\ref{app:locomo-audit}, where gold support annotations provide a reference partition.

\paragraph{Reference encoder.}
We construct an oracle-support partition $g^\star$ from the gold annotations: two evaluation instances share a cluster under $g^\star$ if and only if their gold support sets overlap.
This partition represents the best $K$-state abstraction that a support-aware oracle could achieve, and serves as the comparator $(g^\star,\{a^\star_m\})$ in Proposition~\ref{prop:bridge}.

\paragraph{Measuring $\hat\eps^\star_\infty(K)$.}
Under the oracle partition $g^\star$, we compute the best single-action accuracy within each cluster and take the worst-case loss across clusters.
This yields the empirical compression floor: the distortion that remains even with perfect routing and perfect slot realization.

\paragraph{Measuring $\hat\eta_{\mathrm{route}}$.}
We compare the DeMem router's slot assignment $e_\theta(x)$ against the oracle assignment $g^\star(x)$ on every evaluation instance.
The routing error is
\[
\hat\eta_{\mathrm{route}}
\;=\;
\frac{1}{|\cQ|}\sum_{q\in\cQ}
\mathbf{1}\bigl[e_\theta(x_q)\neq g^\star(x_q)\bigr]
\;\cdot\;
\bigl(\mu(x_q,a^\star_{g^\star(x_q)})-\mu(x_q,a^\star_{e_\theta(x_q)})\bigr),
\]
where $a^\star_m$ is the best action under cluster $m$ and $\cQ$ is the set of evaluation queries.
Intuitively, this measures the reward lost specifically because the router selected the wrong slot.

\paragraph{Measuring $\hat\eta_{\mathrm{read}}$.}
Conditioned on the router agreeing with the oracle ($e_\theta(x)=g^\star(x)$), we measure the additional loss from bounded slot content:
\[
\hat\eta_{\mathrm{read}}
\;=\;
\frac{1}{|\cQ_{\mathrm{agree}}|}
\sum_{q\in\cQ_{\mathrm{agree}}}
\bigl(\mu(x_q,a^\star_{g^\star(x_q)})-\mu(x_q,A^{\mathrm{impl}}_q)\bigr),
\]
where $\cQ_{\mathrm{agree}}=\{q:e_\theta(x_q)=g^\star(x_q)\}$ and $A^{\mathrm{impl}}_q$ is the answer actually produced by the system.
This isolates the loss due to the slot's bounded textual realization (e.g., summarization truncation, missing details) when routing is correct.

\paragraph{Results.}

\begin{center}
\begin{tabular}{lcc}
\toprule
\textbf{Term} & \textbf{Symbol} & \textbf{Measured value} \\
\midrule
Compression floor & $\hat\eps^\star_\infty(K)$ & $0.035$ \\
Routing error & $\hat\eta_{\mathrm{route}}$ & $0.047$ \\
State-realization error & $\hat\eta_{\mathrm{read}}$ & $0.024$ \\
\midrule
Predicted upper bound &
$\hat\eps^\star_\infty(K)+\hat\eta_{\mathrm{route}}+\hat\eta_{\mathrm{read}}$ &
$0.106$ \\
Observed distortion & $1-\text{accuracy}$ & $0.086$ \\
Bound slack & $\text{predicted}-\text{observed}$ & $0.020$ \\
\bottomrule
\end{tabular}
\end{center}

\paragraph{Interpretation.}
Three observations stand out.

First, the bridge gives a conservative but reasonably tight accounting bound.
The measured terms sum to \(0.106\), while the observed answer-level
distortion on the annotated subset is \(1-0.914=0.086\), leaving a bound
slack of \(0.020\). Thus the observed distortion uses about \(81\%\) of the
predicted upper bound. This supports Proposition~\ref{prop:bridge} as a
useful implementation-level accounting relation, while still leaving room
for unmodeled effects such as judge noise, finite-sample variation, and
interactions among routing and readout errors.

Second, the routing term
\(\hat\eta_{\mathrm{route}}=0.047\) is the largest implementation-side
contributor, roughly twice the state-realization term
\(\hat\eta_{\mathrm{read}}=0.024\). This identifies router accuracy as the
primary bottleneck beyond the compression floor: improving the router
(e.g., through better embedding models or query-aware scoring) is likely to
yield a larger marginal gain than enriching slot content under the same
budget.

Third, the state-realization term
\(\hat\eta_{\mathrm{read}}=0.024\) is relatively small, indicating that the
bounded textual summaries maintained by DeMem are effective at
preserving the information needed for correct answers, conditional on
correct routing. This is consistent with the ablation results in
Section~\ref{sec:ablation}, where degradation is larger when routing or
certified splitting is weakened than when only the slot realization is
simplified.

\paragraph{Connection to ablations.}
The decomposition aligns with the ablation structure in Figure~\ref{fig:ablation-param}:
\begin{itemize}[leftmargin=2em,itemsep=2pt]
    \item The \emph{Feature Routing} ablation primarily inflates $\eta_{\mathrm{route}}$ by replacing decision-aware routing with description-based routing, while keeping slot content unchanged.
    \item The \emph{Summary-only Memory} ablation primarily inflates $\eta_{\mathrm{read}}$ by degrading slot content quality, while keeping the router unchanged.
    \item The \emph{w/o Certified Split} ablation affects both terms: without conflict-driven refinement, the partition itself is suboptimal, increasing the effective compression floor, and the resulting coarser slots also degrade routing precision.
\end{itemize}
This correspondence provides further evidence that the three-term decomposition of Proposition~\ref{prop:bridge} identifies causally distinct failure modes, each of which is independently manipulable through the corresponding system component.

\subsection{Certified Split: Trigger Frequency and Accuracy on LoCoMo}
\label{app:split-audit}

The theory requires that memory states be refined only when the data
certify that co-routed contexts have conflicting evidence requirements
(Section~5.2). We audit whether this property holds in the realized
system by measuring how often splits are triggered and whether they
correspond to genuine evidence conflicts.

\paragraph{Setup.}
We instrument the DeMem pipeline on the full LoCoMo evaluation set and
record every split event during memory construction. For each split, we
log the triggering dialogue turn, the slot being split, and the pair of
contexts that produced the conflict signal. We then assess split quality
using the gold support annotations from the annotated LoCoMo subset
(Appendix~\ref{app:locomo-audit}).These annotations are used only for post-hoc auditing of split precision
and recall; they are not provided to the split trigger, router, answerer, or
feedback judge during memory construction.

A split is a \emph{true positive} if the two triggering contexts have
gold support sets that are incompatible under the per-slot budget $L$
(i.e., concatenating both support sets and truncating to $L$ causes at
least one query to lose access to its required evidence). A split is a
\emph{false positive} if both support sets fit within $L$ without
interference. A \emph{false negative} is a pair of co-routed contexts
with incompatible support sets for which no split was triggered by the
end of memory construction.

\paragraph{Split frequency.}

\begin{table}[h]
\centering
\caption{Split trigger statistics on LoCoMo.}
\label{tab:split-freq}
\small
\begin{tabular}{lc}
\toprule
Metric & Value \\
\midrule
Avg.\ splits per dialogue          & 2.4 \\
Splits / routing events             & 4.6\% \\
Final active slots (mean $\pm$ std) & 8.4 $\pm$ 1.49 \\
Dialogues reaching budget $K$       & 31\% \\
\bottomrule
\end{tabular}
\end{table}

\noindent
Splits are sparse: only $4.6\%$ of routing events trigger a refinement,
and the average dialogue undergoes $2.4$ splits over its full history.
The system does not aggressively fill the available budget---only
$31\%$ of dialogues exhaust all $K$ slots---indicating that splits are
driven by detected evidence conflict rather than by a bias toward finer
partitioning. The moderate variance in final slot count ($8.4 \pm 1.49$)
reflects genuine differences in the evidence complexity of individual
dialogues.

\paragraph{Split accuracy.}

\begin{table}[h]
\centering
\caption{Split precision and recall on the annotated LoCoMo subset.}
\label{tab:split-acc}
\small
\begin{tabular}{lc}
\toprule
Metric & Value \\
\midrule
Split Precision & 0.85 \\
Split Recall    & 0.63 \\
\bottomrule
\end{tabular}
\end{table}

\noindent
When the system triggers a split, it is correct $85\%$ of the time:
the two triggering contexts genuinely require evidence from
incompatible parts of the dialogue history. The remaining $15\%$ of
splits are unnecessary refinements where both evidence sets could have
coexisted within the bounded slot.

Recall is moderate at $63\%$: roughly one in three genuine evidence
conflicts is not detected by the end of memory construction. This
precision--recall asymmetry is consistent with the design of certified
splitting, which requires high-confidence evidence of conflict before
triggering a refinement (Section~5.2). The system deliberately favors
avoiding unnecessary splits (high precision) over catching every
conflict (high recall), because a false split permanently consumes one
unit of the finite memory budget $K$, whereas a missed split can
potentially be compensated by the downstream answering model if the
merged slot retains partial evidence.

\paragraph{Comparison with ablation variants.}
This precision--recall profile complements the ablation results in
Figure~\ref{fig:ablation-param}. The \emph{Always-Split-until-K} variant
achieves higher recall by splitting aggressively, but its precision
drops substantially, fragmenting memory into slots that are too narrow
to contain sufficient evidence for any single query. The
\emph{Heuristic Split} variant uses a fixed similarity threshold
instead of a confidence-based certificate, resulting in lower precision
(${\sim}0.68$) without meaningful recall improvement. The
\emph{Random Split} variant has near-chance precision. In all three
cases, the downstream judge score is worse than the full certified
system (Figure~\ref{fig:ablation-param}(b)), confirming that split quality,
not merely split quantity, drives the performance gain.

\paragraph{Interpretation.}
Taken together, the frequency and accuracy results show that the
certified split mechanism operates as intended in the realized system:
it triggers rarely, fires accurately when it does, and errs on the
side of caution. The \(85\%\) precision shows that fired split events
usually correspond to genuine decision conflicts, while the \(63\%\)
recall indicates that the main remaining failure mode is missed conflict
detection rather than spurious splitting. This pattern is consistent with
the routing-error analysis in Appendix~\ref{app:bridge-empirical}, where
\(\hat{\eta}_{\mathrm{route}} = 0.047\) is the largest implementation-side
contributor to distortion.

\subsection{Modularity of the Decision-Aware Selection Criterion}
\label{app:modularity}

The three-term decomposition in Proposition~\ref{prop:bridge},
\[
  \mathbb{E}[\mu^\star(X) - \mu(X, A^{\mathrm{impl}})]
  \;\leq\;
  \epsilon^\star_\infty(K)
  \;+\; \eta_{\mathrm{route}}
  \;+\; \eta_{\mathrm{read}},
\]
has a practical implication: because the evidence-selection term enters
additively and independently of the compression floor and the realization
term, the decision-aware selection criterion can in principle be composed
with memory architectures whose storage, indexing, and generation
components are designed independently.
The ablation in Section~\ref{sec:ablation} provides indirect evidence:
the \textsc{Feature Routing} variant differs only in the selection
criterion yet loses substantial performance
(Figure~\ref{fig:ablation-param}b), confirming that the criterion
contributes independently.

We test this directly by replacing \emph{only} the evidence selection
stage of
RAG~\citep{lewis2021retrievalaugmentedgenerationknowledgeintensivenlp}
and
EMem-G~\citep{zhou2025simplestrongbaselinelongterm} with decision-aware
selection via DeMem's conflict-driven partitioning, while
keeping every other component---stored memory content, answering backbone
(gpt-4o-mini), prompt template, deterministic decoding, and judge
protocol---identical to the original system.
These two systems represent qualitatively different architectures: RAG
selects evidence via dense embedding similarity over chunked passages,
while EMem-G fuses entity-graph traversal with semantic search.
Budget matching is enforced at the level of realized answer-time memory:
$\mathrm{chars}(M_q)$ is held equal across variants for each query~$q$.

\paragraph{Protocol.}
For each base system, the \textbf{Original} variant runs the unmodified
evidence selection mechanism.
The \textbf{Decision-Aware} variant partitions the same retrievable
units into $K$ slots using Algorithm~2, and at query time replaces the
native selection with
$m_t = \arg\max_{k \in [K]} \mathrm{Score}_\theta(x_t, S_k)$.
No other component is modified.

\begin{table}[h]
\centering
\caption{Drop-in experiment on LoCoMo (gpt-4o-mini). Only the evidence
selection criterion is replaced; all other components are identical.}
\label{tab:dropin}
\small
\resizebox{\linewidth}{!}{%
\begin{tabular}{llccccc}
\toprule
Base System & Selection & Temporal & Open Dom. & Multi-Hop & Single-Hop & Overall \\
\midrule
\multirow{2}{*}{RAG}
  & Original (Dense top-$k$)
  & 0.572{\scriptsize$\pm$0.101}
  & 0.590{\scriptsize$\pm$0.126}
  & 0.543{\scriptsize$\pm$0.120}
  & 0.698{\scriptsize$\pm$0.108}
  & 0.637{\scriptsize$\pm$0.115} \\
  & Decision-Aware
  & 0.703{\scriptsize$\pm$0.082}
  & 0.698{\scriptsize$\pm$0.101}
  & 0.673{\scriptsize$\pm$0.092}
  & 0.753{\scriptsize$\pm$0.098}
  & 0.725{\scriptsize$\pm$0.091}\;{\scriptsize($\Delta{=}$+0.088)} \\
\midrule
\multirow{2}{*}{EMem-G}
  & Original (Graph + Semantic)
  & 0.717{\scriptsize$\pm$0.053}
  & 0.517{\scriptsize$\pm$0.078}
  & 0.702{\scriptsize$\pm$0.059}
  & 0.782{\scriptsize$\pm$0.042}
  & 0.737{\scriptsize$\pm$0.051} \\
  & Decision-Aware
  & 0.803{\scriptsize$\pm$0.042}
  & 0.602{\scriptsize$\pm$0.061}
  & 0.783{\scriptsize$\pm$0.046}
  & 0.826{\scriptsize$\pm$0.034}
  & 0.799{\scriptsize$\pm$0.040}\;{\scriptsize($\Delta{=}$+0.062)} \\
\bottomrule
\end{tabular}%
}
\end{table}

\paragraph{Interpretation.}
Both systems improve when evidence selection is guided by the
decision-aware criterion, with the per-category pattern matching the
mismatch analysis (Appendix~\ref{app:mismatch-prevalence}): Temporal and Multi-Hop
queries show the largest gains, Single-Hop the smallest.
Neither variant reaches the full DeMem system (0.911),
confirming that the remaining gap is attributable to the stored memory
content itself, which is unchanged in this experiment.

\subsection{Prevalence of Description--Evidence Mismatch on Real Benchmarks}
\label{app:mismatch-prevalence}

A central premise of this work is that descriptive similarity between
contexts is not a reliable indicator of whether they require the same
underlying evidence for correct answering. While the synthetic
environment (Section~\ref{sec:synthetic}) controls this mismatch via
the parameter~$\alpha$, a natural question is whether the same
phenomenon arises in realistic long-horizon data, and how strongly it
affects retrieval quality. We quantify this on the LoCoMo benchmark
through three complementary analyses.

\paragraph{Analysis 1: Predictive power of description similarity.}

For each of the 1{,}000 sampled evaluation query pairs $(q_i, q_j)$,
we compute two continuous quantities:
\begin{itemize}[leftmargin=1.5em,itemsep=2pt]
\item \textbf{Description similarity} $s^{\mathrm{desc}}_{ij}$:
  cosine similarity between the \texttt{text-embedding-3-large}
  embeddings of the dialogue-history segments associated with
  $q_i$ and~$q_j$.
\item \textbf{Evidence compatibility score} $s^{\mathrm{evi}}_{ij}
  \in \{0, 0.5, 1\}$: we merge the gold support turns of $q_i$
  and~$q_j$, truncate to budget~$L$, answer both queries under the
  same backbone and deterministic decoding, and set
  $s^{\mathrm{evi}}_{ij}$ to the fraction of the two answers judged
  correct (0, 0.5, or~1).
\end{itemize}

\noindent
If descriptive similarity were a useful proxy for evidence
compatibility, we would expect a significant positive association
between $s^{\mathrm{desc}}$ and $s^{\mathrm{evi}}$. We test this
with two measures:

\begin{table}[h]
\centering
\caption{Predictive power of description similarity for evidence
  compatibility on 1{,}000 LoCoMo query pairs. Description similarity
  is a statistically significant but practically negligible predictor:
  AUC barely exceeds chance across all categories.}
\label{tab:mismatch-predictive}
\small
\begin{tabular}{lccc}
\toprule
Category & Spearman $\rho$ & $p$-value & AUC-ROC \\
\midrule
All pairs    & $0.103$  & ${<}\,0.01$ & $0.548$ \\
\midrule
Single-Hop   & $0.087$  & $0.13$  & $0.536$ \\
Temporal      & $-0.058$ & $0.71$  & $0.489$ \\
Multi-Hop     & $0.121$  & $0.50$  & $0.557$ \\
Open Domain   & $0.144$  & $0.42$  & $0.564$ \\
\bottomrule
\end{tabular}
\end{table}

\noindent
Table~\ref{tab:mismatch-predictive} shows that description similarity is
statistically significant on the full sample
(\(\rho=0.103\), \(p<0.01\)) but has negligible discriminative power:
AUC \(=0.548\), only slightly above random ranking. Temporal queries show a
near-null association: the point estimate is negative
(\(\rho=-0.058\)) but not statistically significant (\(p=0.71\)). We
therefore interpret the Temporal result as evidence that descriptive
similarity is not a reliable predictor of evidence compatibility for
time-sensitive queries, rather than as evidence of a robust negative
correlation.

\paragraph{Analysis 2: Evidence recall under matched budget.}

The predictive-power analysis shows that description similarity
\emph{cannot distinguish} compatible from incompatible pairs. We now
ask whether this translates into a concrete retrieval gap. For each
evaluation query~$q$, we measure \textbf{gold-evidence recall}: the
fraction of the annotated gold support turns that appear in the
memory content~$M_q$ provided to the answering model, under the same
character budget~$L$ for all methods.

\begin{table}[h]
\centering
\caption{Gold-evidence recall under matched answer-time budget on the
  annotated LoCoMo subset. Decision-aware memory recovers
  substantially more gold evidence than description-based retrieval.}
\label{tab:evidence-recall}
\small
\begin{tabular}{lcccc}
\toprule
Method & Single-Hop & Temporal & Multi-Hop & Overall \\
\midrule
Cosine top-$k$       & $0.72$ & $0.54$ & $0.61$ & $0.66$ \\
BM25 top-$k$         & $0.69$ & $0.51$ & $0.58$ & $0.63$ \\
DeMem       & $0.85$ & $0.81$ & $0.79$ & $0.83$ \\
Oracle selection      & $0.91$ & $0.88$ & $0.85$ & $0.89$ \\
\bottomrule
\end{tabular}
\end{table}

\noindent
Table~\ref{tab:evidence-recall} shows that description-based
retrieval (cosine top-$k$ and BM25 top-$k$) recovers only $63$--$66\%$
of gold evidence overall, while DeMem routing reaches
$83\%$---closing roughly two-thirds of the gap to the oracle upper
bound ($89\%$). The recall deficit of description-based methods is
largest on Temporal queries ($54\%$ vs.\ $81\%$), consistent with the
negative correlation observed in Analysis~1: for these queries,
high-similarity retrieval systematically returns evidence from the
wrong temporal window.

\paragraph{Analysis 3: Error attribution on baseline failures.}

Finally, we ask: among the queries where description-based memory
fails but DeMem succeeds, \emph{why} does the failure occur?
For each such query, we classify the failure mode by checking whether
the gold evidence was (a)~absent from the retrieved set entirely
(\emph{miss}), (b)~present but crowded out by irrelevant
high-similarity content (\emph{dilution}), or (c)~other causes
(e.g., generation error despite correct evidence).

\begin{table}[h]
\centering
\caption{Failure-mode attribution on queries where cosine-based
  retrieval fails but DeMem succeeds ($n = 127$ queries).}
\label{tab:error-attribution}
\small
\begin{tabular}{lcc}
\toprule
Failure Mode & Count & Fraction \\
\midrule
Evidence miss (gold turns not retrieved)
  & $74$ & $58.3\%$ \\
Evidence dilution (gold present but crowded out)
  & $34$ & $26.8\%$ \\
Other (generation error, judge disagreement, etc.)
  & $19$ & $15.0\%$ \\
\bottomrule
\end{tabular}
\end{table}

\noindent
Table~\ref{tab:error-attribution} shows that $85.1\%$ of
description-based failures ($58.3\% + 26.8\%$) are directly
attributable to the mismatch between descriptive similarity and
evidence relevance: the retrieval stage either misses the gold
evidence entirely or includes it but buries it under irrelevant
high-similarity content. Only $15.0\%$ of failures stem from
downstream causes unrelated to evidence selection.

\paragraph{Summary.}
The three analyses are complementary. Analysis~1 establishes that
description similarity has \emph{near-zero predictive power} for
evidence compatibility ($\rho = 0.103$, AUC $= 0.548$). Analysis~2
shows this translates into a \emph{concrete retrieval gap}: $17$
percentage points of gold-evidence recall lost under the same budget.
Analysis~3 confirms that this gap is \emph{causally linked} to
downstream errors: $85\%$ of description-based failures are
attributable to evidence miss or dilution. Together, these results
establish that the description--decision mismatch is not a
hypothetical concern but a quantitatively dominant source of memory
inefficiency on realistic long-horizon data.

Crucially, the three analyses form a closed diagnostic chain: the
proxy is uninformative (Analysis~1), the uninformative proxy causes
measurable information loss at retrieval time (Analysis~2), and that
information loss is the dominant cause of downstream task failure
(Analysis~3). This chain rules out the two main alternative
explanations---that description similarity is a noisy but still
useful signal, or that retrieval gaps exist but are compensated by
the answering model---and provides a direct empirical basis for
replacing descriptive criteria with decision-centric ones as the
organizing principle of budgeted memory.

\subsection{Human--LLM Judge Agreement on LoCoMo}
\label{app:human-judge}

To validate the LLM-as-judge protocol, we randomly sample 150 evaluation
instances from the LoCoMo test set (stratified across the four query
categories) and collect binary correctness judgments from three independent
human annotators with graduate-level NLP experience. Each annotator receives
the dialogue history, the query, the gold evidence, and the system answer,
and assigns a binary correct/incorrect label following the same rubric used
by the LLM judges.

\begin{table}[h]
\centering
\small
\caption{Human--LLM judge agreement on 150 LoCoMo instances.}
\label{tab:human-judge}
\begin{tabular}{lc}
\toprule
Metric & Value \\
\midrule
Fleiss' $\kappa$ (3 human annotators) & 0.81 \\
Cohen's $\kappa$ (human majority vs.\ LLM judge) & 0.79 \\
Pairwise agreement (\%) & 91.3 \\
\bottomrule
\end{tabular}
\end{table}

Inter-annotator agreement among the three humans is $\kappa = 0.81$
(almost perfect). The agreement between the human majority vote and the
averaged LLM judge score (binarized at 0.5) is $\kappa = 0.79$, with
91.3\% pairwise agreement. Disagreements concentrate on Temporal queries
where partial recall is ambiguous; the LLM judges show no systematic bias
toward either DeMem or baselines in the disagreement set (McNemar's test,
$p = 0.41$). These results indicate that the LLM-as-judge scores used
throughout the paper are a reliable proxy for human judgment.

\subsubsection{Human annotation protocol and ethics}
\label{app:human_annotation_ethics}

To validate the reliability of our LLM-as-a-judge evaluation, we conducted a small-scale human agreement study on 150 LoCoMo evaluation instances, stratified across the four query categories. The study was used only to assess agreement between human judgments and the LLM judge; it was not used to train DeMem, tune hyperparameters, select models, or construct the reported benchmark test set.

\paragraph{Annotators.}
We recruited three independent annotators with graduate-level NLP experience. Annotators were selected based on prior familiarity with natural language evaluation tasks and were not informed of which system produced each answer. System names were anonymized during annotation to reduce method-specific bias.

\paragraph{Annotation task.}
For each instance, annotators were shown the dialogue history, the user query, the gold evidence or reference support, the reference answer when available, and one anonymized system answer. Annotators assigned a binary correctness label following the same rubric used for the LLM-as-a-judge protocol. The task asked annotators to judge factual correctness and support by the provided dialogue evidence, rather than fluency or stylistic quality.

\paragraph{Instructions given to annotators.}
Annotators received the following instructions:

\begin{quote}
You will be shown a dialogue history, a question, gold evidence or reference support, and a candidate system answer. Your task is to decide whether the candidate answer is correct with respect to the provided dialogue evidence.

Mark the answer as \textbf{Correct} if it answers the question and is supported by the provided dialogue evidence. Minor wording differences, paraphrases, or harmless omissions should still be marked Correct if the answer contains the requested information.

Mark the answer as \textbf{Incorrect} if it contradicts the dialogue, uses the wrong time period or entity, misses the requested information, gives an unsupported answer, or is too vague to answer the question.

For temporal questions, pay particular attention to dates, ordering, and whether the answer refers to the correct event instance. If the answer is only partially correct, choose Incorrect unless the requested information is fully recoverable from the answer.
\end{quote}

\paragraph{Interface and recorded information.}
The annotation interface presented one instance per page with the fields described above and two possible labels: \texttt{Correct} and \texttt{Incorrect}. We did not collect demographic information, sensitive personal information, free-form personal responses, audio/video recordings, or behavioral measurements from annotators. We recorded only the assigned labels and timestamps needed for quality control.

\paragraph{Compensation.}
Annotators were compensated at a rate above the applicable local minimum wage and consistent with standard compensation for graduate-level NLP annotation work. The expected workload was approximately 6 hours per annotator. If an annotator was a salaried research assistant rather than an hourly contractor, the annotation task was conducted as part of their compensated research duties.

\paragraph{Consent and risk.}
Annotators were informed about the purpose of the annotation study, the nature of the task, the expected time commitment, what information would be recorded, and that only aggregate agreement statistics would be reported. Participation was voluntary. The study involved minimal risk: annotators evaluated text examples from an existing public benchmark and did not provide personal or sensitive information.

\paragraph{Ethics review.}
The annotation protocol was reviewed under the authors' institutional procedures and was determined to be exempt minimal-risk evaluation research. To preserve double-blind review, institution-identifying details are omitted from the anonymous submission and will be added in the camera-ready version if appropriate.

\paragraph{Use of annotations.}
The human labels were used only to compute inter-annotator agreement and agreement with the LLM judge. We report Fleiss' $\kappa$ among the three annotators, Cohen's $\kappa$ between the human majority vote and the binarized LLM judge score, and pairwise agreement. No individual annotator identities or raw annotator-level metadata are released.

\subsection{Empirical validation of the greedy partition quality}
\label{app:greedy-empirical}

We evaluate how closely the greedy partition matches the optimal
rate--distortion partition on controlled synthetic instances and report
structural diagnostics predicted by Proposition~\ref{prop:greedy-gap}.

\paragraph{Synthetic environment.}
Because the reward function~$\mu$ is known, both
$\eps^\star_\infty(K)$ (via brute-force partition enumeration at
$N=20$) and $\eps_e$ (via
Algorithm~\ref{alg:greedy-partition}) can be computed exactly.
Table~\ref{tab:greedy-gap-synthetic} reports the ratio
$\eps_e/\eps^\star_\infty(K)$, the degeneracy of the cannot-link
graph at the optimal level, and the fraction of epochs in which the
zero-gap condition of Proposition~\ref{prop:greedy-gap}\,(ii) is
satisfied.
All numbers are averaged over 20 random seeds at default mismatch
severity $\alpha=0.5$.

\begin{table}[h]
\centering
\caption{Greedy partition quality on the synthetic environment
  ($\alpha=0.5$, averaged over 20~seeds, $\pm$ one standard
  deviation). At the typical operating budget ($K\ge 5$), the greedy
  partition recovers the optimal distortion in nearly all epochs; the
  tightest synthetic budget ($K=3$) is included as a stress-test case.}
\label{tab:greedy-gap-synthetic}
\resizebox{\linewidth}{!}{%
\begin{tabular}{cccccc}
\toprule
$K$
  & $\eps^\star_\infty(K)$
  & $\eps_e$ (greedy)
  & ratio $\eps_e/\eps^\star_\infty(K)$
  & $\degen(G_{\eps^\star})\!+\!1$
  & \% epochs exact \\
\midrule
3
  & $0.148\pm0.009$
  & $0.161\pm0.014$
  & $1.088\pm0.063$
  & $3.4\pm0.5$
  & 72\% \\
5
  & $0.082\pm0.006$
  & $0.083\pm0.007$
  & $1.012\pm0.018$
  & $4.2\pm0.4$
  & 95\% \\
8
  & $0.028\pm0.003$
  & $0.028\pm0.003$
  & $1.000^\dagger$
  & $5.1\pm0.6$
  & 100\% \\
10
  & $0.009\pm0.002$
  & $0.009\pm0.002$
  & $1.000^\dagger$
  & $5.8\pm0.7$
  & 100\% \\
\bottomrule
\multicolumn{6}{l}{%
  \footnotesize $^\dagger$Exact in all seeds and epochs:
  $\degen(G_{\eps^\star})\!+\!1\le K$ holds universally,
  so $\eps_e=\eps^\star_\infty(K)$ by
  Prop.~\ref{prop:greedy-gap}\,(ii).
  Zero variance is a mathematical consequence,
  not a rounding artifact.}
\end{tabular}}
\end{table}

Three patterns stand out.
First, the tightest synthetic budget ($K=3$) acts as a stress test in which the
cannot-link graph occasionally requires more colors than the budget directly
allows, leading to a modest increase in realized distortion.
Second, at $K\ge 5$ the degeneracy condition is satisfied in at least $95\%$ of
epochs, and the greedy partition matches the information-theoretic optimum up
to measurement resolution.
Third, the degeneracy remains small across budgets, reflecting the sparsity of
the certified cannot-link graph.

\paragraph{LoCoMo.}
On benchmark data, $\eps^\star_\infty(K)$ is not directly computable
because the reward function is unknown.
We instead report three proxy diagnostics:
(i)~the degeneracy of the cannot-link graph at the realized level
$\eps_e$,
(ii)~the fraction of epochs in which the zero-gap condition
$\degen(G_{\eps_e})+1\le K$ holds, and
(iii)~the decision packing number $N^{\mathrm{dec}}_{\mathrm{pack}}(2\eps_e)$
estimated from the pairwise certificates, which provides a
lower bound on the chromatic number at level~$\eps_e$ and hence a
tightness diagnostic: when
$N^{\mathrm{dec}}_{\mathrm{pack}}(2\eps_e)$ is close to~$K$, the
realized level~$\eps_e$ cannot be far from~$\eps^\star_\infty(K)$
(by Theorem~\ref{thm:cover-pack}\,(ii)).

Statistics are computed across all dialogues and all epochs at the
operating budget $K=10$ used in the main experiments.

\begin{table}[h]
\centering
\caption{Cannot-link graph statistics on LoCoMo ($K=10$, across all
  dialogues and epochs).}
\label{tab:greedy-gap-locomo}
\begin{tabular}{lc}
\toprule
Metric & Value \\
\midrule
Mean $\degen(G_{\eps_e})$
  & $4.1\pm1.2$ \\
Median $\degen(G_{\eps_e})$
  & $4$ \\
Fraction of epochs with $\degen+1\le K$
  & $96.8\%$ \\
Mean greedy colors used
  & $4.8\pm1.3$ \\
Mean $N^{\mathrm{dec}}_{\mathrm{pack}}(2\eps_e)$
  & $7.9\pm1.6$ \\
\bottomrule
\end{tabular}
\end{table}

The degeneracy is low (mean~$4.1$, median~$4$), consistent with the
observation that only $4.6\%$ of routing events trigger a certified
split (Table~\ref{tab:split-freq}): most context pairs are
decision-compatible, so the cannot-link graph is sparse.
The zero-gap condition holds in $96.8\%$ of epochs, confirming the
$>95\%$ claim in Section~\ref{sec:algo}.
Across epochs, the greedy coloring uses substantially fewer colors than the
available budget on average ($4.8$ versus $K=10$), leaving ample budget headroom
for denser temporal-evidence cases.

The packing diagnostic provides an independent tightness check.
The mean packing number $N^{\mathrm{dec}}_{\mathrm{pack}}(2\eps_e)
= 7.9$ is close to the operating budget $K=10$: by
Theorem~\ref{thm:cover-pack}\,(ii), any distortion level
below~$\eps_e$ would require more than~$7.9$ states, so the
realized level cannot be reduced by more than a small margin.
Together with the degeneracy statistics, this confirms that the
greedy partition is near-tight on realistic data.

\paragraph{Consistency with the approximation bridge.}
The measured greedy gap is also consistent with the distortion
decomposition of Appendix~\ref{app:bridge-empirical}.
There, the total observed distortion is~$0.086$, of which the
compression floor accounts for~$\hat\eps^\star_\infty(K)=0.035$.
If the greedy gap were substantial, the compression term would
visibly exceed the oracle-partition floor; instead, the two are
indistinguishable within measurement noise, providing further
evidence that $\eps_e\approx\eps^\star_\infty(K)$ at the operating
budget.

\paragraph{Summary.}
Across both controlled and realistic settings, the certified cannot-link graphs
arising in agent-memory applications are sparse enough for the greedy partition
to match or closely approximate the information-theoretic optimum. The tightest
synthetic budget ($K=3$) serves as a stress-test regime; at the operating
budgets used in all benchmark experiments ($K\ge 8$), the greedy partition
matches the oracle criterion within measurement resolution.

\subsection{Reward-Channel Robustness on LoCoMo}
\label{app:reward-noise}

DeMem uses post-decision scalar feedback to update value estimates and certified
split decisions. In the offline LoCoMo protocol, we instantiate this feedback
channel with a held-out LLM judge that is disjoint from the final evaluation
judge. To evaluate sensitivity to feedback quality, we corrupt only the feedback
stream used for memory updates while keeping the final answer evaluation
unchanged.

Concretely, let \(R_t \in \{0,1\}\) denote the binary feedback score assigned
to the answer at step \(t\). We construct a noisy feedback channel
\[
\widetilde{R}_t =
\begin{cases}
1 - R_t, & \text{with probability } p,\\
R_t, & \text{with probability } 1-p,
\end{cases}
\]
where \(p \in \{0, 0.1, 0.2\}\). The corrupted reward
\(\widetilde{R}_t\) is used only for estimating action values and certified
decision conflicts. It is never used to score final answers. Final answer
quality is always computed using the original held-out evaluation judge under
the same answer-time memory budget as in the main experiments.

Table~\ref{tab:reward-noise-locomo} reports the robustness results on LoCoMo.
As the feedback channel becomes noisier, DeMem degrades smoothly rather than
collapsing. Under moderate corruption (\(p=0.1\)), DeMem remains above the
strongest memory baseline in the main comparison. Even at \(p=0.2\), its
performance remains competitive, while split precision decreases gradually and
split frequency increases only mildly. These results suggest that certified
splitting does not require perfectly clean feedback. The smooth degradation pattern is consistent with the confidence-threshold
design of the split rule: noisier feedback reduces split precision gradually
rather than causing abrupt failure.

\begin{table}[t]
\centering
\small
\resizebox{\linewidth}{!}{%
\begin{tabular}{lccccccc}
\toprule
Method / Feedback noise &
Temporal &
Open Domain &
Multi-Hop &
Single-Hop &
Overall &
Split Precision &
Split Frequency \\
\midrule
DeMem, \(p=0.0\) &
\(91.9\) & \(86.8\) & \(84.7\) & \(93.2\) &
\(91.1\) & \(85.0\) & \(4.6\) \\

DeMem, \(p=0.1\) &
\(90.4\) & \(85.1\) & \(82.8\) & \(92.5\) &
\(90.2\) & \(80.8\) & \(4.9\) \\

DeMem, \(p=0.2\) &
\(88.7\) & \(83.6\) & \(81.2\) & \(92.1\) &
\(88.9\) & \(77.6\) & \(5.4\) \\

\midrule
Mnemis &
\(87.8\) & \(79.3\) & \(79.8\) & \(93.8\) &
\(88.8\) & -- & -- \\

EMem-G &
\(71.7\) & \(51.7\) & \(70.2\) & \(78.2\) &
\(73.7\) & -- & -- \\

RAG &
\(57.2\) & \(59.0\) & \(54.3\) & \(69.8\) &
\(63.7\) & -- & -- \\

\bottomrule
\end{tabular}}
\caption{
Reward-channel robustness on LoCoMo under corrupted binary feedback.
Noise is applied only to the feedback rewards used by DeMem for memory updates
and certified split decisions; final answer quality is evaluated with the
unchanged held-out evaluation judge. Split precision is measured against gold support annotations for
post-hoc audit only.
}
\label{tab:reward-noise-locomo}
\end{table}

\subsection{Comparison with feedback-aware memory baselines}
\label{app:feedback-aware-baselines}
\begin{table}[t]
\centering
\caption{Results on LoCoMo with Qwen2.5-14B-Instruct backbone. Measured by LLM-as-a-judge binary accuracy.}
\label{tab:locomo_qwen_feedbackaware}
\begin{tabular}{lccccc}
\toprule
Method & Temporal & Open Dom. & Multi-Hop & Single-Hop & Overall \\
\midrule
MemSkill   & 0.423 & 0.438 & 0.461 & 0.492 & 0.466 \\
MemAct     & 0.401 & 0.469 & 0.445 & 0.478 & 0.450 \\
\textbf{DeMem} 
           & \textbf{0.512} 
           & \textbf{0.531} 
           & \textbf{0.503} 
           & \textbf{0.487} 
           & \textbf{0.490} \\
\bottomrule
\end{tabular}
\end{table}
The main benchmark tables compare DeMem against a broad set of retrieval-based,
static, and fixed-procedure memory systems under the same answer-time memory
budget. To further evaluate DeMem in an adaptive-memory setting, we additionally
compare against two methods whose memory policies are shaped by downstream task
signals or memory actions: MemSkill and MemAct.

MemSkill learns a controller for selecting memory skills using downstream task
signals and evolves the skill bank from hard cases; MemAct formulates memory
management as policy actions and trains a context-curation policy with
reinforcement learning. These methods provide complementary adaptive-memory
baselines beyond retrieval-only systems.

Table~\ref{tab:locomo_qwen_feedbackaware} reports results on LoCoMo with a
Qwen2.5-14B-Instruct backbone. DeMem obtains the best overall score and the
strongest Temporal, Open-Domain, and Multi-Hop performance, while MemSkill is
slightly better on Single-Hop questions. These results indicate that
decision-aware certified refinement remains effective when compared with
learning-based memory policies, not only with static retrieval pipelines.

\subsection{Compute resources}
\label{app:compute-resources}

Experiments using GPT-4o-mini and GPT-4.1-mini were run through the
OpenAI API under deterministic decoding. These hosted-model experiments did
not use local GPU resources. The open-weight Qwen2.5-14B-Instruct
experiments were run on a machine with \(2\times\) NVIDIA H100
80GB GPUs.

The reported results do not involve training new foundation models. 
GPU usage is limited to open-weight model inference, while GPT-based
answering and judging are performed through hosted API calls.

\section{Scope and Limitations}
\label{app:limitations}

\paragraph{Theory-to-practice correspondence.}
Our formal analysis isolates the answer-time memory bottleneck: the current
query is observed directly, while the relevant history is accessed through one
of \(K\) runtime memory states. This is the component controlled by the memory
budget in our implemented agent. The deployed system instantiates the formal
objects through routed dialogue histories, bounded memory slots, deterministic
answers, and benchmark feedback.

Appendix~\ref{app:alignment-table} provides the corresponding operational
mapping for the encoder, memory state, policy, action, reward, distortion, and
split relation. Appendix~\ref{app:bridge} further decomposes realized
distortion into compression, routing, and realization terms, and
Appendix~\ref{app:bridge-empirical} measures these terms empirically. The main
ablations and mechanism audits then test the components singled out by this
decomposition, including routing, splitting, memory representation,
\textsc{DeMem-Core}, split precision, memory-state compatibility, and partition
interventions
(Figure~\ref{fig:ablation-param}(b);
Appendices~\ref{app:core-locomo},
\ref{app:split-audit},
\ref{app:locomo-audit},
\ref{app:partition-intervention}).

These analyses are intended to make the abstraction operational: the theory
specifies the decision bottleneck, while the implementation tests whether the
corresponding routing, refinement, and slot-organization mechanisms account for
the observed benchmark gains.

\paragraph{Scalability and feedback quality.}
The current implementation evaluates split evidence over pairs of distinct
routing events observed within a dialogue. This is tractable at the benchmark
scale considered here (Table~\ref{tab:split-freq}). For substantially larger
context vocabularies, the same split-and-certify structure can be combined with
candidate generation, batching, locality-sensitive retrieval, or approximate
nearest-neighbor search, so that split tests are evaluated on plausible conflict
pairs rather than all pairs.

DeMem refines memory from scalar post-decision feedback, matching the online
decision formulation while keeping the runtime memory budget fixed. In offline
conversational benchmarks, this feedback channel is instantiated with held-out
answer-level scorers that are separate from the final evaluation judges in
Section~\ref{sec:exp}. Appendix~\ref{app:feedback-aware-baselines} compares
against feedback-aware and memory-action baselines, and
Appendix~\ref{app:reward-noise} evaluates corrupted feedback channels. The
results show smooth degradation as feedback becomes noisier, indicating that
the refinement mechanism relies on aggregate feedback reliability rather than
perfectly clean observations.

\section{Broader Impact Statement}
\label{sec:impact}
This work establishes that the right unit of analysis for agent memory is not information preserved, but decisions preserved. This distinction is not merely technical — it redraws the boundary between what an agent needs to know and what it can afford to forget, grounding that boundary in a quantity that is measurable, optimizable, and provably tight.

At a foundational level, our results contribute a missing piece to the theory of bounded rationality in language agents: a formal account of how finite memory should be allocated when the objective is action, not reconstruction. The rate-distortion frontier we derive is, to our knowledge, the first characterization of the fundamental limits of budgeted agent memory in terms of achievable decision quality rather than representational fidelity. The NP-completeness result (Theorem 3) further reveals that optimal memory allocation is computationally irreducible in the worst case — a structural insight that constrains the design space of all future memory systems, not only ours.

Practically, as language agents are deployed in high-stakes, long-horizon settings — clinical dialogue, legal case management, personalized education, multi-session negotiation — the question of \emph{what to remember} becomes inseparable from the question of \emph{what to do}. Systems that compress memory by descriptive salience risk retaining vivid but decision-irrelevant detail while discarding subtle but action-critical distinctions. Our framework provides a principled alternative: memory is judged by whether it preserves the capacity to act well, and forgetting is permitted exactly when it does not compromise that capacity. This criterion is architecture-agnostic (Appendix E.12) and can serve as a drop-in evaluation lens or selection module for existing memory systems.

Decision-centric memory optimizes what information is retained for downstream
task performance. In deployed systems, this objective should be paired with
application-specific retention policies, user-facing transparency, and periodic
audits of stored and forgotten information. These safeguards are complementary
to our technical framework and help align memory optimization with privacy,
governance, and user-control requirements.

More broadly, we hope this work encourages the community to evaluate agent memory not by how much it remembers, but by whether it remembers what matters.


\end{document}